\theoremstyle{plain}
\newtheorem{theorem}{Theorem}[section]
\newtheorem{corollary}[theorem]{Corollary}
\theoremstyle{definition}
\newtheorem{observation}[theorem]{Observation}
\theoremstyle{remark}
\title[Extended Graph Filtration Learning]{GEFL: Extended Filtration Learning for Graph Classification}
\author[Zhang et al.]{%
Simon Zhang\\
\institute{Purdue University}\\
\email{zhan4125@purdue.edu}
\And
Soham Mukherjee\\
\institute{Purdue University}\\
\email{mukher26@purdue.edu}
\And
Tamal K. Dey\\
\institute{Purdue University}\\
\email{tamaldey@purdue.edu}
}
\begin{document}

\maketitle

\begin{abstract}
Extended persistence is a technique from topological data analysis to obtain global multiscale topological information from a graph. This includes information about connected components and cycles that are captured by the so-called persistence barcodes. We introduce extended persistence into a supervised learning framework for graph classification. Global topological information, in the form of a barcode with four different types of bars and their explicit cycle representatives, is combined into the model by the readout function which is computed by extended persistence. The entire model is end-to-end differentiable. We use a link-cut tree data structure and parallelism to lower the complexity of computing extended persistence, obtaining a speedup of more than 60x over the state-of-the-art for extended persistence computation. This makes extended persistence feasible for machine learning. We show that, under certain conditions, extended persistence surpasses both the WL[1] graph isomorphism test and 0-dimensional barcodes in terms of expressivity because it adds more global (topological) information. In particular, arbitrarily long cycles can be represented, which is difficult for finite receptive field message passing graph neural networks. Furthermore, we show the effectiveness of our method on real world datasets compared to many existing recent graph representation learning methods.\footnote{ \url{https://github.com/simonzhang00/GraphExtendedFiltrationLearning}}
\end{abstract}
\section{Introduction}
Graph classification is an important task in machine learning. Applications range from classifying social networks to chemical compounds. These applications require global as well as local topological information of a graph to achieve high performance. Message passing graph neural networks (GNNs) are an effective and popular method to achieve this task.

These existing methods crucially lack quantifiable information about the relative prominence of cycles and connected component to make predictions. Extended persistence is an unsupervised technique from topological data analysis that provides this information through a generalization of hierarchical clustering on graphs. It obtains both 1- and 0-dimensional multiscale global homological information. 

Existing end-to-end filtration learning methods~\cite{hofer2019learning,horn2021topological} that use persistent homology do not compute extended persistence because of its high computational cost at scale. A general matrix reduction approach \cite{cohen2009extending} has time complexity of $O((n+m)^{\omega})$ for graphs with $n$ nodes and $m$ edges where $\omega$ is
the exponent for matrix multiplication. We address this by improving upon the work of \cite{yan2021link} and introducing a link-cut tree data structure and a parallelism for computation. This allows for $O(\log n)$ update and query operations on a spanning forest with $n$ nodes. 

\begin{figure*}[ht]
\includegraphics[width=1.0\columnwidth]{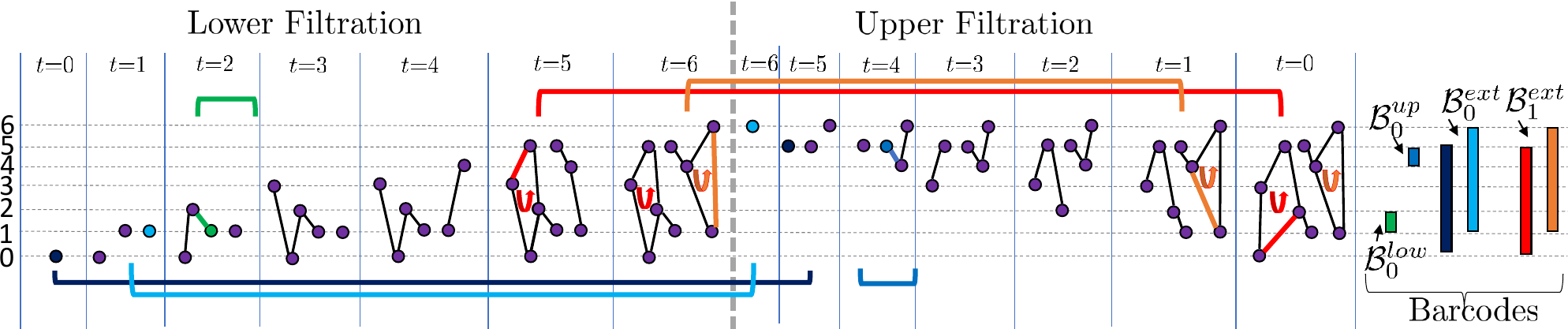}
\centering
		\caption{Lower and upper filtrations for extended persistence and the resulting barcode for a graph. The green bar comes from a pairing of a green edge with a vertex in the lower filtration. Similarily the blue bar in the upper filtration comes from a vertex-edge pairing in the upper filtration. The two dark blue bars count connected components and come from pairs of two vertices. The two red bars count cycles and come from pairs of edges. Both $\mathcal{B}_0^{ext}$ and $\mathcal{B}_1^{ext}$ bars cross from the lower filtration to the upper filtration. The multiset of bars forms the barcode. Cycle reps. are shown in both filtrations.}
		\label{fig: extended_persistence}
		\centering
	\end{figure*}
%It was shown in \cite{horn2021topological} that persistent homology is at least as expressive as WL[1].
We consider the expressiveness of our model in terms of extended persistence barcodes and the cycle representatives. We characterize the barcodes in terms of size, what they measure, and their expressivity in comparison to WL[1] \cite{horn2021topological}. We show that it is possible to find a filtration where one of its cycle's length can be measured as well as a filtration where the size of each connected component can be measured. We also consider the case of barcodes when no learning of the filtration occurs. We consider several simple examples where our model can perfectly distinguish two classes of graphs that no GNN with expressivity at most that of WL[1] (henceforth called WL[1] bounded GNN) can. Furthermore, we present a case where experimentally 0-dimensional standard persistence \cite{hofer2020graph,horn2021topological}, the only kind of persistence considered in learning persistence so far, are insufficient for graph classification. %the case of outerplanar graphs, a common phenomenon in genomics and an electronic circuit layout with useful properties. 

Our contributions are as follows:

1. We introduce extended persistence and its cycle representatives into the supervised learning framework in an end-to-end differentiable manner, for graph classification.

2. For a graph with $m$ edges and $n$ vertices, we introduce the link-cut tree data structure into the computation of extended persistence, resulting in an $O(m\log n)$ depth and $O(m n)$ work parallel algorithm, achieving more than 60x speedup over the state-of-the-art for extended persistence computation, making extended persistence amenable for machine learning tasks. 

3. We analyze conditions and examples upon which extended persistence can surpass the WL[1] graph isomorphism test~\cite{weisfeiler1968reduction} and 0-dimensional standard persistence and characterize what extended persistence can measure from additional topological information.

4. We perform experiments to demonstrate the feasibility of our approach against standard baseline models and datasets as well as an ablation study on the readout function for a learned filtration.

\section{Background}
\subsection{Computational Topology for Graphs}
{Let $G=(V, E)$ be a graph where $V$ is the set of vertices and $E \subset V \times V$ is the set of edges.}
%\sout{$E\subset V \times V$ as a set of vertices $V$ along with a set of edges $E$.} 
{Let $n=|V|$ and $m=|E|$ be the number of nodes and edges of $G$, respectively.} Graphs in our case are undirected and simple, containing at most a single edge between any two vertices.  Define a filtration function $F: G \rightarrow \mathbb{R}$ where $F$ has a value in $\mathbb{R}$ on each vertex and edge, denoted by $F(u)$ or $F(e)$ for $u \in V$ or $e \in E$. Given such a graph $G=(V, E)$, we define the $\lambda$-sublevel graph as $G_{\lambda}=(V_{\lambda}, E_{\lambda})$ w.r.t. $F$ and a $\lambda \in \mathbb{R}$ where $V_{\lambda} = \{v\in V: F(v) \leq \lambda \}$ and $E_{\lambda} = \{e \in E: F(e) \leq \lambda \}$. Sublevel graphs of $G$ are subgraphs of $G$. If we change $\lambda$ from -$\infty$ to $+\infty$ we obtain an increasing sequence of sublevel graphs $\{G_{\lambda}\}_{\lambda\in \mathbb{R}}$ which we call a sublevel set filtration. %We can convert this sublevel set filtration to simplex-wise filtration easily by making $F$ to be distinct on each vertex or on each edge. 
%\simon{I feel like it suffices to define with integer indices... We are over the page limit} 
{Such a filtration can always be converted into a sequence of subgraphs of $G$: $\emptyset= G_0 \subset G_1 \subset ... \subset G_{n+m}=G$ (See ~\cite [Page 102]{DW22}) s.t. $\sigma_i=G_{i+1}\backslash G_{i}$ is a single edge or vertex and $F_i:= F(\sigma_i)$. The sequence of vertices and edges $\sigma_0,\sigma_1,\ldots,\sigma_{n+m-1}$ thus obtained is
called the index filtration}. Define a vertex-induced lower filtration for a vertex function $f_G: V\rightarrow \mathbb{R}$ as an index filtration where a vertex
$v$ has a value $F(v):=f_G(v)$
%each vertex $v_i\in G_{i+1}\backslash G_{i}$ is given a value $F(v_i)$ and 
and any edge $(u,v)$ has the value $F(u,v):=\max(F(u),F(v))$ and $F_i\leq F_{i+1}$.
%and the filtration is increasing.
%\tamal{filtration increasing, decreasing are not well defined, ``what do the sentence vertices are given mean...it's bad definition...need to rewrite more clearly. I rewrote it.} 
%\simon{I use the fact that the filtration is "decreasing" and positive later on so it would be cleaner to define a decreasing filtration here (see Figure 1)}
Similarly define an upper filtration for $f_G$ as an index filtration %$\emptyset= G'_0 \subset G'_1 \subset ... \subset G'_{n+m}=G$ 
where $F(v):= f_G(v)$ and the edge $(u,v)$ has value $F(u,v):= \min(f_G(u),f_G(v))$ and $F_i \geq F_{i+1}$.
%Similarly define an upper filtration for $f_G$ as a filtration where vertex values are negated, that is,
%$F(v):= -f_G(v)$ and the edge $(u,v)$ has value $F(u,v):= \max(f_G(u),f_G(v))$.
%and the filtration is decreasing. 

\textbf{Persistent homology}(PH) tracks changes in homological features of a topological space as the sublevel set for a given function grows; see books~\cite{DW22,edelsbrunner2010computational}. For graphs, these
features are given by evolution of components and cycles over the intervals determined by
pairs of vertices and edges. A vertex $v_i=G_{i+1}\setminus G_i$
begins a connected component (CC) 
signalling a birth at filtration value $F(v_i)$ in zeroth homology group
$H_0$. An edge $e_j=G_{j+1}\setminus G_j$ may join two components 
signalling a death of a class in $H_0$ at filtration value $F(e_j)$, or it
may create 
a cycle signalling a birth in the $1$st homology group $H_1$ at filtration value $F(e_j)$.
When a death occurs in $H_0$ by an edge $e_j$, the youngest of the two components being merged
is said to die giving a birth-death pair $(b,d)=(F(v_i),F(e_j))$ if the dying component was created by vertex $v_i$. For cycles, there is no death and thus they have death at $\infty$.
 The multiset of birth death pairs $\mathcal{B}=\{\{(b,d)\}\}$ given by the persistent homology is called the barcode. Each pair $(b,d)$ provides a closed-open interval $[b,d)$, which is called a bar. The persistence of each bar $[b,d)$ in a barcode is defined as $|d-b|$. Notice that, both in $0$- and $1$-dimensional persistence, some bars may have infinite persistence since some components ($H_0$ features) and cycles ($H_1$ features) never die, equivalently, have death at $\infty$. 
%and in general barcodes include 0 persistence bars. 
%It will be useful when comparing the outputs of persistent homology to view a barcode as a multiset of points in the extended plane $(\mathbb{R}\cup\{-\infty,\infty\})^2$ with a diagonal of infinite multiplicity. This geometric view of a barcode is known as a persistence diagram (PD). %We denote a PD view of a barcode $\mathcal{B}$ by $\mathcal{PD}(\mathcal{B})$. 

\textbf{Extended persistence}($\mathbf{PH_{ext}}$) %~\cite{yan2021link} 
takes an extended filtration $F_{f_G}$ as input, which %\sout{which is the concatentation of a lower filtration and an upper filtration induced by a vertex function $f_G$ and computes the pairing of birth and death pairs.} 
is obtained by concatenating lower filtration of the graph $G$ and an upper filtration of the coned space of $G$ induced by a vertex filtration function $f_G$. Concatenation here simply means concatenating two index filtration sequences. More specifically, let $\alpha$ be an additional vertex for the graph $G$. Define an extended function $f_{G\cup \{\alpha\}}$ whose
value is equal to $f_G$ on all vertices except $\alpha$ on which it has a value larger than any other vertices. The cone of a vertex $u$ is given by the edge $(\alpha,u)$ and the cone of an edge $(u, v)$ is given by the triangle $(\alpha,u,v)$. As a result, in extended persistence all $0$- and $1$-dimensional features die (bars are finite; see~\cite{cohen2009extending} for details). 
%\sout{because the upper filtration in $F_{f_G}$ actually filters a coned space of the graph $G$; see~\cite{cohen2009extending} for details.}
%as $G_0 \subset ...\subset G_{n+m}=G, G_0' \subset ... \subset G_{n+m}'=G$. %This can be equivalently viewed as persistent homology on a coned upper filtration where the whole filtration is of a 2-dimensional simplicial complex.
%We denote the operation of computing extended persistence by $\mathbf{PH_{ext}}$. 
Four different persistence pairings or %\sout{barcodes} 
{bars} result from $\mathbf{PH_{ext}}$.  The barcode $\mathcal{B}_{0}^{low}$ results from the vertex-edge pairs within the lower filtration, the barcode $\mathcal{B}_{0}^{up}$ results from the vertex-edge pairs within the upper filtration, the barcode $\mathcal{B}_{0}^{ext}$ results from the vertex-vertex pairs that represent the persistence of connected components born in the lower filtration and die in the upper filtration, and the barcode $\mathcal{B}_{1}^{ext}$ results from edge-edge pairs that represent the persistence of cycles that are born in the lower filtration and die in the upper filtration. The barcodes $\mathcal{B}_0^{low}$, $\mathcal{B}_0^{up}$,
and $\mathcal{B}_0^{ext}$ represent persistence in the $0$th homology $H_0$. The barcode
$\mathcal{B}_1^{ext}$ represents persistence in the $1$st homology $H_1$.
%Death for $\mathcal{B}_0^{ext}$ and $\mathcal{B}_1^{ext}$ is defined by the reappearance of the born connected component or cycle. 
In the TDA literature, $\mathcal{B}_0^{low}$, $\mathcal{B}_0^{up}$, $\mathcal{B}_0^{ext}$, and $\mathcal{B}_1^{ext}$ also go by the names of $Ord_0, Rel_1, Ext_0, Ext_1$ respectively.
%Notice how extended persistence encodes more information than ordinary persistence on just the lower filtration.

See Figure \ref{fig: extended_persistence} for an illustration of the filtration and barcode one obtains for a simple graph with vertices taking on values from $0...6$ denoted by the variable $t$. In particular, at each $t$, we have the filtration subgraph $G_t$ of all vertices and edges of filtration function value less than or equal to $t$. Each line indicates the values $0...6$ from the bottom to top. %\sout{Symmetry or} 
Repetition {in the bar endpoints across all bars} which appear on the right of Figure \ref{fig: extended_persistence} is highly likely in general due to the fact that there are only $O(n)$ filtration values but $O(m)$ possible bars.

%\subsection{Wasserstein Distance}
%\textbf{Wasserstein Distance}($W_1$) allows one to compute PD distances in a differentiable manner by finding a minimimal weight perfect matching between two PDs, $W_1(A,B)=\min_{\Pi:A\rightarrow B}\sum_{a \in A}|a-\Pi(a)|$. This forms a differentiable loss function since one can express $W_1$ as matrix operations for minimizing $\Pi$, see \cite{chen2021wasserstein}. Notice that if the diagonal matchings of $\Pi$ are ignored, we can obtain a lower bound to the $W_1(A,B)$ distance and still satisfy stability, see Equation \ref{eq: stability} and Appendix Section \ref{sec: appendix-W1-adjustment}. 

\subsection{Message Passing Graph Neural Networks (MPGNN)}
A message passing GNN (MPGNN) convolutional layer takes a vertex embedding $\mathbf{h}_u$ {in a finite dimensional Euclidean space and an adjacency matrix $A_G$ as input} and outputs a vertex embedding $\mathbf{h}_u'$ for some $u \in V$. The $k$th layer is defined generally as
\begin{equation*}
    \mathbf{h}^{k+1}_u \gets \textsc{AGG}(\{\textsc{MSG}(\mathbf{h}^k_v) | v \in N_{A_G}(u)\}, \mathbf{h}^k_u), u \in V
\end{equation*}
where $N_{A_G}(u)$ is the neighborhood of $u$. The functions {\sc MSG} and {\sc AGG} have different implementations and depend on the type of GNN. 

Since there should not be a canonical ordering to the nodes of a GNN in graph classification, a GNN for graph classification should be permutation invariant {with respect to node indices}. To achieve permutation invariance \cite{zaheer2017deep}, as well as achieve a global view of the graph, there must exist a readout function or pooling layer in a GNN. The readout function is crucial to achieving power for graph classification. %Without a readout function, it is known that the product of the number of attributes with the number of layers of a standard GNN, must be $\Omega(n/\log n)$ to detect arbitrary cycles \cite{loukas2019graph}. However with a sufficiently powerful readout function, a simple 1-layer MPGNN with $O(\Delta)$, $\Delta$ being the max degree of the graph, number of attributes is Turing computable. 
With a sufficiently powerful readout function, a simple 1-layer MPGNN with $O(\Delta)$ number of attributes~\cite{loukas2019graph} can compute any Turing computable function,
$\Delta$ being the max degree of the graph. 
Examples of simple readout functions include aggregating the node embeddings, or taking the element-wise maximum of node embeddings \cite{xu2018powerful}. %It is necessary to find node embeddings with a composition of MPGNN convolutional layers. These layers are permutation equivariant. To achieve permutation invariance, we must compose with a readout function . Most research has been done on the convolutional layers while the readout
See Section \ref{sec: relatedwork} for various message passing GNNs and readout functions from the literature.

\section{Related Work}
\label{sec: relatedwork}
Graph Neural Networks (GNN)s have achieved state of the art performance on graph classification tasks in recent years. For a comprehensive introduction to GNNs, see the survey \cite{wu2020comprehensive}. In terms of the Weisfeler Lehman (WL) hierarchy, there has been much success and efficiency in GNNs \cite{xu2018powerful, hamilton2017inductive, kipf2016semi} bounded by the WL[1] \cite{kiefer2020power} graph isomorphism test. In recent years, the WL[1] bound has been broken by heterogenous message passing \cite{you2021identity}, high order GNNs \cite{morris2019weisfeiler}, and put into the framework of cellular message passing networks \cite{bodnar2021weisfeiler}. Furthermore, a sampling based pooling layer is designed in \cite{gao2021topology}. It has no theoretical guarantees and its code is not publicly available for comparison. Other readout functions include \cite{lee2019self}, \cite{li2015gated} \cite{ying2018hierarchical}. For a full survey on global pooling, see \cite{liu2022graph}. 

%Contrastive learning, a recent unsupervised framework for representation learning have matched state-of-the-art performance with respect to their supervised analogs in computer vision \cite{chen2020simple, grill2020bootstrap}, and recently for GNNs \cite{zhu2021empirical, you2020graph}. Self-supervised contrastive learning uses data augmentations to generate positive and negative pairs of examples for learning in representation space. An adversarially based graph contrastive learning approach is proposed in \cite{suresh2021adversarial}. The paper \cite{velickovic2019deep} proposes to maximize the mutual information between node representations and the global representation as a node-level contrastive loss. Contrastive learning can learn with today's easily accessible unlabeled data rather than the scarce labeled data for supervised analogs and is projected to be crucial to the future of AI \cite{lecun2021self}.

Topological Data Analysis (TDA) based methods \cite{hofer2020graph, horn2021topological,keros2022dist2cycle, levy2020topological, montufar2020can, yan2022neural, zhao2020persistence} that use learning with persistent homology have achieved favorable performance with many conventional GNNs in recent years. All existing methods have been based on 0-dimensional standard persistent homology on separated lower and upper filtrations \cite{hofer2020graph}.  %nor is the implementation compatible with existing deep learning frameworks. 
We sidestep these known limitations by introducing extended persistence into supervised learning while keeping computation efficient.

A TDA inspired cycle representation learning method in~\cite{yan2022cycle} learns the task of knowledge graph completion. It keeps track of cycle bases from shortest path trees and has a $O(|V|\cdot|E|\cdot k)$, $k$ a constant, computational complexity per graph. This high computational cost is addressed in our method by a more efficient algorithm for keeping track of a cycle basis. Furthermore, since the space of cycle bases induced by spanning forests is a strict subset of the space of all possible cycle bases, the extended persistence algorithm can find a cycle basis that the method in~\cite{yan2022cycle} cannot.

On the computational side, fast methods to compute higher dimensional PH using GPUs, a necessity for modern deep learning, have been introduced in \cite{zhang2020gpu}. In \cite{ripsnet, yan2022neural} neural networks have been shown to successfully approximate the persistence diagrams with learning based approach. However, differentiability and parallel extended persistence computation has not been implemented. Given the expected future use of extended persistence in graph data, a parallel differentiable extended persistence algorithm is an advance on its own.%\tamal{In our  expectation that extended persistence finds meaningful use for graph data, this is an advance on its own}. 

\begin{figure*}[t]
\includegraphics[width=1.0\columnwidth]{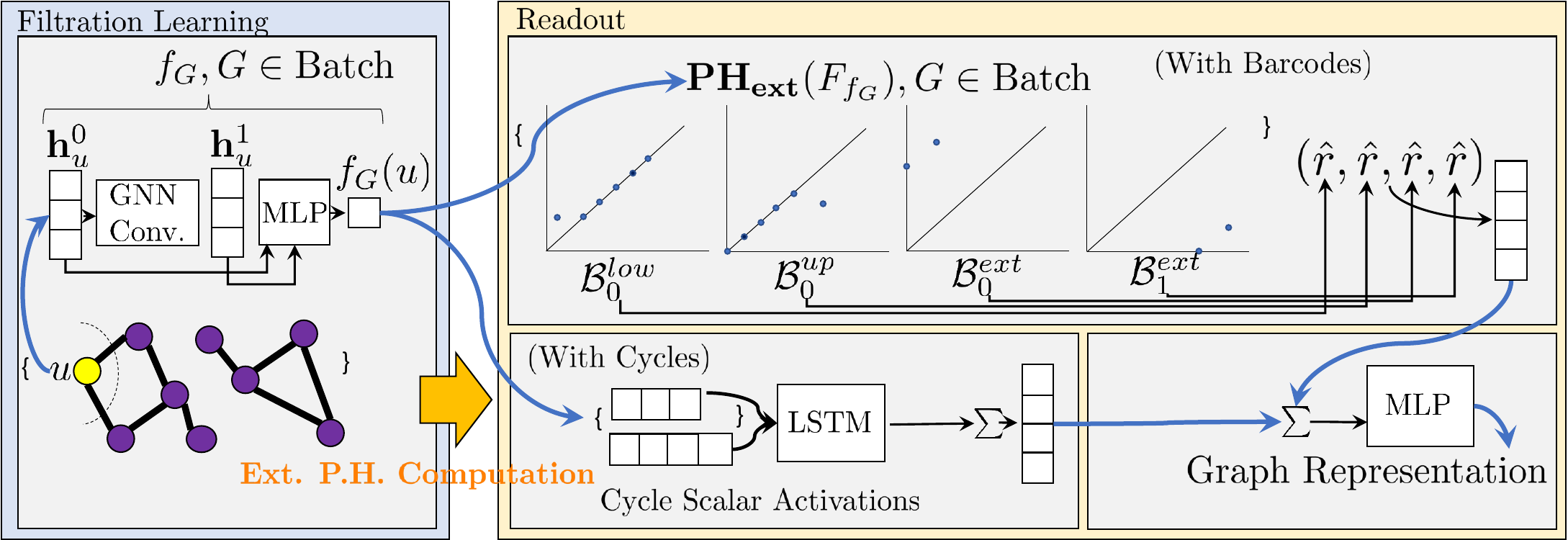}
\centering
		\caption{The  extended persistence architecture (bars+cycles) for graph representation learning. The negative log likelihood (NLL) loss is used for supervised classification. The yellow arrow denotes extended persistence computation, which can compute both barcodes and cycle representatives. %The yellow arrow separates the local representations from the global representations.
		}
		\label{fig: architecture}
		\centering
	\end{figure*}

\section{Method}
Our method as illustrated in Figure \ref{fig: architecture} introduces extended persistence as the readout function for graph classification. In our method, an upper and lower filtration, represented by a filtration function, coincides with a set of scalar vertex representations from standard message passing GNNs. This filtration function is thus learnable by MPGNN convolutional layers. Learning filtrations was originally introduced in \cite{hofer2020graph} with standard persistence. As we show in Section \ref{sec: experiments} and Section \ref{sec: expressivity} arbitrary cycle lengths are hard to distinguish by both standard GNN readout functions \cite{vignac2020building} as well as standard persistence due to the lack of explicitly tracking paths or cycles. Extended persistence, on the other hand, explicitly computes learned displacements on cycles of some cycle basis as determined by the filtration function as well as explicit cycle representatives.
%\tamal{What do you mean by the previous two sentences? What is finite receptive field? \simon{[A receptive field is the farthest reach of nodes a node representation is related to due to message passing. If your readout function is finite in reach as well, such as by topk-SORT in \cite{gao2021topology} or topk-selfattention then the MPGNN cannot distinguish cycles such as in \cite{garg2020generalization}. For SUM, there is ignorance of relative prominence of cycle information and you will be a function of the whole graph but not the cycle subgraphs. In other words, you need a more powerful cycle-oriented readout function. See the 2CYCLES experiment where our approach gets 100\% accuracy but all other approaches only 50\%.] }Extended persistence as per say also dont compute cycle lengths explicitly...maybe, you are employing a version of extended persistence where you are doing it...in general it is not true and the contrast suggested between standard and extended is not obvious.}

%The architecture uses message passing GNNs to learn a filtration that is subsequently consolidated to a global representation by extended persistence. 
We represent the map from graphs to learnable filtrations by any message passing GNN layer such as GIN, GCN or GraphSAGE followed by a multi layer perceptron (MLP) as a Jumping Knowledge (JK)~\cite{xu2018representation} layer. The JK layer with concatenation is used since we want to preserve the higher frequencies from the earlier layers \cite{zhu2020beyond}. Our experiments demonstrate that fewer MPGNN layers perform better than more MPGNN layers. This prevents oversmoothing \cite{huang2020tackling, oono2020optimization}, which is exacerbated by the necessity of scalar representations. %The output of this learnable filtration layer is an ordered set of scalars representing the vertex values for a lower and upper filtration. 

The readout function, the function that consolidates a filtration into a global graph representation, is determined by computing %\sout{four barcodes}
four types of bars for the extended persistence on the concatenation of the lower and upper filtrations followed by compositions with four rational hat functions $\hat{r}$ as used in \cite{hofer2019learning, horn2021topological, hofer2020graph}. %The output of each rational hat function is a 100 dimesional vector. These are then concatenated to form the graph representations that
To each of the %\sout{four barcodes}
four types of bars in barcode $\mathcal{B}$, we apply the hat function $\hat{r}$ to obtain a $k$-dimensional vector. %The barcode $\mathcal{B}$ is comprised of only finite bars in extended persistence.
The function $\hat{r}$ is defined as:
\begin{equation}
\label{eq: rational-hat}
\hat{r}(\mathcal{B}):= \left\{\sum_{\mathbf{p} \in \mathcal{B}} \frac{1}{1+|\mathbf{p}-\mathbf{c_i}|_1}-\frac{1}{1+||r_i|-|\mathbf{p}-\mathbf{c_i}|_1|}\right\}_{i=1}^k
\end{equation}
where $r_i\in \mathbb{R}$ and $\mathbf{c_i} \in \mathbb{R}^2$ are learnable parameters.
%The $k$-dimensional output of each of the four rational hat functions is concatenated to form a graph representation that is used in the downstream evaluation protocol. See Figure \ref{fig: architecture} for an illustration of our architecture. 
The intent of Equation \ref{eq: rational-hat} is to have controlled gradients. It is derived from a monotonic function, see \cite{hofer2019learning}. This representation is then passed through MLP layers followed by a softmax to obtain prediction probability vector $\hat{p}_G$ for each graph $G$. The negative log likelihood loss from standard graph classification is then used on these vectors $\hat{p}_G$.

If the filtration values on the nodes and edges are distinct,
the extended persistence barcode representation is permutation invariant with respect to node indices. %since we use stably sorted \rebuttal{index} filtration.
%The connectivity of the graph is not changing. 
Isomorphic graphs with permuted indices and an index filtration with distinct filtration values will have a unique sorted index filtration.
%This means our barcode representation over the entire graph becomes permutation invariant with respect to node indices for graph classification. 
Node filtration values are usually distinct since computed floating points rarely coincide.
However to break ties and eliminate any dependence on node indices for edges, implement edge filtration values for lower filtration as $F(u,v)= \max(F(u),F(v))+\epsilon \cdot \min(F(u),F(v))$ and for upper filtration as $F(u,v)= \min(F(u),F(v))+\epsilon \cdot \max(F(u),F(v))$, $\epsilon$ being very small.  %Since most floating points rarely coincide, distinct node filtration values, or scalar activations, is a reasonable assumption. This cannot be said for induced edge filtration values, however.} %\rebuttal{For any permutation of the node indices, there is a unique stably sorted index filtration depending on the filtration function values.}

%In order to make extended persistence a feasible computation, we lower its complexity to a $O(m n)$ work and $O(m \log n)$ depth parallel algorithm. In the experiments section we detail how our algorithm scales to speedup of 60x over the state of the art in computing extended persistence.
\textbf{Cycle Representatives:} A cycle basis of a graph is a set of cycles where every cycle can be obtained from it by a symmetric difference, or sum, of cycles in the cycle basis. It can be shown that every cycle is a sum of the cycles induced by a spanning forest and the complementary edges. Extended persistence computes the same number of \emph{independent} cycles 
as in the cycle basis induced from a spanning forest. Thus computing extended persistence results in computing a cycle basis. We can explicitly store the cycle representatives, or sequences of filtration scalars, along with the barcode on graph data. This slightly improves the performance in practice and guarantees cycle length classification for arbitrary lengths. After the cycle representatives are stored, we pass them through a bidirectional LSTM then aggregate these LSTM representation per graph and then sum this graph representation by cycles with the vectorization of the graph barcode by the rational hat function of Equation \ref{eq: rational-hat}, see Figure \ref{fig: architecture}. The aggregation of the cycle representations is permutation invariant due to the composition of aggregations \cite{zaheer2017deep}. In particular, the sum of the barcode vectorization and the mean of cycle representatives, our method's graph representation, must be permutation invariant. What makes keeping track of cycle representatives unique to standard message passing GNNs is that a finite receptive field message passing GNN would never be able to obtain such cycle representations and certainly not from a well formed cycle basis.

\subsection{Efficient Computation of Extended Persistence}
The computation for extended persistence can be reduced to applying a matrix reduction algorithm to a coned matrix as detailed in \cite{edelsbrunner2010computational}. In \cite{yan2021link}, this computation was found to be equivalent to a graph algorithm, which we improve upon. %As in their algorithm, we first find the persistence pairings for the upper and lower filtrations. On the upper filtration, this finds the maximum spanning forest, which we call negative edges. Separating out the remaining edges on the input graph, which we call positive edges of the upper filtration, we can then consider the unique cycle formed by each positive edge one at a time. The incoming positive edge is added into the spanning forest and the edge of maximum value in the lower filtration is deleted on the existing spanning forest. 

\subsubsection{Algorithm}
Our algorithm is as follows and written in Algorithm \ref{alg: extended-persistence}. We perform the $0$-dimensional persistence algorithm, $\textsc{PH}_0$, using the union find data structure in $O(m \log n)$ time and $O(n)$ memory for the upper and lower filtrations in lines 1 and 2. 
See the Appendix Section \ref{sec: appendix-union-find} for a description of this algorithm. These two lines generate the vertex-edge pairs for $\mathcal{B}_0^{low}$ and $\mathcal{B}_0^{up}$. We then measure the minimum lower filtration value and maximum upper filtration value of each vertex in the union-find data structure found from the $\textsc{PH}_0$ algorithm as in lines 3 and 4 using the roots of the union-find data structure $U_{up}$ formed by the algorithm.
These produce the vertex-vertex pairs in $\mathcal{B}_0^{ext}$.
%\tamal{I took out `with cycle representative' in the header of the algorithm below because you dont compute representatives for $H_0$.}
\begin{algorithm}[ht]
   \caption{Efficient Computation of $\mathbf{PH_{ext}}$}
   \label{alg: extended-persistence}
\begin{algorithmic}[1]
   \Statex{{\bfseries Input:} $G= (V,E)$, $F_{low}$: lower filtration function, $F_{up}$: upper filtration function}
   \Statex{{\bfseries Output:} $\mathcal{B}_{0}^{low}, \mathcal{B}_{0}^{up}, \mathcal{B}_{0}^{ext}, \mathcal{B}_{1}^{ext}$,
   $\mathcal{C}$: cycle reps.}
   \State $\mathcal{B}_{0}^{low}$, $E^{low}_{pos}, E^{low}_{neg}, U_{low} \gets \textsc{ PH}_0(G,F_{low}, lower)$
   \State $\mathcal{B}_{0}^{up}$ $E^{up}_{pos}, E^{up}_{neg}, U_{up}\gets$ $\textsc{PH}_0(G,F_{up}, upper)$
   \State $roots \gets \{\textsc{Get\_Union-Find\_Roots}(U_{up},v), v \in V\}$
   \State $\mathcal{B}_{0}^{ext} \gets \{ \min(roots[v]),\max(roots[v]) , v \in V\}$
   \State $\mathbf{T}\gets \{\}$ empty link-cut tree;
   $\mathcal{B}_1^{ext} \gets 
   \{\{\}\}$; $\mathcal{C} \gets \{\}$ empty list of cycle representatives
   \Statex{{\text{/*} $E^{up}_{neg}$ is sorted by $\textsc{PH}_0$ in decreasing order of $F_{up}$ values (desc. filtr. values)\text{*/}}}
   \For{$e=(u,v) \in E^{up}_{neg}$} 
   \State $\mathbf{T} \gets \textsc{Link}(\mathbf{T},e,\{w\})$ $\text{/* }w \notin \mathbf{T} \text{, }w=u\text{ or }v{ */}$ 
   \EndFor
   \State {/* $E^{up}_{pos}$ is sorted by $\textsc{PH}_0$ with respect to $F_{up}$ (descending filtration values) */}
   %\State $\textsc{SORT}(E^{up}_{pos})$ with respect to $F_{up}$ (descending filtration values)
   \For{$e=(u,v) \in E^{up}_{pos}$}
   \State $lca \gets \textsc{Lca}(\mathbf{T}, u,v)$ /*Get the least common ancestor of $u$ and $v$ to form a cycle*/
   \State $P_1 \gets \textsc{ListRank(Path}(u,lca))$; $P_2 \gets \textsc{ListRank(Path}(v,lca))$
   %\State $P_2 \gets \textsc{ListRank(PATH}(v,lca))$
   \State $\mathcal{C}\gets \mathcal{C} \sqcup \{F_{up}(P_1) \sqcup F_{up}(Reverse(P_2))\}$ /*Keep track of the scalar activations on cycle*/
   %\State $v' \gets \textsc{ArgmaxReduce}(\mathbf{T}, P_1 \sqcup P_2)$; 
   \State {$(u',v') \gets \textsc{ArgmaxReduceCycle}(\mathbf{T}, u,v, lca)$ } {/* Find max edge on cycle using $F_{low}$*/}
   %\State $u' \gets predecessor(v')$ on $(\{v\}\cup P_1) \sqcup (\{u\}\cup P_2)$
   \State $\mathbf{T_1}, \mathbf{T_2} \gets \textsc{Cut}(\mathbf{T}$,$(u',v'))$; $\mathbf{T} \gets \textsc{Link}(\mathbf{T_1}$,$ (u,v), \mathbf{T_2})$
   %\State $\mathbf{T} \gets \textsc{Link}(\mathbf{T_1}$,$ (u,v), \mathbf{T_2})$
   \State $\mathcal{B}_1^{ext} \gets \mathcal{B}_1^{ext} \cup \{(F_{low}(u',v'), F_{up}(u,v))\}$
   \EndFor
   
   \State \textbf{return }{($\mathcal{B}_{0}^{low}, \mathcal{B}_{0}^{up}, \mathcal{B}_{0}^{ext}, \mathcal{B}_{1}^{ext}, \mathcal{C})$}
\end{algorithmic}
\end{algorithm}

For computing edge-edge pairs in $\mathcal{B}_1^{ext}$ with cycle representatives, we implement the algorithm in~\cite{yan2021link} with a link-cut tree data structure that facilitates deleting and inserting edges in a spanning tree and employ a parallel algorithm to enumerate the edges in a cycle. %determine the edge with the least value in a cycle. 
See the Appendix Section \ref{sec: appendix-link-cut-tree} for a more thorough explanation of the link-cut tree implementation and the operations we use on it.
We collect the max spanning forest $T$ of negative edges, edges that join components, from the upper filtration by repeatedly applying the link operation $n-1$ times in lines 6-8 in decreasing order of $F_{up}$ values and sort the list of the remaining positive edges, which create cycles in line $9$. Then, for each positive edge $e=(u,v)$, in order of the upper filtration (line $10$), we find the least common ancestor ($lca$) of $u$ and $v$ in the spanning forest $T$ we are maintaining as in line $11$. 
Next, we apply the parallel primitive ~\cite{blelloch2010parallel} of \emph{list ranking} twice, once on the path $u$ to $lca$ and the other on the path $v$ to $lca$ in {line $12$}. 
List ranking allows a list to populate an array in parallel in logarithmic time. The tensor concatenation of the two arrays is appended to a list of cycle representatives as in {line $13$}. This is so that the cycle maintains order from $u$ to $v$. %Two \emph{max-reductions} are then applied on the two arrays $P_1, P_2$ found by list ranking. The larger of $u'$ and $v'$, $\max(u',v') \geq \max(P_1 \cup P_2)$, is then found by two reductions on the two paths in line 16. The edge $e'= (u',v')$ is obtained by backtracking one index from the maximum element's index, which is how the predecessor operation is implemented on the arrays $\{v\} \cup P_1$ and $ \{u\}\cup P_2$, where $v$ comes before all elements of $P_1$ and $u$ comes before all elements of $P_2$. 
We then apply an $\textsc{ArgMaxReduceCycle}(T,u,v,lca)$ which finds the edge having a maximum filtration value in the lower filtration on it over the cycle formed by $u,v$ and $lca$.
We then cut the spanning forest at the edge $(u',v')$, forming two forests as in {line $15$}. These two forests are then linked together at $(u,v)$ as in {line $15$}. The bar $(F_{low}(u',v'), F_{up}(u,v))$ is now found and added to the multiset $\mathcal{B}_{1}^{ext}$. The final output of the algorithm is %\sout{four barcodes} 
{four types of bars} and a list of cycle representatives: $((\mathcal{B}_{0}^{low}, \mathcal{B}_{0}^{up}, \mathcal{B}_{0}^{ext}, \mathcal{B}_{1}^{ext})$, $\mathcal{C})$.

\subsubsection{Complexity}
We improve upon the complexity of \cite{yan2021link} by obtaining a $O(mn)$ work $O(m \log n)$ depth algorithm on $O(n)$ processors using $O(n)$ memory. Here $m$ and $n$ are the number of edges and vertices in the input graph. We introduce two ingredients for lowering the complexity, the first is the link-cut dynamic connectivity data structure and the second is the parallel primitives of list ranking. % and reduction on paths. 
The link-cut tree data structure is a dynamic connectivity data structure that can keep track of the spanning forest with $O(\log n)$ amortized time for $\textsc{Link}, \textsc{Cut}, \textsc{Path}, \textsc{Lca}, \textsc{ArgMaxReduce}$. %for the operations of least common ancestor, edge deletion (cut), adjoining two trees (link), and find-max along a path. %We use these fast operations in our algorithm design. 
Furthermore, list ranking \cite{anderson1991deterministic} is an $O(\log n)$ depth and $O(n)$ work parallel algorithm on $O(\frac{n}{\log n})$ processors that determines the distance of each vertex from the start of the path or linked list it is on. In other words, list ranking turns a linked list into an array in parallel. %Reduction on an array is an $O(n)$ work, $O(\log n)$ depth, and $O(n)$ memory algorithm that applies an associative binary operator such as $\max$ on an array to obtain a $\max$ of all elements. 
Sorting can be performed in parallel using $O(n\log n)$ work and $O(\log n)$ depth. 

Notice that if we do not keep track of cycle representatives (remove lines 12 and 13 from Algorithm \ref{alg: extended-persistence}), then we have an $O(m\log n )$ time sequential algorithm. The repeated calling of the supporting operation $\textsc{Expose}()$ dominates the complexity, see Appendix Section \ref{sec: appendix-link-cut-tree}.

\section{Expressivity of Extended Persistence}
\label{sec: expressivity}
%Extended persistence, like standard 0-dimensional persistence, encodes the global topological information of a graph. 
We prove some properties of extended persistence barcodes. We also find a case where extended persistence with supervised learning can give high performance for graph classification. WL[1] bounded GNNs, on the other hand, are guaranteed to not perform well. \emph{Certainly all such results also apply for the explicit cycle representatives since the min and max on the scalar activations on the cycle form the corresponding bar.}

\subsection{Some Properties}
The following Theorem \ref{thm: count-substucture} states some properties of extended persistence. This should be compared with the 0- and 1-dimensional persistence barcodes in the standard persistence. Every vertex and edge is associated with some bar in the standard persistence though they can be both finite or infinite. However, in extended persistence all bars are finite and we form barcodes from an extended filtration of $2m+2n$ edges and vertices instead of the standard $(m+n)$-lengthed filtration.

%\begin{definition}
%\label{def: count-substructure}
%\cite{substructure paper}
%Given two graphs $G, G'$, a function $f$ counts subgraph $H \subset G, G'$ if $C(G,H) \neq C(G',H) \implies f(G) \neq f(G')$
%\end{definition}

\begin{theorem} (Extended Barcode Properties)
\label{thm: count-substucture}

$\mathbf{PH_{ext}}(G)$ produces four multisets of %\sout{barcodes}
bars: $\mathcal{B}^{ext}_1, \mathcal{B}_{0}^{ext}, \mathcal{B}_{0}^{low}, \mathcal{B}_{0}^{up}$, s.t.

$|\mathcal{B}_1^{ext}|= \mathrm{dim}\, H_1=m-n+C$, 

$|\mathcal{B}_{0}^{ext}|=\mathrm{dim}\,H_{0}=C$, 

$|\mathcal{B}_{0}^{low}|=|\mathcal{B}_{0}^{upper}|=n-C$, 

where there are $C$ connected components and $\mathrm{dim}\,H_k$ is the dimension of the $k$th homology group s.t.:

1. the $H_1$ %\sout{barcode}
bars comes from a cycle basis of $G$ which also constitutes a basis of its fundamental group,

2. $\mathrm{dim}\, H_1$ counts the number of chordless cycles when $G$ is outer-planar, and

3. there exists an injective filtration function where the union of the resulting barcodes is strictly more expressive than the histogram produced by the WL[1] graph isomorphism test. 

%4. WL[3] is strictly more expressive than extended persistence for general graphs
%If $G$ and $G'$ are outerplanar graphs, then there exists vertex values for an extended filtration on $G$ and $G'$ s.t. $\mathbf{PH_{ext}}$ counts all chordless cycles in $G$ and $G'$
\end{theorem}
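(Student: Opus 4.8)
The cardinality claims are the homological core and follow from standard facts together with the bookkeeping in Algorithm~\ref{alg: extended-persistence}. Regarding $G$ as a $1$-complex, $\dim H_0=C$, and since $\chi(G)=n-m=\dim H_0-\dim H_1$ we get $\dim H_1=m-n+C$. Running $\textsc{PH}_0$ on the lower filtration splits the $n$ vertices into $C$ essential ones (the oldest vertex of each component) and $n-C$ destroyed ones, each paired with a distinct spanning-forest edge, and splits the $m$ edges into those $n-C$ destroyers and $m-n+C$ positive (cycle-completing) edges; hence $|\mathcal{B}_0^{low}|=n-C$, and symmetrically $|\mathcal{B}_0^{up}|=n-C$. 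By the extended persistence pairing~\cite{cohen2009extending}, the $C$ essential vertices of the lower filtration match the $C$ essential vertices of the upper filtration ($|\mathcal{B}_0^{ext}|=C=\dim H_0$), and the $m-n+C$ positive lower edges match the positive upper edges ($|\mathcal{B}_1^{ext}|=m-n+C=\dim H_1$). For Part~1, each cycle representative $c_i\in\mathcal{C}$ is, by construction, the fundamental cycle of the $i$-th processed positive edge $e_i$ with respect to the spanning forest $\mathbf{T}$ that the link-cut tree maintains at that step, and the edge $e_i'$ removed by $\textsc{Cut}$ is a tree edge lying on $c_i$. I would first prove $\mathbb{Z}_2$-linear independence by matroid exchange: the $e_i'$ are pairwise distinct and, once removed from $\mathbf{T}$, are never re-inserted, so $e_i'$ occurs in no later $c_j$; hence in any relation $\sum_{i\in S}c_i=0$ the coordinate indexed by $e'_{\min S}$ is $1$, a contradiction. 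Since $|\mathcal{C}|=m-n+C=\dim Z_1(G;\mathbb{Z}_2)$, the $c_i$ form a cycle basis. For the fundamental-group claim I would work one component at a time and use that (i) the fundamental cycles of a fixed spanning tree freely generate $\pi_1$, and (ii) each forest mutation ($\textsc{Cut}(e_i')$ followed by $\textsc{Link}(e_i)$) swaps one chord for another lying on the same fundamental cycle and therefore acts on the associated free basis by elementary Nielsen transformations; tracking these shows that $\{c_i\}$ is obtained from the fundamental cycles of the final forest by a sequence of Nielsen moves triangular in the processing order, hence also freely generates $\pi_1(G)$.

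For Part~2 I would first reduce to the $2$-connected blocks of $G$: every cycle, and every chord of a cycle, lies inside a single block, so being chordless is block-local; bridges contribute $0$ both to the chordless-cycle count and to $\dim H_1$; and $\dim H_1$ is additive over blocks, with $m-n+C=\sum_B(m_B-n_B+1)$. For a $2$-connected outer-planar block $B$ I would invoke the structural fact that $B$ has a unique Hamiltonian cycle bounding its outer face, with all remaining edges drawn inside it as pairwise non-crossing chords; these chords cut the bounding polygon into exactly $(m_B-n_B)+1=\dim H_1(B)$ bounded faces. It then remains to identify bounded faces with chordless cycles: the boundary of a bounded face has no chord, since such a chord would be an edge drawn across the interior of the face, impossible because the chords are pairwise non-crossing; conversely, a chordless cycle $\gamma$ bounds a single face, because every edge contained in the closed disk bounded by $\gamma$ has both endpoints on $\gamma$ (all vertices lie on the outer face, none strictly inside) and hence would be a chord of $\gamma$ if $\gamma$ enclosed more than one face. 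Summing over blocks, the number of chordless cycles of $G$ equals $m-n+C=\dim H_1$.

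For Part~3 I would show that the tuple of four barcodes refines, and strictly refines, the WL[1] colour histogram. \emph{Refinement:} pick an injective $f_G$ that encodes and lets one recover the stable WL[1] colour $\iota(v)$ of each vertex, e.g.\ $f_G(v)=M\iota(v)+\delta(v)$ with $M>n$ and a tie-break $\delta(v)\in\{1,\ldots,n\}$, so $\iota(v)=\lfloor f_G(v)/M\rfloor$. By the cardinality count, every vertex is either destroyed (its value appears as a birth coordinate in $\mathcal{B}_0^{low}$) or essential (its value appears as a birth coordinate in $\mathcal{B}_0^{ext}$), so the birth coordinates occurring across $\mathcal{B}_0^{low}$ and $\mathcal{B}_0^{ext}$ recover the multiset $\{f_G(v):v\in V\}$, hence $\{\iota(v)\}$, which is the WL[1] histogram; thus the barcodes are no coarser than WL[1]. \emph{Strictness:} take $G_1=C_6$ and $G_2=C_3\sqcup C_3$. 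Both are $2$-regular on $6$ vertices, so WL[1] stabilises after one round to a single colour and returns identical histograms, whereas $|\mathcal{B}_0^{ext}|$ equals the number of connected components, $1$ versus $2$, for \emph{any} injective filtration, so the barcodes differ. I expect Part~3 to be the main subtlety: requiring the filtration to be injective precludes an isomorphism-invariant filtration on symmetric graphs, so the comparison with WL[1] must be phrased in terms of pair-separating power rather than a single canonical invariant. The remaining fussy point is the Nielsen-transformation bookkeeping in the fundamental-group half of Part~1.
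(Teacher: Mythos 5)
Your proposal is correct in substance but reaches several parts by genuinely different routes than the paper. For the cardinalities, your Euler-characteristic argument plus the positive/negative edge bookkeeping of $\textsc{PH}_0$ is essentially the paper's direct count of vertex--edge and edge--edge pairs, just phrased through $\chi(G)$. For part 1, the paper argues independence by asserting that each returned representative is a sum of fundamental cycles containing a unique fundamental-basis cycle as a summand, whereas you argue directly from Algorithm \ref{alg: extended-persistence}: the cut edges $e_i'$ are pairwise distinct, lie on their own cycle, and never re-enter the forest, so the representatives are triangular in the processing order; this is cleaner and more faithful to the actual data structure. Your Nielsen-move treatment of the fundamental-group half is only a sketch, but the paper handles that half with a one-line citation that any cycle basis generates $\pi_1$ and $H_1(G,\mathbb{Z}_2)$, so neither account is fully detailed, and yours at least flags the genuine subtlety that the $c_i$ are fundamental cycles of \emph{evolving} forests. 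For part 2, the paper applies Euler's formula and observes only that each interior face of an outer-planar embedding is chordless; you additionally reduce to $2$-connected blocks and prove the converse direction (every chordless cycle bounds a single interior face), which is what upgrades the inequality between the face count $m-n+C$ and the number of chordless cycles to the claimed equality, so your argument is the more complete of the two. For part 3, the paper delegates to the expressivity result of \cite{horn2021topological} together with the observation that $\mathcal{B}_0^{low}$ and $\mathcal{B}_0^{ext}$ subsume the standard $0$-dimensional barcode (finite plus essential bars); you instead re-derive the refinement self-containedly by encoding the stable WL[1] colours into an injective filtration and obtain strictness from $C_6$ versus $C_3\sqcup C_3$, which is more explicit and makes the ``pair-separating'' sense of the comparison precise, at the cost of reproving known material. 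No step of yours would fail; the only incomplete piece is the Nielsen bookkeeping, which matches the level of rigor of the paper's own citation at that point.
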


The barcodes found by extended persistence thus have more degrees of freedom than those obtained from standard persistence. For example, a cycle is now represented by two filtration values rather than just one. Furthermore, the persistence $|d-b|$ of a pair $(b,d)\in \mathcal{B}_1^{ext}$ or $\mathcal{B}_{0}^{ext}$ can measure topological significance of a cycle or a connected component respectively through persistence.
Thus, extended persistence encodes more information than standard persistence. In Theorem \ref{thm: count-substucture}, property 1 says that extended persistence actually computes pairs of edges of cycles in a cycle basis. A modification of the extended persistence algorithm could generate all or count certain kinds of important cycles, see \cite{chen2020can}. Property 2 characterizes what extended persistence can count.

We makes some observations on the expressivity of $\mathbf{PH_{ext}}$.
%We observe that there exists an injective filtration that can count any chordless cycle's length in an arbitrary graph, something that could not be done with 0-dimensional persistence.

\begin{observation}(Cycle Lengths) \label{lemma: cycle-length} For any graph $G$ and chordless cycle $\mathbf{C} \subset G$, there exists injective filtration functions $f_G^{low}, f_G^{up}$ on $G$ where $\mathbf{PH_{ext}}$ of the induced filtration for extended persistence can measure the number of edges along $\mathbf{C}$. 
\end{observation}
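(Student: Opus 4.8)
The plan is to hand‑design a vertex filtration on all of $G$ so that the chordless cycle $\mathbf{C}$ is cleanly isolated inside the extended persistence computation: exactly one edge of $\mathbf{C}$ becomes a positive (cycle‑creating) edge of the upper filtration, and the fundamental cycle that Algorithm~\ref{alg: extended-persistence} assigns to that positive edge is exactly $\mathbf{C}$. Once that is arranged, the $\mathcal{B}_1^{ext}$ bar produced for $\mathbf{C}$ and its stored cycle representative both directly encode the length $k := |\mathbf{C}|$.

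Concretely, write $\mathbf{C}=v_1v_2\cdots v_kv_1$ with edges $e_i=(v_i,v_{i+1})$ (indices mod $k$), and note $k\ge 3$ since $G$ is simple. For the lower filtration I would set $f_G^{low}(v_i)=i$ for $i=1,\dots,k$ and $f_G^{low}(w)=k+j$ for the remaining vertices (enumerated arbitrarily by $j\ge 1$), using the $\epsilon$‑perturbed edge rule $F_{low}(x,y)=\max(f,f')+\epsilon\min(f,f')$ so that all edge values are distinct. The key consequences are: (i) every edge of $\mathbf{C}$ has $F_{low}\le k+\epsilon(k-1)<k+1$, whereas every edge of $G$ not in $\mathbf{C}$ has an endpoint outside $\mathbf{C}$ — here chordlessness is used, since a non‑$\mathbf{C}$ edge with both endpoints on $\mathbf{C}$ would be a chord — hence has $F_{low}>k$; and (ii) the unique maximum‑$F_{low}$ edge of $\mathbf{C}$ is $e_{k-1}=(v_{k-1},v_k)$, with value $k+\epsilon(k-1)$. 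For the upper filtration I take any injective $f_G^{up}$ (with its own $\epsilon$‑perturbed edge rule); injectivity is all that is needed.

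Next I would trace the algorithm. Building the maximum spanning forest of the upper filtration (lines 6--8) keeps some spanning tree; since $\mathbf{C}$ is a cycle, exactly one of its edges, call it $e^{\ast}$, is rejected and is therefore positive, and all of $\mathbf{C}\setminus\{e^{\ast}\}$ lies in the initial forest. I then claim no edge of $\mathbf{C}\setminus\{e^{\ast}\}$ is cut before $e^{\ast}$ is processed: an edge leaves the tree only as the maximum‑$F_{low}$ edge of the fundamental cycle of some earlier positive edge $e'$, but that fundamental cycle contains $e'$; if $e'\notin\mathbf{C}$ then by (i) the edge $e'$ itself (with $F_{low}>k$) outranks every $\mathbf{C}$‑edge on that cycle, so no $\mathbf{C}$‑edge is cut, and the only positive edge whose fundamental cycle could be supported entirely on $\mathbf{C}$‑edges is $e^{\ast}$, because chordlessness means $\mathbf{C}$ is the unique cycle on its own edge set. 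Hence when $e^{\ast}$ is processed the (unique) tree path between its endpoints is exactly $\mathbf{C}\setminus\{e^{\ast}\}$, the fundamental cycle is $\mathbf{C}$, $\textsc{ArgMaxReduceCycle}$ returns $e_{k-1}$, and the algorithm emits the bar $\big(k+\epsilon(k-1),\,F_{up}(e^{\ast})\big)\in\mathcal{B}_1^{ext}$ together with a cycle representative listing all $k$ vertices of $\mathbf{C}$. Recovery of $k$ is then immediate: by (i) every other bar of $\mathcal{B}_1^{ext}$ has birth $>k$, so $\mathbf{C}$'s bar is the unique one with birth in $[k,k+1)$ and $k=\lfloor b\rfloor$; equivalently, the associated cycle representative has length exactly $k$. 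I expect the main obstacle to be precisely the third step — rigorously showing the dynamically maintained spanning forest never disturbs $\mathbf{C}\setminus\{e^{\ast}\}$ before $e^{\ast}$ is handled — which is where both the chordlessness hypothesis and the choice making $\mathbf{C}$'s edges $F_{low}$‑minimal do the real work; the rest is bookkeeping. One could alternatively argue directly from the matrix‑reduction description of $\mathbf{PH_{ext}}$, but the graph‑algorithmic route makes the cycle‑representative claim transparent.
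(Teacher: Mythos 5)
Your lower filtration is set up sensibly (small values on $\mathbf{C}$, chordlessness forcing every non-$\mathbf{C}$ edge to exceed $k$), and your argument that processing a positive edge $e'\notin\mathbf{C}$ never cuts a $\mathbf{C}$-edge is essentially sound. The genuine gap is the upper filtration: you assert that for an \emph{arbitrary} injective $f_G^{up}$ "exactly one edge of $\mathbf{C}$ is rejected" by the maximum spanning forest and that $\mathbf{C}\setminus\{e^{\ast}\}$ sits in the initial forest. That is false. A spanning forest omits $m-n+C$ edges, and a given cycle can lose many of them: take a hub vertex $w$ adjacent to every vertex of $\mathbf{C}$ with $f_G^{up}(w)$ largest (or simply a triangle in $K_4$ with the spanning star centered at the fourth vertex). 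Then every spoke $(w,v_i)$ enters the upper filtration before the corresponding cycle edges, the hub pre-connects all of $\mathbf{C}$'s vertices, and \emph{all} $k$ edges of $\mathbf{C}$ become positive. Note this also kills the natural fallback $f_G^{up}=f_G^{low}$ with your labeling, since your small values on $\mathbf{C}$ push its edges to the \emph{end} of the upper filtration. Once several $\mathbf{C}$-edges are positive, the tree path for the first of them is not $\mathbf{C}\setminus\{e^{\ast}\}$, the fundamental cycle assigned by the algorithm is not $\mathbf{C}$, the stored representative need not have length $k$, and your claimed bar with representative $\mathbf{C}$ need not be produced. Since the observation only claims existence of suitable $f_G^{low},f_G^{up}$, the burden is on you to exhibit an $f_G^{up}$ for which your trace is valid; "injectivity is all that is needed" is exactly the step that fails.

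This is precisely where the paper's construction does its work: it places the cycle vertices at the \emph{top} of the vertex order (values $n-k,\dots,n-1$, with $f_G^{up}=f_G^{low}$ there and explicit $\varepsilon$ tie-breaking on the two critical edges), so that in the upper filtration every edge of $\mathbf{C}$ is processed before any other edge of $G$. This guarantees a single positive $\mathbf{C}$-edge, that it is the first positive edge handled, and that the forest path between its endpoints is the rest of $\mathbf{C}$, yielding the bar $[n-1,n-k]$. A partial repair of your version is available for the bar-only claim: with your lower filtration, the births of $\mathcal{B}_1^{ext}$ are exactly the $F_{low}$-values of lower-positive edges, and one checks that $(v_{k-1},v_k)$ is the unique lower-positive edge with value in $[k,k+1)$, so $k=\lfloor b\rfloor$ can be recovered for \emph{any} $f_G^{up}$; but your cycle-representative claim, and your algorithmic trace as written, require choosing $f_G^{up}$ along the lines of the paper rather than leaving it arbitrary.
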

%%\begin{proof}
%%Number the vertices of the chordless cycle $\mathbf{C}$ of length $k$ in descending order and counter clockwise as $n-1 ...n-k$. Use these indices as filtration values as well. Give the other vertices arbitrary different values less than $n-k$. We then get that every edge on the cycle except one: $(n-1,n-k)$ becomes negative and thus part of the negative spanning forest of the upper filtration. We then get that the positive edge of smallest upper filtration value is edge $(n-1,n-k)$. By the extended persistence algorithm, in the first iteration after computing $PD_{0_{low}}$ and $PD_{0_{upper}}$, edge $(n-1,n-k)$ pairs with the maximum lower filtration valued edge in the chordless cycle $\mathbf{C}$ it forms with the spanning forest. We thus have the barcode: $[n-1,n-k]$.

%%\end{proof}

%%Theorem \ref{thm: chordless-cycle-length} implies that there exists a filtration we can learn where there is a barcode from extended persistence that measures any chordless cycle's length on a given graph. 

%Injectivity is important since a learned filtration has well defined gradients when the filtration is injective. This is proven in \cite{hofer2020graph}. 
Such a result cannot hold for learning of the filtration by local message passing from constant node attributes. Thus, for the challenging 2CYCLE graphs dataset in Section \ref{sec: 2-cycles}, it is a necessity to use the cycle representatives $\mathcal{C}$ for each graph to distinguish pairs of cycles of arbitrary length. This should be compared with Top-$K$ methods, $K$ being a constant hyper parameter such as in \cite{gao2021topology, zhang2018end}. The constant hyper parameter $K$ prevents learning an arbitrarily long cycle length when the node attributes are all the same. Furthermore, a readout function like SUM is agnostic to graph topology and also struggles with learning when presented with an arbitrarily long cycle. This struggle for distinguishing cycles in standard MPGNNs is also reported in~\cite{garg2020generalization}. % when applying data augmentations such as edge and vertex deletions for supervised learning.
An observation similar to the previous Observation \ref{lemma: cycle-length} can also be made for paths measured by $\mathcal{B}_0^{ext}$. 
\begin{observation}(Connected Component Sizes) \label{lemma: cc-length} For any graph $G$ and all connected components $\mathbf{CC} \subset G$, there exists injective filtration functions $f_G^{low}, f_G^{up}$ defined on $G$ where $\mathbf{PH_{ext}}$ of the induced filtration for extended persistence can measure the number of vertices in $\mathbf{CC}$.
\end{observation}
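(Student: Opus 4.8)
The plan is to exhibit explicit vertex functions and then read the component sizes directly off the $\mathcal{B}_0^{ext}$ coordinates. The fact I will build on --- visible in lines 3--4 of Algorithm~\ref{alg: extended-persistence} and in the classical pairing of \cite{cohen2009extending} --- is that the $\mathcal{B}_0^{ext}$ bar associated with a connected component $\mathbf{CC}_j$ of $G$ is the pair $\bigl(\min_{v\in\mathbf{CC}_j} f_G^{low}(v),\ \max_{v\in\mathbf{CC}_j} f_G^{up}(v)\bigr)$: the component is born in the lower filtration when its lowest-valued vertex enters and dies in the upper filtration when its highest-valued vertex leaves, and edge values (being $\max$ of the two endpoint values in the lower filtration and $\min$ of them in the upper) never beat these vertex extrema. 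Write $C$ for the number of components and $n_j=|\mathbf{CC}_j|$, so $\sum_j n_j=n$.

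First I would pick a linear order on $V$ in which the vertices of each component occur consecutively: list $\mathbf{CC}_1$, then $\mathbf{CC}_2$, and so on, and let $\pi(v)\in\{1,\dots,n\}$ be the rank of $v$. Set $f_G^{low}(v)=f_G^{up}(v)=\pi(v)$; this is injective on vertices (distinct integers), and taking the two vertex functions equal is permitted (it is the classical single-function extended-filtration setup). Edge values are then forced by the $\max$/$\min$ rule, and if one additionally wants the induced filtration to have all $2m+2n$ values distinct, the small $\epsilon$-perturbation of edge values described earlier in the Method section achieves this without disturbing any relevant order.

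Next I would decode the barcode. With this ordering, $\min_{v\in\mathbf{CC}_j} f_G^{low}(v)=s_j:=1+\sum_{i<j}n_i$ and $\max_{v\in\mathbf{CC}_j} f_G^{up}(v)=s_j+n_j-1$, so the $\mathcal{B}_0^{ext}$ bar of $\mathbf{CC}_j$ is $(s_j,\,s_j+n_j-1)$, with persistence exactly $n_j-1$. Hence every individual bar $(b,d)\in\mathcal{B}_0^{ext}$ already reveals the size $n_j=(d-b)+1$ of its component --- no matching of bars to components and no global bookkeeping is needed. Since Theorem~\ref{thm: count-substucture} gives exactly $|\mathcal{B}_0^{ext}|=C$ such bars, one per component, $\mathbf{PH_{ext}}$ of the induced filtration measures the number of vertices in every connected component, which is the claim. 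This is the $0$-dimensional analogue of the cycle-length construction of Observation~\ref{lemma: cycle-length}, with ``a maximal run of vertices inside a component'' replacing ``a maximal run of edges along a cycle.''

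I expect the only delicate point to be the first one: justifying precisely that the $\mathcal{B}_0^{ext}$ pair of a component equals its vertex-wise $(\min f_G^{low},\ \max f_G^{up})$, i.e., that the cone apex $\alpha$ and its cone edges and triangles do not alter these $H_0$ pairings. This follows from the standard behaviour of extended persistence \cite{cohen2009extending} together with the union-find roots maintained in lines 3--4 of Algorithm~\ref{alg: extended-persistence}, but it is worth spelling out carefully; everything after it is elementary arithmetic.
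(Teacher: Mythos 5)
Your proposal is correct and takes essentially the same route as the paper's own proof: the paper likewise indexes vertices consecutively within each component, sets $f_G^{low}=f_G^{up}$ equal to that index with an $\varepsilon$-perturbation on edges for injectivity, and reads the component size off the persistence of the bar $[\min_{u\in\mathbf{CC}} f_G^{low}(u),\max_{u\in\mathbf{CC}} f_G^{up}(u)]$ in $\mathcal{B}_0^{ext}$. Your version merely spells out the arithmetic ($n_j=(d-b)+1$) and flags explicitly the fact about the form of the $\mathcal{B}_0^{ext}$ pairing that the paper takes for granted.
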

%\tamal{What do you mean by injective filtration? it is wrong to say this. A filtration is a functor from the category of poset (integers here) to the category of simplicial complexes. Perhaps you mean injective filtration function...then say it properly...I have corrected at one place, but you should correct everywhere.}
We investigate the case where no learning takes place, namely when the filtration values come from a random noise. We observe that even in such a situation some information is still encoded in the extended persistence barcodes with a probability that depends on the graph.

\begin{observation}
\label{lemma: longest-barcode}
For any graph $G$ where every edge belongs to some cycle and an extended filtration on it induced by randomly sampled vertex values $x_i \sim U([0,1])$,  $\mathbf{PH_{ext}}$ has a $H_1$ bar $[max_i(x_i),min_i(x_i)]$ with probability $\sum_{v \in V} \frac{1}{n} \frac{deg(v)}{n-1}=\frac{2m}{n(n-1)}$.
\end{observation}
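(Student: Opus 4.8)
The plan is to condition on which two vertices carry the global extreme values and then reduce the probabilistic claim to a deterministic statement about a single edge. Write $M=\max_i x_i$ and $m_0=\min_i x_i$, and let $v^*$ and $v_*$ be the (almost surely unique) vertices attaining them. Since $x_1,\dots,x_n$ are i.i.d.\ and continuously distributed, every ordering of $V$ by the $x_i$ is equally likely, so the unordered pair $\{v^*,v_*\}$ is uniform over the $\binom{n}{2}$ vertex pairs. Conditioning on $v^*=v$ and summing, $\Pr[\{v^*,v_*\}\in E]=\sum_{v\in V}\tfrac1n\cdot\tfrac{\deg(v)}{n-1}=\tfrac{2m}{n(n-1)}$, which is exactly the target probability. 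Hence it suffices to prove the deterministic equivalence: for distinct vertex values, $\mathbf{PH_{ext}}$ produces an $H_1$ bar equal to $[M,m_0]$ if and only if $e_0:=(v^*,v_*)\in E$.

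For the ``if'' direction I would use the hypothesis that every edge lies on a cycle: pick a cycle $\mathbf{C}\subseteq G$ through $e_0$, so that $\max_{v\in\mathbf{C}}x_v=M$ and $\min_{v\in\mathbf{C}}x_v=m_0$. By property~1 of Theorem~\ref{thm: count-substucture} and the behaviour of Algorithm~\ref{alg: extended-persistence}, the $\mathcal B_1^{ext}$ bars are indexed by a cycle basis, and the bar attached to a basis cycle $C$ is $[\max_{v\in C}x_v,\min_{v\in C}x_v]$ (the last edge of $C$ to enter the ascending index filtration carries value $\max_{v\in C}x_v$, and dually for the descending part). So it is enough to realise some basis cycle passing through both $v^*$ and $v_*$. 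Since $F_{up}(e_0)=m_0$ is the minimum upper-filtration value, $e_0$ is the last edge incident to $v_*$ to be processed in the upper filtration; as $\mathbf{C}\setminus e_0$ already connects $v^*$ to $v_*$ by then, $e_0$ is a positive upper edge, and the fundamental cycle the algorithm records for it contains both $v^*$ and $v_*$. Its bar has death $F_{up}(e_0)=m_0$ and birth $=\max$ of $F_{low}$ over that cycle $=M$ (the cycle passes through $v^*$), hence $[M,m_0]\in\mathcal B_1^{ext}$.

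For the ``only if'' direction I would argue contrapositively: if $[M,m_0]$ is a bar, the associated basis cycle $C$ contains both $v^*$ and $v_*$, and in the algorithm the bar is produced by matching an upper positive edge of value $m_0$ (necessarily one of the two edges of $C$ at $v_*$) with a lower tree edge of value $M$ (necessarily one of the two edges of $C$ at $v^*$); by inspecting the state of the link-cut tree at the step that processes this positive edge, one wants to conclude that these two edges must coincide, i.e.\ $e_0=(v^*,v_*)\in E$. This converse is the step I expect to be the main obstacle: the birth--death pairing in extended persistence is not determined edge-locally, so the argument must exploit the global invariant maintained by the spanning-forest/edge-swap procedure, and one has to handle ties among the $M$-valued (resp.\ $m_0$-valued) edges carefully, e.g.\ via the $\epsilon$-perturbed filtration used elsewhere in the paper. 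Given the equivalence, combining it with the probability computation above yields the stated probability.
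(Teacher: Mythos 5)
Your completed steps coincide with the paper's own proof: the paper computes $\Pr[(v_*,v^*)\in E]=\sum_{v}\frac1n\frac{\deg(v)}{n-1}$ by the same uniform-ordering argument, and then proves only the ``if'' direction, namely that when the edge $e_0=(v_*,v^*)$ is present it is handled by Algorithm~\ref{alg: extended-persistence} so that a bar $[\max_i x_i,\min_i x_i]$ appears. The genuine gap is exactly the step you flag as the main obstacle: your reduction requires the converse (bar $\Rightarrow$ edge) in order to read the displayed quantity as an exact probability, and that converse is false. Take $G=C_n$, a single $n$-cycle with $n\ge 4$: every edge lies on a cycle, $\mathcal{B}_1^{ext}$ has exactly one bar, and by the very mechanism you describe it is always $[\max_i x_i,\min_i x_i]$ (the unique positive upper edge is incident to the argmin vertex and its fundamental cycle is all of $C_n$, hence passes through the argmax vertex). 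So the bar occurs with probability $1$, while $\frac{2m}{n(n-1)}=\frac{2}{n-1}<1$. Hence the equivalence you propose cannot be established by any amount of care with the link-cut bookkeeping; the event ``$e_0\in E$'' is merely a sufficient condition, and the paper's argument (and the only thing the observation can deliver) is this one-sided guarantee, i.e.\ a lower bound on the probability of seeing the bar.

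A secondary issue in your ``if'' direction: the claim that $e_0$ is the \emph{last} edge incident to $v_*$ to be processed, and therefore positive, does not survive ties. All edges incident to $v_*$ share the upper value $\min_i x_i$, and under the paper's tie-breaking $F_{up}(u,v)=\min+\epsilon\cdot\max$ the edge $e_0$ has the \emph{largest} perturbed value among them, so it is processed first among $v_*$'s edges while $v_*$ is still isolated and is then negative. (The paper's appendix asserts positivity via a maximum-spanning-tree argument and glosses over the same tie.) The conclusion still holds, but only after a short case analysis: if $e_0$ is negative, it is the unique tree edge joining $v_*$ to the component of $G\setminus v_*$ containing $v^*$; since $e_0$ is not a bridge there is another edge from $v_*$ into that component, it is positive, and its fundamental cycle enters $v_*$ through $e_0$, so the paired birth value is again $\max_i x_i$ and the death value $\min_i x_i$. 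If you revise, prove the sufficiency with this case split and state the probability as a guarantee (``at least''), dropping the attempted converse.
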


Notice that for a clique, the probability of finding the bar with maximum possible persistence is $1$. It becomes lower for sparser graphs.

\begin{corollary}
\label{cor: longest-barcode}
In Observation \ref{lemma: longest-barcode}, assuming the bar $[max_i(x_i),min_i(x_i)]$ exists, the expected persistence of that bar $\mathbb{E}[|max_i(x_i)-min_i(x_i)|]$ goes to $1$ as $n \rightarrow \infty$.
\end{corollary}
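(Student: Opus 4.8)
The plan is to exploit a classical decoupling: for i.i.d.\ continuous random variables, the permutation that records the ordering of the samples is uniform and \emph{independent} of the order statistics themselves. Consequently, conditioning on a purely combinatorial event about which vertex realizes the minimum and which realizes the maximum cannot change the distribution of the range $\max_i x_i - \min_i x_i$, and the conditional expectation in the corollary reduces to an unconditional order-statistic computation.

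Concretely, I would set $x_1,\dots,x_n \sim U([0,1])$ i.i.d.\ and let $\pi \in S_n$ be the almost surely unique permutation with $x_{\pi(1)} < \dots < x_{\pi(n)}$. It is standard that $\pi$ is uniform on $S_n$ and independent of the vector of order statistics $(x_{(1)},\dots,x_{(n)})$. By the reasoning already established in Observation~\ref{lemma: longest-barcode}, the event ``the bar $[\max_i x_i, \min_i x_i]$ exists'' is exactly (given the hypothesis that every edge lies on a cycle) the event $\{(\pi(1),\pi(n)) \in E\}$, which is a function of $\pi$ alone; whereas the persistence $|\max_i x_i - \min_i x_i| = x_{(n)} - x_{(1)}$ is a function of the order statistics alone. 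These are therefore independent, so $\mathbb{E}[\,x_{(n)} - x_{(1)} \mid \text{bar exists}\,] = \mathbb{E}[x_{(n)} - x_{(1)}] = \mathbb{E}[x_{(n)}] - \mathbb{E}[x_{(1)}]$.

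Then I would finish with the routine evaluation: from $P(x_{(n)} \le t) = t^n$ one gets $\mathbb{E}[x_{(n)}] = \int_0^1 (1 - t^n)\,dt = \tfrac{n}{n+1}$, and from $P(x_{(1)} > t) = (1-t)^n$ one gets $\mathbb{E}[x_{(1)}] = \int_0^1 (1-t)^n\,dt = \tfrac{1}{n+1}$. Hence the conditional expected persistence equals $\tfrac{n}{n+1} - \tfrac{1}{n+1} = \tfrac{n-1}{n+1} \to 1$ as $n \to \infty$.

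The one place to be careful — the only real content beyond bookkeeping — is the independence step: one must verify that the conditioning event is genuinely $\sigma(\pi)$-measurable and so carries no information about the magnitudes $x_{(1)}, x_{(n)}$. Under the hypothesis of Observation~\ref{lemma: longest-barcode} this is exactly what that observation's argument shows, so no extra work is needed. I would also note in passing that the same computation gives the \emph{unconditional} expected range as $\tfrac{n-1}{n+1}$, making the conditioning immaterial to the limit, which is a small but reassuring sanity check.
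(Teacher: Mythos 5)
Your proposal is correct, and it reaches the limit through the same underlying quantity as the paper: the expected range of $n$ i.i.d.\ uniforms on $[0,1]$, which both arguments evaluate to $\frac{n-1}{n+1}\to 1$. The differences are twofold. First, the paper's appendix proof simply defines $X_n=|\max_i x_i-\min_i x_i|$ and computes the \emph{unconditional} expectation via a single $n!$-weighted iterated integral ("the $n!$ comes from symmetry"), saying nothing about why conditioning on the bar's existence can be ignored; you supply exactly that missing justification, invoking the standard fact that the rank permutation $\pi$ of i.i.d.\ continuous variables is uniform and independent of the order statistics, so that a $\sigma(\pi)$-measurable conditioning event leaves $\mathbb{E}[x_{(n)}-x_{(1)}]$ unchanged. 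One small caution: the Observation's proof only establishes that adjacency of the argmin and argmax is \emph{sufficient} for the bar, so calling the existence event "exactly" the adjacency event is slightly stronger than what is proved there; but this does not matter for your argument, since the persistence pairing is determined by the ordering of the filtration values alone, hence the existence event is $\sigma(\pi)$-measurable in any case and the independence step goes through. Second, you evaluate the expectation via the marginal CDFs, $\mathbb{E}[x_{(n)}]=\frac{n}{n+1}$ and $\mathbb{E}[x_{(1)}]=\frac{1}{n+1}$, rather than the paper's nested integral; this is more elementary bookkeeping, while the paper's route is more compact. Net effect: your version is a more careful (and arguably more complete) proof of the stated conditional claim, at the cost of a paragraph of extra argument.
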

\iffalse
\begin{proof}
We reduce the problem to the geometric problem of finding $\mathbb{E}[X_n]$ where $X=max_i |x_i - min_i x_i|$ for $n$ points on $[0,1]$.
\begin{align}
E[X_n]= n!\int_0^1 \int_0^{x_1}...\int_0^{x_{n-1}} (x_1-x_n) dx_n...dx_1= \\
n! \int_0^1 (\frac{x_1^n}{(n-1)!} - \frac{x_1^n}{n!}) dx_1= 
\frac{n-1}{n+1}
\end{align}
where the $n!$ comes from symmetry.

Therefore: $\lim_{n \rightarrow \infty} \mathbb{E}[X_n] = 1$
\end{proof}
\fi
What Corollary \ref{cor: longest-barcode} implies is that, for certain graphs, even when nothing is learned by the GNN filtration learning layers, the longest $\mathcal{B}_1^{ext}$ bar indicates that $n$ is large. This happens for graphs that are randomly initialized with vertex labels from the unit interval and occurs with high probability for dense graphs by Observation \ref{lemma: longest-barcode}. For large $n$, the empirical mean of the longest bar will have persistence near $1$. Notice that $\mathcal{B}_1^{ext}$ can measure this even though the number of
$H_1$ bars, $m-n+C$, could tell us nothing about $n$.

\section{Experiments}
\label{sec: experiments}
We perform experiments of our method on standard GNN datasets. We also perform timing experiments for our extended persistence algorithm, showing impressive scaling. Finally, we investigate cases where experimentally our method distinguishes graphs that other methods cannot, demonstrating how our method learns to surpass the WL[1] bound.

\subsection{Experimental Setup}
We perform experiments on a 48 core Intel Xeon Gold CPU machine with 1 TB DRAM equipped with a Quadro RTX 6000 NVIDIA GPU with 24 GB of GPU DRAM. %for GPU related experiments. 
%We use a x86-64 Linux OS with gcc 7.3.0, CUDA 11.3, Python 3.9, Pytorch Geometric 2.0.4, Pytorch 1.10, GUDHI 3.4.1, and sklearn 1.0.2.

Hyper parameter information can be found in Table \ref{tab: dataset-hyperparam}. For all baseline comparisons, the hyperparameters were set to their repository's standard values.  
%All hyperparameters are set to the standard numbers as in \cite{hofer2020graph}. 
In particular, all training were stopped at $100$ epochs using a learning rate of $0.01$ with the Adam optimizer. Vertex attributes were used along with vertex degree information as initial vertex labels if offered by the dataset. %Furthermore, for graph augmentation, for each augmentation we randomly choose from random walk sampling of length $10$, vertex dropping, feature masking and edge removal. Random walk was not used for the synthetic datasets. SVC was used as the evaluation protocol for contrastive learning with a $15\%$ held out test set and a $17$ fold cross validation performed on the remaining data. The number $17$ was chosen for a larger test set. The avgerage and std were taken over $5$ runs of the downstream evaluation protocol of contrastive learning. The $W_1$ regularization factor $\lambda$ was set to 1e-5. In our implementation, the method is very stochastic since the data augmentation is a random procedure and the extended persistence is very sensitive to changes in filtration values in practice albeit the stability theorem \cite{cohen2007stability}. In particular, when running the experiments, the dependencies should match our older versions exactly. See the Appendix Section \ref{sec: appendix-experimentaldependencies}.
We perform a fair performance evaluation by performing standard 10-fold cross validation on our datasets. %We evenly split our datasets into ten partitions, one partition is test and another is validation while the union of the remaining partitions are set to train. 
The lowest validation loss is used to determined a test score on a test partition. An average$\pm$standard deviation test score over all partitions determines the final evaluation score. 

The specific layers of our architecture for the neural network for our filtration function $f_G$ is given by one or two GIN convolutional layers, with the number of layers as determined by an ablation study. %A concatenation of the initial embedding along with the hidden representations, all passing through a MLP, results in a single scalar.  %The four readout functions have $k$ in Equation \ref{eq: rational-hat} set to 100 for a 400-dimensional graph representation for the negative log likelihood or categorial cross entropy loss.

\begin{table}[th] {
	%\setlength{\extrarowheight}{3pt}
	%\fontsize{20}{20}\selectfont
	%\footnotesize
	\centering
\begin{tabular}{
|p{2.0cm}||p{1.24cm}|p{1.24cm}|p{1.24cm}|p{1.24cm}|p{1.24cm}|p{1.5cm}|}
 \hline
 \multicolumn{7}{|c|}{\small Experimental Evaluation} \\
 \hline
 
  \scriptsize avg. acc. $\pm$ std. & \scriptsize {\sc DD} & \scriptsize {\sc PROTEINS} & \scriptsize {\sc IMDB-MULTI} & \scriptsize {\sc MUTAG} & \scriptsize {\sc PINWHEELS} & \scriptsize {\sc 2CYCLES} \\
 \hline
 %\\
 
 {{\scriptsize GFL}} &\small 75.2 $\pm$ \scriptsize 3.5 &\small 73.0 $\pm$ \scriptsize 3.0 & \small 46.7 $\pm $ \scriptsize 5.0& \small \textbf{\textcolor{gray}{87.2 $\pm$ \scriptsize 4.6}} & \small 100 $\pm$\scriptsize 0.0&\small 50.0 $\pm$\scriptsize 0.0
\\
 {\textbf{\scriptsize Ours+Bars}} &  \small {{75.5 $\pm$ \scriptsize 2.9}} & \small \textbf{\textcolor{gray}{74.9 $\pm$ \scriptsize 4.1}}& \textbf{\textcolor{gray}{\small 50.3 $\pm$ \scriptsize 4.7}} & \small  \textbf{88.3} $\pm$ \scriptsize \textbf{7.1} & \small \textbf{100} $\pm$\scriptsize \textbf{0.0}& \small{50} $\pm$ \scriptsize {0.0}\\
 {\textbf{\scriptsize Ours+bars+cycles}} &  \small \textbf{\textcolor{gray}{75.9 $\pm$ \scriptsize 2.0}} & \small \textbf{75.2 $\pm$ \scriptsize 4.1}& \textbf{\small 51.0 $\pm$ \scriptsize 4.6}   & \small  {86.8} $\pm$ \scriptsize {7.1} & \small \textbf{100} $\pm$\scriptsize \textbf{0.0}& \small\textbf{100} $\pm$ \scriptsize \textbf{0.0}\\
 {{\scriptsize GIN}} & \small 72.6$\pm$ \scriptsize 4.2
 &\small 66.5 $\pm$	\scriptsize 3.8
&  \small 49.8	$\pm$ \scriptsize 3.0 & \small 84.6	$\pm$ \scriptsize 7.9 & \small 50.0 $\pm$\scriptsize 0.0&\small 50.0 $\pm$\scriptsize 0.0
\\
 {{\scriptsize GIN0}} & \small 72.3	$\pm$ \scriptsize 3.6 & \small 67.5	$\pm$ \scriptsize 4.7 & \small 48.7	$\pm$ \scriptsize 3.7 & 
\small 83.5	$\pm$ \scriptsize 7.4 & \small 50.0 $\pm$\scriptsize 0.0&\small 50.0 $\pm$\scriptsize 0.0
 \\
 {{\scriptsize GraphSAGE}} & \small 72.6	$\pm$ \scriptsize 3.7 & \small 59.6	$\pm$ \scriptsize 0.2 & \small {50.0	$\pm$ \scriptsize 3.0} & 
\small 72.4	$\pm$ \scriptsize 8.1& \small 50.0 $\pm$\scriptsize 0.0&\small 50.0 $\pm$\scriptsize 0.0
 \\
 {{\scriptsize GCN}} & \small 72.7	$\pm$ \scriptsize 1.6 & \small 59.6	$\pm$ \scriptsize 0.2 & \small {50.0 $\pm$ \scriptsize 2.0} & 
\small 73.9	$\pm$ \scriptsize 9.3 & \small 50.0 $\pm$\scriptsize 0.0&\small 50.0 $\pm$\scriptsize 0.0
 \\
 
 {{\scriptsize GraphCL}}  & \small 65.4	$\pm$\scriptsize 12
& \small 62.5 $\pm$ \scriptsize 1.5 &\small 49.6$\pm$ \scriptsize 0.4 & \small 76.6 $\pm$ \scriptsize 26 & \small 49.0 $\pm$\scriptsize 8.0&\small \small 50.5$\pm$ \scriptsize 10\\

  {{\scriptsize InfoGraph}} &\small 61.5 $\pm$ \scriptsize 10 &\small 65.5 $\pm$ \scriptsize 12  & \small 40.0 $\pm $ \scriptsize 8.9 & \small 89.1 $\pm$ \scriptsize 1.0 & \small 50.0 $\pm$ \scriptsize 0.0 & \small 50.0 $\pm$ \scriptsize 0.0\\
 {{\scriptsize ADGCL}} &\small 74.8$\pm$ \scriptsize 0.7 &\small 73.2$\pm$ \scriptsize 0.3 & \small 47.4 $\pm $ \scriptsize 0.8 & \small 63.3$\pm$ \scriptsize 31 & \small 42.5 $\pm$ \scriptsize 19 & \small 52.5 $\pm$ \scriptsize 21
\\
{{\scriptsize TOGL}} &\small 74.7 $\pm$ \scriptsize 2.4 &\small 66.5 $\pm$ \scriptsize 2.5 & \small 44.7 $\pm $ \scriptsize 6.5 & \centering - & \small 47.0 $\pm$ \scriptsize 3& \small \textbf{\textcolor{gray}{54.4 $\pm$ \scriptsize 5.8}}
\\
{{\scriptsize Filt.+SUM}} &\small 75.0 $\pm$ \scriptsize 3.2 & \small 73.5$\pm$ \scriptsize 2.8 & \small 48.0 $\pm $ \scriptsize 2.9 & \small 86.7$\pm$ \scriptsize 8.0 & \small 51.0 $\pm$ \scriptsize 11 & \small 50.0 $\pm$ \scriptsize 0.0
\\
{{\scriptsize Filt.+MAX}} &\small 67.6$\pm$ \scriptsize 3.9 &\small 68.6$\pm$ \scriptsize 4.3 & \small 45.5 $\pm $ \scriptsize 3.1 & \small 70.3$\pm$ \scriptsize 5.4 & \small 48.0 $\pm$ \scriptsize 4.2& \small 50.0 $\pm$ \scriptsize 0.0
\\
{{\scriptsize Filt.+AVG}} &\small 69.5$\pm$ \scriptsize 2.9 &\small 67.2$\pm$ \scriptsize 4.2 & \small 46.7 $\pm $ \scriptsize 3.8 & \small 81.4$\pm$ \scriptsize 7.9 & \small 50.0 $\pm$ \scriptsize 13& \small 50.0 $\pm$ \scriptsize 0.0
\\
{{\scriptsize Filt.+SORT}} & \small\textbf{ 76.9$\pm$ \scriptsize 2.6} &\small 72.6 $\pm $ \scriptsize 4.6 & \small 49.0$\pm$ \scriptsize 3.6 & \small 85.6$\pm$ \scriptsize 9.2 & \small 51.0 $\pm$ \scriptsize 16& \small 50.0 $\pm$ \scriptsize 0.0
\\
{{\scriptsize Filt.+S2S}} &\small  69.0 $\pm$ \scriptsize 3.3 &\small 67.8 $\pm $ \scriptsize 4.6 & \small 48.7 $\pm$ \scriptsize 4.2 & \small 86.8 $\pm$ \scriptsize 7.1 & \small 51.0 $\pm$ \scriptsize 13& \small 50.0 $\pm$ \scriptsize 0.0
\\

 \hline
 \end{tabular}
 \caption{Average accuracy $\pm$ std. dev. of our approach (GEFL) with and without explicit cycle representations, Graph Filtration Learning (GFL), GIN0, GIN, GraphSAGE, GCN, ADGCL, GraphCL and TOGL and a readout ablation study on the four TUDatasets: {\sc DD}, {\sc PROTEINS}, {\sc IMDB-MULTI}, {\sc MUTAG} as well as the two Synthetic WL[1] bound and Cycle length distinguishing datasets. Numbers in bold are highest in performance; bold-gray numbers show the second highest. The symbol $-$ denotes that the dataset was not compatible with software at the time.}
 \label{table: TUDatasets-accuracy}
 }
 \end{table}

\begin{table}[h] {
	%\setlength{\extrarowheight}{3pt}
	%\fontsize{20}{20}\selectfont
	%\footnotesize
	\centering
\begin{tabular}{
|p{1.13cm}||p{1.24cm}|p{1.24cm}|p{1.24cm}|p{1.24cm}|p{1.24cm}|p{1.24cm}|p{1.24cm}|}
 \hline
 \multicolumn{8}{|c|}{\small 10-fold cross validation ablation study on {\sc OGBG-mol} datasets by ROC-AUC} \\
 \hline
 
  \scriptsize avg. score $\pm$ std. & \scriptsize \textbf{Ours+Bars} & \scriptsize \textbf{Ours+Bars} \scriptsize \textbf{+Cycles} & \scriptsize  Filt.+SUM & \scriptsize Filt.+MAX & \scriptsize Filt.+AVG & \scriptsize Filt.+SORT & \scriptsize Filt.+Set2Set \\
 \hline
 %\\
 
 {{\scriptsize molbace}} &  \small \textbf{\textcolor{gray}{80.0 $\pm$ \scriptsize 3.6}} & \small \textbf{81.6 $\pm$ \scriptsize 3.9} & \small {{79.7} $\pm$ \scriptsize 4.6}& {\small 71.9 $\pm$ \scriptsize 4.8}   & \small  {78.0} $\pm$ \scriptsize {3.0} &\small {78.4} $\pm$ \scriptsize {3.3} & \small {78.2} $\pm$ \scriptsize {3.6}\\
 
{{\scriptsize molbbbp}} &  \small {{78.0 $\pm$ \scriptsize 4.3}} & \small \textbf{81.9 $\pm$ \scriptsize 3.3}&\small {76.7 $\pm$ \scriptsize 4.9}& {\small 69.8 $\pm$ \scriptsize 8.7}   & \small  \textcolor{gray}{\textbf{78.5} $\pm$ \scriptsize {4.6} }&\small  {76.3} $\pm$ \scriptsize {4.3} & \small {78.0} $\pm$ \scriptsize {5.0}\\

%{{\scriptsize Filt.+SORT}} &\small  76.9$\pm$ \scriptsize 2.6 &\small 72.6 $\pm $ \scriptsize 4.6 & \small 49.0$\pm$ \scriptsize 3.6 & \small 85.6$\pm$ \scriptsize 9.2 & \small 51.0 $\pm$ \scriptsize 16& \small 50.0 $\pm$ \scriptsize 0.0
%\\

 \hline
 \end{tabular}
 \caption{Ablation study on readout functions. The average ROC-AUC $\pm$ std. dev. on the ogbg-mol datasets is shown for each readout function. Number coloring is as in Table \ref{table: TUDatasets-accuracy}}
 \label{table: mol-rocauc}
 }
 \end{table}
\subsection{Performance on Real World and Synthetic Datasets}
We perform experiments with the TUDatasets \cite{morris2020tudataset}, a standard GNN benchmark. We compare with WL[1] bounded GNNs (GIN, GIN0, GraphSAGE, GCN) from the PyTorch Geometric \cite{fey2019fast, paszke2019pytorch} benchmark baseline commonly used in practice as well as GFL\cite{hofer2020graph}, ADGCL~\cite{suresh2021adversarial}, and InfoGraph~\cite{sun2019infograph}, self-supervised methods. Self supervised methods are promising but should not surpass the performance of supervised methods since they do not use the label during representation learning. We also compare with existing topology based methods TOGL~\cite{horn2021topological} and GFL~\cite{hofer2020graph}. We also perform an ablation study on the readout function, comparing extended persistence as the readout function with the SUM, AVERAGE, MAX, SORT, and SET2SET~\cite{hu2019sets2sets} readout functions. The hyper parameter $k$ is set to the 10th percentile of all datasets when sorting for the top-$k$ nodes activations. We do not compare with \cite{gao2021topology} since its code is not available online. The performance numbers are listed in Table \ref{table: TUDatasets-accuracy}. We are able to improve upon other approaches for almost all cases. The real world datasets include {\sc DD}, {\sc MUTAG}, {\sc PROTEINS} and {\sc IMDB-MULTI}. {\sc DD}, {\sc PROTEINS}, and {\sc MUTAG} are molecular biology datasets, which emphasize cycles, while {\sc IMDB-MULTI} is a social network, which emphasize cliques and their connections. We use accuracy as our performance score since it is the standard for the TU datasets. %For MUTAG, we only trained for ce the dataset is so small. It is known that in contrastive learning, the larger the batch size the better the performance. We used batch sizes of 128, 256, 512 and 128 for DD, REDDIT-BINARY, IMDB-MULTI and MUTAG respectively. 

We also verify that our method surpasses the WL[1] bound, a theoretical property which can be proven, as well as can count cycle lengths when the graph is sparse enough, e.g. when the set of cycles is equal to the cycle basis. This is achieved by the two datasets {\sc PINWHEELS} and {\sc 2CYCLES}. See the Appendix Sections \ref{sec: specialcases} for the related experimental and dataset details. Both datasets are particularly hard to classify since they contain spurious constant node attributes, with the labels depending completely on the graph connectivity. This removal of node attributes is in simulation of the WL[1] graph isomorphism test, see \cite{weisfeiler1968reduction}. Furthermore, doing so is a case considered in \cite{li2020distance}. It is known that WL[1], in particular WL[2], cannot determine the existence of cycles of length greater than seven~\cite{arvind2020weisfeiler,furer2017combinatorial}. 

 Table \ref{table: mol-rocauc} shows the ablation study of extended filtration learning on the ogbg datasets \cite{hu2020ogb} {\sc ogbg molbace} and {\sc molbbbp}. We perform a 10 fold cross validation with the test ROC-AUC score of the lowest validation loss used as the test score. This is performed instead of using the train/val/test split offered by the {\sc ogbg} dataset in order to keep our evaluation methods consistent with the evaluation of the {\sc TUDatasets} and synthetic datasets. %\tamal{We perform an ablation study to keep our experiments fair.}}
 
 %\subsection{Experiments with Synthetic Datasets}
 %\label{sec: synthetic}
 From Section \ref{sec: specialcases}, we know that there are special cases where extended persistence can distinguish graphs where WL[1] bounded GNNs cannot. %We introduce, in addition, another case similar to the necklaces \cite{horn2021topological} dataset where two different lengthed cycles are to be distinguished.
 We perform experiments to show that our method can surpass random guessing whereas other methods achieve only $\sim$ 50\% accuracy on average, which is no better than random guessing. Our high accuracy is guaranteed on {\sc PINWHEELS} since such graphs are distinguished by counting bars through 0-dim standard persistence. Similarly, {\sc 2CYCLES} is guaranteed high accuracy when keeping track of cycles and comparing the variance of cycle representations since cycle lengths can be distinguished by a LSTM on different lengthed cycle inputs. Of course, a barcode representation alone will not distinguish cycle lengths. %the persistence, or filtration displacement, determined by bars in $\mathcal{B}_1^{ext}$ can distinguish cycle lengths as observed in \ref{lemma: cycle-length}.

 \section{Conclusion}
 We introduce extended persistence into the supervised learning framework, bringing in crucial global connected component and cycle measurement information into the graph representations. We address a fundamental limitation of MPGNNs, which is their inability to measure cycles lengths. Our method hinges on an efficient algorithm for computing extended persistence. This is a parallel differentiable algorithm with an $O(m \log n)$ depth $O(m n)$ work complexity and scales impressively over the state-of-the-art. The speed with which we can compute extended persistence makes it feasible for machine learning. Our end-to-end model obtains favorable performance on real world datasets. We also construct cases where our method can distinguish graphs that existing methods struggle with.

%\section*{Author Contributions}
%Authors of accepted papers are \emph{encouraged} to include a statement that declares the individual contribution of every author, especially when there are co-authors that made equal contributions to the research.
%You may adopt the \href{https://credit.niso.org/}{Contributor Roles Taxonomy (CRediT)} methodology for attributing contributions.
%Do not include this section in the version for blind review.
%This section does not count towards the page limit.

%\section*{Acknowledgements}

%The \LaTeX{} template of LoG 2022 is heavily borrowed from NeurIPS 2022.

%o not include acknowledgements in the version for blind review.
%If a paper is accepted, please place such acknowledgements in an unnumbered section at the end of the paper, immediately before the references.
%The acknowledgements do not count towards the page limit.

% For natbib users:
\bibliographystyle{unsrtnat}
\bibliography{reference}
% For bibLaTeX users:
% \printbibliography

\newpage
\appendix
\onecolumn
\section{Proofs}
\label{sec: appendix-proofs}
\begin{theorem}(Theorem \ref{thm: count-substucture})
\label{thm: count-substucture-appendix}

$\mathbf{PH_{ext}}(G)$ produces %\sout{four sets of barcodes}
four types of bars: $\mathcal{B}^{ext}_1, \mathcal{B}_{0}^{ext}, \mathcal{B}_{0}^{low}, \mathcal{B}_{0}^{up}$, s.t.

$|\mathcal{B}_1^{ext}|= \mathrm{dim}\, H_1=m-n+C$, 

$|\mathcal{B}_{0}^{ext}|=\mathrm{dim}\,H_{0}=C$, 

$|\mathcal{B}_{0}^{low}|=|\mathcal{B}_{0}^{upper}|=n-C$, 

where there are $C$ connected components and $\mathrm{dim}\,H_k$ is the dimension of the $k$th homology group s.t.:

1. the $H_1$ barcode comes from a cycle basis of $G$ which also constitutes a basis of its fundamental group,

2. $\mathrm{dim}\, H_1$ counts the number of chordless cycles when $G$ is outer-planar, and

3. there exists an injective filtration function where the union of the resulting barcodes is strictly more expressive than the histogram produced by the WL[1] graph isomorphism test.

\end{theorem}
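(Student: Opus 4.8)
The plan is to prove the cardinality formulas first and then the three numbered properties in turn.

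\emph{Cardinalities.} The Betti numbers are forced by the Euler characteristic: $n-m=\chi(G)=\dim H_0-\dim H_1$, and since $\dim H_0=C$ we get $\dim H_1=m-n+C$. For the barcodes I would recall the standard pairing structure of extended persistence (\cite{cohen2009extending,edelsbrunner2010computational}): every vertex and edge is paired. The vertex--edge pairs internal to the lower filtration form $\mathcal{B}_0^{low}$ and are in bijection with the $n-C$ edges of a maximal spanning forest $T_{low}$ of $G$ under $F_{low}$; symmetrically $|\mathcal{B}_0^{up}|=n-C$. The $C$ essential $H_0$ classes of the lower filtration (one per component) become vertex--vertex pairs across the upper filtration, so $|\mathcal{B}_0^{ext}|=C$; and the remaining $m-(n-C)=m-n+C$ essential $H_1$ classes become edge--edge pairs, so $|\mathcal{B}_1^{ext}|=m-n+C=\dim H_1$.

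\emph{Property 1.} The $m-n+C$ edges of $E\setminus T_{low}$ are chords of $T_{low}$; each chord $e$ determines the fundamental cycle $C_e$ (the unique cycle in $T_{low}\cup\{e\}$), and it is classical that $\{C_e\}$ is a basis of the cycle space $Z_1(G)=H_1(G)$, i.e.\ a cycle basis. These are, up to homology, exactly the $H_1$ representatives that extended persistence pairs with the chords; contracting each tree of $T_{low}$ to a point exhibits $G\simeq\bigvee_e S^1_e$, so $\pi_1(G)$ is free with the classes $\{C_e\}$ as a free basis. One genuine point to address is that Algorithm~\ref{alg: extended-persistence} maintains its spanning forest \emph{dynamically} (cut/link), so the cycles it literally records are fundamental cycles with respect to a changing forest; I would handle this by ordering the recorded cycles by processing order and observing that, expressed in the fixed basis $\{C_e^0\}$ of the \emph{initial} forest $E^{up}_{neg}$, the transition matrix is lower triangular with unit diagonal (a positive edge can enter the tree only once it has been processed), hence unimodular, so the recorded cycles still form a $\mathbb{Z}$-basis of $H_1(G)$ and therefore a cycle basis and a free basis of $\pi_1(G)$.

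\emph{Property 2.} First reduce to blocks: every cycle, and every chord of a cycle, lies inside a single $2$-connected block, so it suffices to count chordless cycles block by block and sum, noting $\sum_{\text{blocks }B}(|E(B)|-|V(B)|+1)=m-n+C=\dim H_1$. In a $2$-connected outer-planar block $B$, fix an outer-planar embedding (all vertices on the outer face); since no vertex lies in a bounded region, a cycle of $B$ is chordless \emph{iff} it bounds a bounded face, and the bounded faces of a $2$-connected plane graph number $|E(B)|-|V(B)|+1$ by Euler's formula. Combining gives $\#\{\text{chordless cycles}\}=\dim H_1$.

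\emph{Property 3.} For ``at least as expressive'', define an injective vertex filtration $f_G(v)=\phi(c(v))+\varepsilon_v$, where $c(v)$ is the (canonically named) stable WL[1] color of $v$, $\phi$ sends distinct colors to well-separated reals, and $\varepsilon_v$ are tiny distinct perturbations; only the multiset of values will matter. The barcode union determines the multiset $\{f_G(v):v\in V\}$, namely the birth values occurring in $\mathcal{B}_0^{low}\cup\mathcal{B}_0^{ext}$ (every vertex is born once; $n-C$ of these births die in $\mathcal{B}_0^{low}$ and the $C$ component-minima in $\mathcal{B}_0^{ext}$), from which the WL[1] color histogram is recovered by clustering the values into $\phi$-separated groups. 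Hence graphs with distinct WL[1] histograms get distinct barcode unions (one may alternatively invoke \cite{horn2021topological} here). For \emph{strictness}, take $G_1=C_3\sqcup C_3$ and $G_2=C_6$: color refinement assigns a single stable color to all vertices in both graphs and both have six vertices, so WL[1] cannot distinguish them, yet by the cardinality formula just proved $|\mathcal{B}_1^{ext}(G_1)|=2\neq 1=|\mathcal{B}_1^{ext}(G_2)|$ for \emph{every} injective filtration, in particular the one above. Therefore the barcode union under this filtration is strictly more expressive than the WL[1] histogram. I expect the main obstacle to be exactly this ``$\succeq$ WL[1]'' direction — making the color-based filtration well defined as a function of the isomorphism type and verifying that the barcode genuinely recovers the full color histogram — together with the care in Property 1 needed to pass from ``cycle basis'' to ``basis of $\pi_1$'' while accounting for the algorithm's dynamically updated spanning forest.
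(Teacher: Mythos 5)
Your proposal is correct and, for the cardinalities and properties 1--2, follows essentially the paper's route: the paper also counts bars by noting every vertex and every edge of the doubled filtration is paired ($n-C$ vertex--edge pairs in each of $\mathcal{B}_0^{low},\mathcal{B}_0^{up}$, $C$ vertex--vertex pairs, and $(2m-2(n-C))/2$ edge--edge pairs), also derives property 1 from the fundamental cycle basis of the spanning forest (your unit-triangular change-of-basis argument makes explicit what the paper states as ``each returned cycle has a unique fundamental cycle as a summand,'' and the paper then cites Hatcher for the passage to $\pi_1$), and also proves property 2 with Euler's formula on an outer-planar embedding --- though your version is slightly more complete, since you verify both directions (chordless $\Leftrightarrow$ bounded face, after reducing to $2$-connected blocks), whereas the paper only argues that each interior face is chordless. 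The genuine divergence is property 3: the paper disposes of it in three lines by citing \cite{horn2021topological} (0-dimensional barcodes are strictly more expressive than WL[1]) and observing that $\mathcal{B}_0^{low}\cup\mathcal{B}_0^{ext}$ subsumes the 0-dimensional barcode information (ordinary plus essential bars), while you construct the WL-color-based injective filtration from scratch, show the births of $\mathcal{B}_0^{low}\cup\mathcal{B}_0^{ext}$ recover the color histogram, and get strictness from $|\mathcal{B}_1^{ext}|$ on $C_3\sqcup C_3$ versus $C_6$. Your route is self-contained and yields strictness from the 1-dimensional bars alone (independent of the learned filtration), at the cost of the canonicity bookkeeping you flag; the paper's route is shorter but inherits its strictness entirely from the external result on 0-dimensional persistence. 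Both are valid proofs of the statement.
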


\begin{proof}
There are $n$ bars with vertex births since every vertex creates exactly one connected component. The number of these bars which are in $\mathcal{B}_{0}^{ext}$ is $C$, which counts the number of  global connected components. In other words, $\mathcal{B}_{0}^{ext}=\dim(H_0)=C$. Thus, we have $n-C=|\mathcal{B}_{0}^{low}|=|\mathcal{B}_{0}^{upper}|$. 

Considering all $2m$ edges on the extended filtration, every edge gets paired. Furthermore, $n-C$ of the edges in the lower filtration are negative edges paired with vertices that give birth to connected components. Similarly there are $n-C$ edges paired with vertices in the upper filtration. We thus have $\frac{2m-2(n-C)}{2}$ edge-edge pairings in $\mathcal{B}_1^{ext}$ because every edge gets paired. Thus, $|\mathcal{B}_1^{ext}| = m-n+C$. Since each bar in $\mathcal{B}_1^{ext}$ counts a birth of a 1-dimensional homological class which together span the 1-dimensional homological classes in $H_1$, we have that $\dim H_1=|\mathcal{B}_1^{ext}|$.

1. All cycle representatives found by the algorithm are a symmetric difference of cycles from a \emph{fundamental} cycle basis, or cycle basis induced from a spanning forest. This follows since the link and cut operations in our algorithm correspond to cycle additions in extended persistence computation. Furthermore, each cycle in the returned cycle representatives is independent because it has a unique cycle from the fundamental cycle basis as a summand. Also, there are $m-n+1$ returned cycle representatives, same as the fundamental cycle basis. Thus the cycle representatives form a cycle basis.

Any cycle basis generates the fundamental group and $H_1(G,\mathbb{Z}_2)$ homology group of graph $G$~\cite{merkulov2003hatcher}%This follows from the discussion above. %Since negative edges from the upper filtration form a maximum spanning forest and pairings are determined by positive edges from the upper filtration and some edge on the dynamically maintained spanning forest, our barcodes come from a cycle basis. A modification of the extended persistence algorithm could theoretically generate all the possible cycles on $G$. 

%By \cite{eppstein2002dynamic} a spanning forest and its complementary edges generates the fundamental group by an application of Seifert Van-Kampen.

2. By Euler's formula, we have $n-m+F=C+1$ for planar graphs where $F$ is the number of faces of the planar graph as embedded in $\mathbb{S}^2$. For outer planar graphs, since $F-1$ interior faces lie on one hemisphere of $\mathbb{S}^2$ and one exterior face covers the opposite hemisphere, each interior face must be a chordless cycle. 

3. This follows directly by the result in \cite{horn2021topological} stating that 0-dimensional barcodes are more expressive than the WL[1] graph isomorphism test. In extended persistence, $\mathcal{B}_0^{low}$ and $\mathcal{B}_0^{ext}$ are computed. Since all bars in $\mathcal{B}_0^{ext}$ correspond to infinite bars denoted $\mathcal{B}_0^{\infty}$ in the $0$-dimensional standard persistence,
we have that $\mathcal{B}_0^{low}$ and $\mathcal{B}_0^{ext}$ carry at least the same amount of information as a 0-dimensional barcode as determined by $\mathcal{B}_0^{low}$ and $\mathcal{B}_0^{\infty}$. 

%We assign vertex values to an arbitrary outerplanar graph $G$ with $n$ vertices and $m$ edges.

%First index the nodes from $0$ to $n-1$. Then sort the vertices by vertex degree, breaking ties arbitrarily so that we get the sequence $(d_{\sigma(i)})_i$ where $i$ is the sorting index and $\sigma$ is a permutation so that $\sigma(i)$ is the original index of sorted vertex index $i$.  

%Let $F_{\sigma(i)} := i$.  

%If a graph is outer planar, its dual graph is a forest. We thus need to check that we can find a negative edge spanning tree from the upper filtration and subsequently a pairing of positive edges from the upper filtration with the negative edges from the lower filtration on   
\end{proof}

\begin{observation}(Observation \ref{lemma: cycle-length})
\label{lemma: cycle-length-appendix}
For any graph $G$ and chordless cycle $\mathbf{C} \subset G$, there exists injective filtration functions $f_G^{low}, f_G^{up}$ on $G$ where $\mathbf{PH_{ext}}$ of the induced filtration for extended persistence can measure the number of edges along $\mathbf{C}$.
\end{observation}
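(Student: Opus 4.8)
The plan is to engineer the two filtration functions so that the chordless cycle $\mathbf{C}$ is forced to be one of the fundamental cycles that Algorithm~\ref{alg: extended-persistence} enumerates in its positive-edge loop, and then to read off the number of edges of $\mathbf{C}$ directly from the length of the cycle representative that gets appended to $\mathcal{C}$. Write $\mathbf{C}=v_1v_2\cdots v_kv_1$, so $\mathbf{C}$ has $k$ edges. For the lower filtration I would simply take $f_G^{low}$ to be any injective function on $V$ (the lower filtration only affects the \emph{birth} values in $\mathcal{B}_1^{ext}$, not cycle lengths). For the upper filtration I would use a scale separation: give the $k$ vertices of $\mathbf{C}$ pairwise-distinct values in a high band and give every other vertex a pairwise-distinct value in a low band, with the gap between the bands chosen large enough that, after the fixed $\epsilon$ tie-breaking on edges, every edge with both endpoints on $\mathbf{C}$ gets a strictly larger $F_{up}$ value than every edge incident to a vertex outside $\mathbf{C}$. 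The one place the hypothesis that $\mathbf{C}$ is chordless is used is here: chordlessness says that the edges with both endpoints on $\mathbf{C}$ are exactly the $k$ edges of $\mathbf{C}$.

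Next I would trace through Algorithm~\ref{alg: extended-persistence}. Its initial forest $T$ is the maximum spanning forest of $G$ with respect to $F_{up}$ (the negative edges of $\textsc{PH}_0$ on the upper filtration, processed in decreasing $F_{up}$ order). Since the $k$ edges of $\mathbf{C}$ dominate all other edges in $F_{up}$, they are processed first; any proper subset of them is a disjoint union of paths, hence acyclic, so the first $k-1$ are linked into $T$ and only the last one---call it $e^{\star}$, the $\mathbf{C}$-edge of minimum $F_{up}$---closes a cycle and is declared positive. Thus $T$ contains the spanning path $\mathbf{C}\setminus\{e^{\star}\}$ of $\mathbf{C}$'s vertices, and, because (by chordlessness) every other positive edge touches a vertex outside $\mathbf{C}$ and hence has smaller $F_{up}$, $e^{\star}$ is the \emph{first} edge handled in the positive-edge loop (lines 10--16). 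At that iteration $T$ is still the initial maximum spanning forest --- no $\textsc{Cut}$/$\textsc{Link}$ has run yet --- so the fundamental cycle of $e^{\star}$ is $e^{\star}$ together with the unique $T$-path between its endpoints; that path is forced to be $\mathbf{C}\setminus\{e^{\star}\}$ because those $k-1$ edges already lie in $T$ and connect the endpoints and forest paths are unique. Hence the enumerated fundamental cycle is exactly $\mathbf{C}$.

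Finally I would conclude: on this iteration line~13 appends to $\mathcal{C}$ the $F_{up}$-activations of the $k-1$ edges of $\mathbf{C}\setminus\{e^{\star}\}$, so this representative has $k-1$ entries, and adding back the closing edge $e^{\star}$ recovers $|\mathbf{C}|=k$; equivalently, $\mathbf{PH_{ext}}$ measures the number of edges of $\mathbf{C}$. If one prefers $k$ to surface as a bar endpoint rather than as a list length, I would additionally specialize $f_G^{low}$ by setting $f_G^{low}(v_i)=i$ along $\mathbf{C}$ and $f_G^{low}(u)>k$ off it; then $\textsc{ArgmaxReduceCycle}$ returns the $\mathbf{C}$-edge of $F_{low}$-value $k+\epsilon(k-1)$, so the corresponding bar $(b,d)\in\mathcal{B}_1^{ext}$ has $b=k+\epsilon(k-1)$ and $k=\lfloor b\rfloor$ once $\epsilon<\tfrac{1}{k-1}$.

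The step I expect to be the real obstacle is making sure that the cycle the algorithm actually enumerates is \emph{this} cycle $\mathbf{C}$, and not some shorter cycle obtained by rerouting through $G\setminus\mathbf{C}$ or through a chord of $\mathbf{C}$; controlling this is precisely what the chordlessness hypothesis, the uniqueness of paths in a spanning forest, and the scale-separated ordering (which guarantees $\mathbf{C}\setminus\{e^{\star}\}\subseteq T$ and that $e^{\star}$ is processed before any $\textsc{Cut}$/$\textsc{Link}$ perturbs $T$) are there to provide. A minor bookkeeping point is that the constant $\epsilon$ in the edge tie-breaking rule is small but fixed, not free, so the band gap in $f_G^{up}$ must be taken large relative to $\epsilon$ times the largest value used; this is routine and does not affect injectivity of either function.
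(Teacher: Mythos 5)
Your proposal is correct and follows essentially the same route as the paper's proof: both engineer the filtrations so that the chordless cycle's vertices dominate the upper filtration, forcing the $k-1$ largest cycle edges into the maximum spanning forest and leaving exactly one positive cycle edge, which is processed first so that its fundamental cycle is $\mathbf{C}$ itself, and then the length is read off from the resulting $H_1$ pairing. The only (cosmetic) difference is the readout: the paper labels the cycle vertices $n-k,\dots,n-1$ so the bar $[n-1,\,n-k]$ encodes $k$ via its persistence, whereas you read $k$ from the cycle representative's length or from $\lfloor b\rfloor$ after assigning $1,\dots,k$ along $\mathbf{C}$ in the lower filtration.
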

\begin{proof}
Number the vertices of the cycle $\mathbf{C}$ of length $k$ in descending order and counter clockwise as $n-1 ...n-k$. For each vertex $u\in \mathbf{C}$, set $f_G^{low}(u)= f_G^{up}(u)$ to be the index of $u$ in the vertex numbering, the lower and upper filtration value on the nodes. For the other vertices, assign arbitrary different values less than $n-k$. 
The edge values are then assigned $f_G^{up}(u,v)= \min(f_G^{up}(u),f_G^{up}(v))$ and $f_G^{low}(u,v)= \max(f_G^{low}(u),f_G^{low}(v))$. Apply $\varepsilon$-perturbation to make the filtration functions $f_G^{low}, f_G^{up}$ injective and to force exactly one edge on the cycle to be positive. Every edge is either positive or negative and all negative edges are on a spanning forest.

In particular, for vertex $u$ and all its incident edges of same upper filtration value, one can subtract different $\varepsilon \in \mathbb{R}^+ $ from each edge to impose an order amongst edges with the same value from $f_G^{up}$.
Similarly, for $f_G^{low}$ subtract different $\varepsilon \in \mathbb{R}^+$ to each edge to break ties.
For the edges on $\mathbf{C}$, do not subtract an $\varepsilon$ in order to force each node $i$ on $\mathbf{C}$ to pair with the largest edge: $(i-1,i)$ of filtration value $i$. Since there is a tie for the edges in $\mathcal{C}$ incident to node $n-k$ in the upper filtration, set the edge $f_G^{up}(n-k, n-k+1):= n-k+\varepsilon$. This ensures that edge $(n-k,n-k+1)$ is negative and $(n-1,n-k)$ is positive in the upper filtration since the edges with larger filtration values are paired, or made negative, first.
Similarly, there is a tie for the edges in $\mathcal{C}$ incident to node $n-1$ in the lower filtration. Set $f_G^{low}(n-1,n-k):= n-1+\varepsilon$. This ensures that edge $(n-1,n-2)$ is positive in the lower filtration and $(n-1,n-k)$ is negative in the lower filtration. The resulting filtration functions $f_G^{low}$ and $f_G^{up}$ are injective. Furthermore, we then get that every edge on the cycle $\mathbf{C}$ except one: $(n-1,n-k)$, a positive edge, becomes negative in the upper filtration and thus belongs to the negative spanning forest of the upper filtration. The positive edge of smallest value in the upper filtration is edge $(n-1,n-k)$. The extended persistence algorithm, after computing $\mathcal{B}_{0}^{low}$ and $\mathcal{B}_{0}^{up}$, pairs the edge $e=(n-1,n-k)$ with the edge having maximum value in the lower filtration in the cycle $\mathbf{C}$ that $e$ forms with the spanning forest. This paired edge is $(n-1,n-2)$ and has lower filtration value $n-1$. We thus have the bar $[n-1,n-k]$ which encodes the length $k$ of the cycle $\mathbf{C}$.

\end{proof}

\begin{observation} (Observation \ref{lemma: cc-length})
\label{lemma: cc-length-appendix}
For any graph $G$ and all connected components $\mathbf{CC} \subset G$, there exists injective filtration functions $f_G^{low}, f_G^{up}$ defined on $G$ where $\mathbf{PH_{ext}}$ of the induced filtration for extended persistence can measure the number of vertices in $\mathbf{CC}$.  
\end{observation}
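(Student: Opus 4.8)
The plan is to mirror the construction used in the proof of Observation~\ref{lemma: cycle-length-appendix}, but now targeting the $\mathcal{B}_0^{ext}$ barcode rather than $\mathcal{B}_1^{ext}$. The key fact is that, as computed in lines~3--4 of Algorithm~\ref{alg: extended-persistence}, each connected component $\mathbf{CC}$ of $G$ contributes exactly one bar to $\mathcal{B}_0^{ext}$, namely $\bigl(\min_{u\in\mathbf{CC}} f_G^{low}(u),\ \max_{u\in\mathbf{CC}} f_G^{up}(u)\bigr)$: the component is born in the lower filtration at the smallest vertex value it contains and dies, in the upper filtration of the coned space, at the largest vertex value it contains. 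Hence, if $f_G^{low}$ and $f_G^{up}$ agree on vertices, the persistence of that component's $\mathcal{B}_0^{ext}$ bar equals $\max_{u\in\mathbf{CC}} f_G(u) - \min_{u\in\mathbf{CC}} f_G(u)$, and it only remains to choose vertex values so that this quantity records $|\mathbf{CC}|$.

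Concretely, I would enumerate the connected components $\mathbf{CC}_1,\dots,\mathbf{CC}_C$ with sizes $n_1,\dots,n_C$, put $s_j:=\sum_{i<j} n_i$, and assign to the $n_j$ vertices of $\mathbf{CC}_j$ the integers $s_j, s_j+1,\dots,s_j+n_j-1$ in an arbitrary order, letting $f_G^{low}(u)=f_G^{up}(u)$ be this assigned value. Since the blocks $[s_j, s_j+n_j-1]$ are pairwise disjoint and partition $\{0,1,\dots,n-1\}$, this vertex assignment is injective. Edge values are then forced by $f_G^{low}(u,v)=\max(f_G^{low}(u),f_G^{low}(v))$ and $f_G^{up}(u,v)=\min(f_G^{up}(u),f_G^{up}(v))$, after which a small $\varepsilon$-perturbation of the edge values (exactly as in Observation~\ref{lemma: cycle-length-appendix}) renders $f_G^{low}$ and $f_G^{up}$ injective without changing any vertex value.

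It then remains to read off the barcode. By Theorem~\ref{thm: count-substucture}, $|\mathcal{B}_0^{ext}|=C$, and the bar associated with $\mathbf{CC}_j$ is $(s_j,\ s_j+n_j-1)$, whose persistence is $n_j-1$; thus $|\mathbf{CC}_j|=n_j$ is recovered as (persistence of the bar)$\,+\,1$, simultaneously for all components. The main point requiring care is the claim that the $\mathcal{B}_0^{ext}$ endpoints depend only on the vertex values of the component and are untouched by the $\varepsilon$-perturbation of the edge weights: this holds because the connected components of $G$ — and hence the vertex grouping underlying line~4 of Algorithm~\ref{alg: extended-persistence} (computed via $U_{up}$) — are a property of $G$ alone, while within each group the birth is a $\min$ over $f_G^{low}$-values and the death a $\max$ over $f_G^{up}$-values of \emph{vertices}, none of which the edge perturbation alters. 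A minor edge case is a size-one component, which produces the zero-persistence bar $(s_j,s_j)$ correctly encoding $n_j=1$. I expect this verification — together with confirming, as in the cycle-length argument, that the $\varepsilon$-perturbation merely breaks ties among equal-weight edges and does not disturb the $0$-dimensional pairings — to be the only genuinely delicate part; everything else is a direct specialization of the construction already given for cycle lengths.
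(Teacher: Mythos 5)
Your construction is exactly the paper's own proof: assign consecutive integer indices to the vertices within each connected component (distinct across components), set $f_G^{low}=f_G^{up}$ to these indices, $\varepsilon$-perturb edge values to break ties, and read off each component's size as the persistence of its $\mathcal{B}_0^{ext}$ bar $[\min_{u\in\mathbf{CC}} f_G^{low}(u),\max_{u\in\mathbf{CC}} f_G^{up}(u)]$. Your extra bookkeeping (explicit block offsets $s_j$, the observation that the perturbation touches only edges, and the size-one component case) just spells out details the paper leaves implicit.
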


\begin{proof}
For each connected component $\mathbf{CC}$ in $G$, index the vertices in $\mathbf{CC}$ in consecutive order where indices in each connected component remain distinct. Then define $f_G^{low}(u)= f_G^{up}(u)$ equal to the index of $u$ in $G$. By some $\varepsilon$-perturbation, where we break ties amongst edges, we can make these two functions injective on the graph $G$. Since $\mathcal{B}_0^{ext}$ has each bar $[\min_{u \in \mathbf{CC}} f_G^{low}(u), \max_{u \in \mathbf{CC}} f_G^{up}(u)]$ and since all indices are consecutive, each bar's persistence in $\mathcal{B}_0^{ext}$ measures how many vertices are in the connected component they constitute.

\end{proof}

\begin{observation} (Observation \ref{lemma: longest-barcode})
\label{lemma: longest-barcode-appendix}
For any graph $G$ where every edge belongs to some cycle and an extended filtration on it is induced by randomly sampling vertex values $x_i \sim U([0,1])$,  $\mathbf{PH_{ext}}$ has the $H_1$ bar $[\max_i(x_i),\min_i(x_i)]$ with probability $\sum_{v \in V} \frac{1}{n} \frac{deg(v)}{n-1}=\frac{2m}{n(n-1)}$.
\end{observation}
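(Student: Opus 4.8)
The plan is to reduce the continuous statement to a purely combinatorial one about random vertex orderings, identify the combinatorial event that produces the bar $[\max_i x_i,\min_i x_i]$, and then compute the probability of that event. For the reduction: since the $x_i$ are i.i.d.\ continuous, they are almost surely distinct, and for any fixed total order $\sigma$ on $V$ one has $P\big(x_{\sigma(1)}<\cdots<x_{\sigma(n)}\big)=\int_{0<t_1<\cdots<t_n<1}dt_1\cdots dt_n=1/n!$, so the linear order they induce on $V$ is a uniformly random permutation. Every feature of $\mathbf{PH_{ext}}$ relevant here — which edges are positive or negative in the lower and upper filtrations, which edge along each fundamental cycle is $F_{low}$-maximal, and the $\varepsilon$-tie-breaking rule itself — depends on $(x_i)$ only through this order. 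Hence it suffices to relabel the vertices by a uniform bijection $V\to\{0,\dots,n-1\}$; writing $v$ for the vertex labelled $0$ (so $x_v=\min_i x_i$) and $u$ for the vertex labelled $n-1$ (so $x_u=\max_i x_i$), we must compute $P\big([n-1,0]\in\mathcal{B}_1^{ext}\big)$.

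Next I would show that this bar appears exactly when $u$ and $v$ are adjacent. Two elementary facts pin down the coordinates: the value $\min_i x_i$ occurs as an $F_{up}$-value only on edges incident to $v$, and $\max_i x_i$ occurs as an $F_{low}$-value only on edges incident to $u$. Consequently an $H_1$ bar equals $[\max_i x_i,\min_i x_i]$ iff it is created by a positive (upper-filtration) edge $e'$ incident to $v$ whose fundamental cycle $C$ passes through $u$. Because $G$ is bridgeless, $v$ always has at least one positive incident edge in the upper filtration (its incident edges, all carrying value $\approx x_v$, are processed last; bridgelessness forces two neighbours of $v$ to lie in the same component of $G-v$, so the second such edge closes a cycle), so a bar with death $\min_i x_i$ always exists and everything reduces to whether $u\in C$. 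If $u\sim v$: since $x_u$ is the global maximum, $(v,u)$ carries the largest $F_{up}$-value among the edges at $v$, so $\textsc{PH}_0$ processes it first among them, while $v$ is still isolated; hence $(v,u)$ is negative and enters the initial spanning forest $T$ as a pendant edge at $v$. Tracing the \textsc{Cut}/\textsc{Link} steps of Algorithm~\ref{alg: extended-persistence}, the edge $(v,u)$ — which carries the globally maximal $F_{low}$-value, hence is the output of \textsc{ArgMaxReduceCycle} whenever it sits on a processed cycle — survives until the positive edges at $v$ are handled, because no cycle closed earlier can pass through the pendant vertex $v$; so the last positive $v$-edge closes a cycle $C$ whose tree path begins $v-u-\cdots$, placing $u$ on $C$ and yielding birth $\max_i x_i$. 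Conversely, if $u\not\sim v$ one argues that the $F_{low}$-maximal edge on the fundamental cycle of any positive $v$-edge is incident to some vertex other than $u$, so its birth is strictly below $\max_i x_i$ and the bar in question does not arise.

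Finally, $P(u\sim v)$ is computed by conditioning on which vertex realizes the global maximum and noting that the global minimum is then uniform over the remaining $n-1$ vertices: $P(u\sim v)=\sum_{w\in V}P(u=w)\,P\big(v\in N(w)\mid u=w\big)=\sum_{w\in V}\frac1n\cdot\frac{\deg(w)}{n-1}=\frac{1}{n(n-1)}\sum_{w\in V}\deg(w)=\frac{2m}{n(n-1)}$. The main obstacle is the link–cut bookkeeping in the previous step: one must certify that, when $u\sim v$, the edge $(v,u)$ is never cut before $v$'s positive edges are processed, which rests on $v$ remaining pendant in $T$ — a property that must be extracted from the ``every edge lies on a cycle'' hypothesis and is genuinely subtle when $v$ is a cut vertex of $G$. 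The converse direction is delicate for the same reason, and it is there that one should be most careful that the stated probability is attained exactly rather than merely as a lower bound.
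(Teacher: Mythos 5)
Your opening reduction (uniformly random vertex order) and your closing computation $\sum_{v}\frac{1}{n}\cdot\frac{\deg(v)}{n-1}=\frac{2m}{n(n-1)}$ are exactly the paper's. The problems are in the middle step. First, the claimed equivalence ``the bar $[\max_i x_i,\min_i x_i]$ appears exactly when $u\sim v$'' is false, so the converse half of your plan cannot be carried out. Counterexample: a $4$-cycle $v\!-\!a\!-\!u\!-\!b\!-\!v$ with $x_v$ the global minimum and $x_u$ the global maximum. The unique positive edge of the upper filtration is one of the two edges at $v$; the cycle it closes with the spanning tree is the whole $4$-cycle, which passes through $u$, so the paired lower-filtration edge is incident to $u$ and the bar $[\max_i x_i,\min_i x_i]$ occurs even though $u\not\sim v$. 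This refutes precisely your sketched converse (``if $u\not\sim v$ the $F_{low}$-maximal edge on the fundamental cycle of any positive $v$-edge is incident to some vertex other than $u$''). The paper never attempts a converse: its proof establishes only the implication ``$(\min_i x_i,\max_i x_i)\in E$ implies the bar occurs,'' i.e.\ the stated probability is what the argument actually delivers as a guarantee; your closing caution that exactness might fail was the right instinct, but your plan asserts the equivalence rather than avoiding it.

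Second, your forward direction takes a genuinely different and more fragile route than the paper's. The paper argues that $e=(v,u)$, having the globally minimal upper-filtration value and lying on a cycle, is kept out of the maximum spanning forest, hence is positive; then the cycle $e$ closes contains its own endpoint $u$, so the paired edge has $F_{low}=\max_i x_i$ while the death is $F_{up}(e)=\min_i x_i$ --- two lines, no link--cut bookkeeping. You instead take $e$ to be negative (processed first among $v$'s equal-valued edges under the $\epsilon$ rule) and must then track the dynamic forest; which of the two situations occurs is purely a tie-breaking matter among the equal $F_{up}$ values at $v$, and the bar arises either way, but your version forces exactly the analysis you flag as unresolved. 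And there the ``pendant'' claim does break: if $v$ is a cut vertex, each component of $G-v$ meeting $N(v)$ contributes its own negative edge at $v$, so $v$ is not pendant in the forest. The argument is repairable --- cycles of positive edges not incident to $v$ stay inside a single component of $G-v$, positive $v$-edges into other components never place $(v,u)$ on their cycles, so $(v,u)$ is never cut, and bridgelessness supplies a positive $v$-edge into $u$'s component whose tree path must start with $(v,u)$ --- but this is precisely the step you leave open, whereas the paper's choice of the other tie-break makes the whole issue disappear.
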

\begin{proof}
Since the probability of finding a given permutation on $n$ vertices sampled uniformly at random without replacement is equivalent to the probability of a given order on the vertices sampled uniformly at randomly $n$ times, it suffices to find the probability of sampling uniformly at random without replacement two vertices that are connected with an edge in $G$.

%See the below reasoning: 

For a fixed $\sigma \in S_n$, a permutation from the group $S_n$ of permutations on $n$ vertices, we have:

\begin{eqnarray}
\label{eq: perm-prob}
\frac{1}{n!}&=& P(x_n<x_{n-1}<...<x_1, x_i \sim U([0,1])) \nonumber \\
&=&\int_0^1 \int_0^{x_1}...\int_0^{x_{n-1}} dx_ndx_{n-1}...dx_1= P(\sigma \sim U(S_n))
\end{eqnarray}

Let $G=(V,E)$ be the graph with vertex values sampled from a uniform distribution. Let $G'=(V',E')$ be the same graph with vertex values in $\{0,1,\ldots,n-1\}$ sampled uniformly without replacement. We know that the probability for a given order on these vertices is the same for both graphs. In fact, the two node labelings are in bijection with each other. By the law of total probability and with Equation \ref{eq: perm-prob}: 

\begin{align*}
 &   P((\min_i x_i, \max_i x_i) \in E, x_i \sim U([0,1]) )\\
& = \sum_{v \in V} (P(v=\max_i x_i, x_i \sim U([0,1])) \cdot 
P(\min_i x_i \in Nbr(v) | v= \max_i x_i, x_i \sim U([0,1])))\\
& =\sum_{v \in V} (n-1)!\int_0^1 \int_0^{x_1}...\int_0^{x_{n-1}} dx_ndx_{n-1}...dx_1 \cdot 
deg(v) (n-2)!\int_0^{1} \int_0^{x_2}...\int_0^{x_{n-1}} dx_ndx_{n-1}...dx_2 \\
& = \sum_{v \in V'} (P(v=n-1) \cdot P(0 \in Nbr(v) | v= n-1)) = P((n-1,0) \in E')\\
& = \sum_{v \in V'} \frac{1}{n} \frac{deg(v)}{n-1}\\
\end{align*}

We now show that if $(\min_i x_i, \max_i x_i)$ occurs as an edge in $G=(V,E)$, where every edge belongs to some cycle, then the bar $[\max_i x_i, \min_i x_i]$ is guaranteed to occur. 

The spanning tree comprised of negative edges that begins the computation for $\mathcal{B}_1^{ext}$ as in line 6 of Algorithm \ref{alg: extended-persistence} for the $H_1$ barcode computation is a maximum spanning tree. This is because the negative edges are just those found by the Kruskal's algorithm for the 0-dimensional standard persistence applied to an upper filtration. Since $e=(\min_i x_i, \max_i x_i)$ has value $\min_i x_i$ in the upper filtration and since every edge belongs to at least one cycle, it cannot be in the maximum spanning tree. 
%This follows by contradiction: if $e$ were in the maximum spanning tree, then we could replace it with a larger edge from its cycle. 
Thus $e$ is a positive edge.

%There is only one edge incident to the vertex $v \in V$ with $v:=min_i x_i$ that is negative and thus part of the initial max spanning forest of the extended persistence algorithm. The rest of the edges incident to $v$ are positive in the upper filtration. This is because $v=min_i x_i$ is the last vertex to be added to the upper filtration and thus only one edge incident to it is needed to create a spanning forest. Since $e:=[min_i x_i, max_i x_i]$ is an edge in $G$, we have two possibilities:

Since $e$ is positive in the upper filtration, %by the existence of the negative spanning forest,
it will be considered at some iteration of the for loop in line 10 of Algorithm \ref{alg: extended-persistence}. When we consider it, it will form a cycle $\mathbf{C}$ with the dynamically maintained spanning forest. To form a persistence $H_1$ bar for $e$, we pair it with the maximum edge in the cycle $\mathbf{C}$ in the lower filtration. This forms a bar $[\max_i x_i, \min_i x_i]$.
%there exists a path from vertex with value $\min_i x_i$ to vertex with value $\max_i x_i$ in the negative spanning tree comprising of negative edges. Thus $e$ and the negative spanning forest form a cycle. The edge $e$ is the first edge to be considered in line 12 of Algorithm \ref{alg: extended-persistence}. Thus, we must pair the max edge $e'= (2ndmax_i x_i , \max_i x_i)$ on the cycle with respect to the lower filtration with $e$ as in line 19 of Algorithm \ref{alg: extended-persistence}. This gives us $[\max_i x_i, \min_i x_i]$ as a bar.

%2. $e$ is negative in the upper filtration: Since $e$ is the only negative edge incident to $v$, any cycle involving $v$ must pass through $e$. Since the birth value is the $max_{(i,j) \in C} max(x_i,x_j)$ s.t. $x_i$ on cycle $C$ passing through $v$, we must have barcode $[max_i x_i, min_i x_i]$ where the birth value takes on the max of the values on the vertices of $e$.

\end{proof}

\begin{corollary}
\label{cor: longest-barcode-appendix}
In Observation \ref{lemma: longest-barcode}, assuming the bar $[max_i(x_i),min_i(x_i)]$ exists, the expected persistence of that bar, $\mathbb{E}[|max_i(x_i)-min_i(x_i)|]$, goes to $1$ as $n \rightarrow \infty$.
\end{corollary}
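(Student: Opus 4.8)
The plan is to reduce the corollary to a purely geometric computation about the expected range of $n$ i.i.d.\ uniform samples on $[0,1]$, exactly as the (commented-out) proof sketch in the excerpt suggests. The conditioning hypothesis — that the bar $[\max_i x_i,\min_i x_i]$ exists — is handled by Observation~\ref{lemma: longest-barcode}: whenever it exists, its endpoints are precisely the global maximum and global minimum of the sampled vertex values $\{x_i\}_{i=1}^n$. Hence the conditional persistence equals $\max_i x_i - \min_i x_i$, and I need to show $\mathbb{E}[\max_i x_i - \min_i x_i] \to 1$ as $n\to\infty$.

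First I would invoke linearity of expectation: $\mathbb{E}[\max_i x_i - \min_i x_i] = \mathbb{E}[\max_i x_i] - \mathbb{E}[\min_i x_i]$. For $n$ i.i.d.\ $U([0,1])$ variables, the maximum is a $\mathrm{Beta}(n,1)$ random variable with mean $\tfrac{n}{n+1}$, and by symmetry ($x_i \mapsto 1-x_i$) the minimum has mean $\tfrac{1}{n+1}$. Therefore the expected range is $\tfrac{n}{n+1} - \tfrac{1}{n+1} = \tfrac{n-1}{n+1}$, matching the formula in the sketch. Alternatively, one can compute $\mathbb{E}[X_n]$ directly via the $n!$-fold symmetric integral $n!\int_0^1\!\int_0^{x_1}\!\cdots\!\int_0^{x_{n-1}} (x_1 - x_n)\,dx_n\cdots dx_1$, which evaluates to $\tfrac{n-1}{n+1}$; either route is a one-line calculation.

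Finally, taking the limit, $\lim_{n\to\infty} \tfrac{n-1}{n+1} = 1$, which is the claim. There is essentially no obstacle here: the only substantive point is the correct identification of the conditional bar endpoints with the sample extremes, and that is already supplied by Observation~\ref{lemma: longest-barcode}; everything after that is the standard order-statistics fact that the expected range of $n$ uniforms is $\tfrac{n-1}{n+1}$. If one wanted to be careful, I would note that the conditioning event and the value $\max_i x_i - \min_i x_i$ are being treated as independent for the purpose of the stated expectation (i.e.\ the corollary reports $\mathbb{E}[\max_i x_i - \min_i x_i]$ over all samples, under the stated assumption that the bar is present), so no extra conditioning bias needs to be argued — but this is a modeling remark, not a mathematical difficulty.
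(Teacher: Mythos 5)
Your proposal is correct and follows essentially the same route as the paper: both reduce the claim to the expected range of $n$ i.i.d.\ uniform samples on $[0,1]$, evaluate it as $\tfrac{n-1}{n+1}$ (the paper via the $n!$-fold symmetric integral you also mention, you additionally via the $\mathrm{Beta}(n,1)$ order-statistic means), and take the limit to get $1$.
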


\begin{proof}
Define the random variable $X_n=|\max_i x_i - \min_i x_i|$ for $n$ random points drawn uniformly from $[0,1]$. We find $\lim_{n \rightarrow \infty} \mathbb{E}[X_n]$. The following sequence of equations follow by repeated substitution.
\begin{align*}
\mathbb{E}[X_n]= n!\int_0^1 \int_0^{x_1}...\int_0^{x_{n-1}} (x_1-x_n) dx_n...dx_1 \\=
n! \int_0^1 (\frac{x_1^n}{(n-1)!} - \frac{x_1^n}{n!}) dx_1= 
\frac{n-1}{n+1}
\end{align*}
where the $n!$ comes from symmetry.

Therefore: $\lim_{n \rightarrow \infty} \mathbb{E}[X_n] = 1$.
\end{proof}

 \newpage \section{Demonstrating the Expressivity of Learned Extended Persistence}
 \label{sec: specialcases}
 We present some cases where the classification performance of our method excels. We look for graphs that cannot be distinguished by WL[1] bounded GNNs. We find that pinwheeled cycle graphs and varied length cycle graphs can be perfectly distinguished by learned extended persistence and, in practice, with much better performance than random guessing using our model. See the experiments Section \ref{sec: experiments} to see the empirical results for our method against other methods on this synthetic data. 
 
 \subsection{Pinwheeled Cycle Graphs (The PINWHEELS Dataset)}
 \label{sec: pinwheeled}
 \begin{figure}[h]
    \begin{minipage}[b]{0.45\linewidth}
    \centering
    \includegraphics[width=0.5\textwidth]{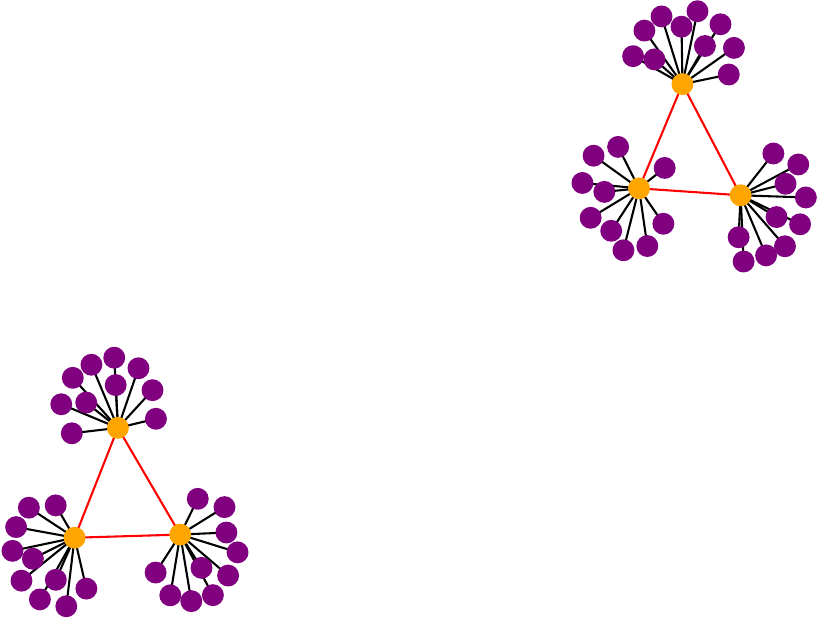}
    \caption{Class 0: 2 triangles with pinwheel at each vertex.}
    \label{fig: pinwheel1}
    \end{minipage}
    \hspace{0.5cm}
    \begin{minipage}[b]{0.45\linewidth}
    \centering
    \includegraphics[width=0.5\textwidth]{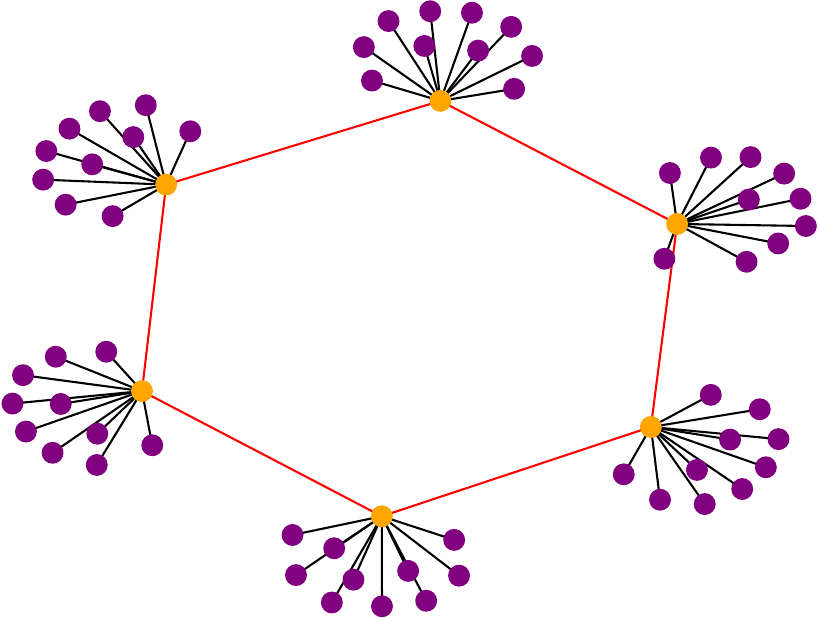}
    \caption{Class 1: A hexagon with pinwheel at each vertex.}
    \label{fig: pinwheel2}
    \end{minipage}
    \end{figure}
    
    We consider pinwheeled cycle graphs. To form the base skeleton of these graphs, we take the standard counter example to the WL[1] test of 2 triangles and 1 hexagon. We then append pinwheels of a constant number of vertices to the vertices of these base skeletons. The node attributes are set to a spurious constant noise vector. They have no effect on the labels.  %The purpose of the pinwheels is to vary the graphs, keep it WL[1] indistinguishable, and allow for contrastive learning with edge and vertex deletions serving as data augmentations to work well with the extended persistence.
    
    It is easy to check that both Class 0 and Class 1 graphs are indistinguishable by WL[1]; see Figures \ref{fig: pinwheel1} and \ref{fig: pinwheel2}. Notice that if there are 6 core vertices and edges in the base skeleton and if there are pinwheels of size $k$, then with edge deletions and vertex deletions composed, we have a $1-(\frac{6}{6k+6})^2$ probability of only deleting a pinwheel edge or vertex and thus not affecting $H_1$. This probability converges to 1 as $k \rightarrow \infty$. According to Theorem \ref{thm: count-substucture}, $\mathrm{dim}\, H_1$ measures the number of cycles and $\mathrm{dim}\, H_{0}$ measures the number of connected components. If neither of these counts are affected by training during supervised learning, our method is guaranteed to distinguish the two classes simply by counting according to Theorem \ref{thm: count-substucture}.
    
    Certainly the pinwheeled cycle graphs, are distinguishable by counts of bars. We check this experimentally by constructing a dataset of 1000 graphs of two classes of graph evenly split. Class 0 is as in Figure \ref{fig: pinwheel1} and involves two triangles with pinwheels of random sizes. Class 1 is as in Figure \ref{fig: pinwheel2} with a hexagon and pinwheels of random sizes attached. We obtain on average 100\% accuracy. This is confirmed experimentally in Table \ref{table: TUDatasets-accuracy}. This matches the performance of GFL \cite{hofer2020graph}, since counting bars, or Betti numbers, can also be done through 0-dim. standard persistence. Interestingly TOGL does not achieve a score of 100 accuracy on this dataset. We conjecture this is because their layers are not able to ignore the spurious and in fact misleading constant node attributes. %readout function, the deep set representation layer cannot remove the  
    
 \subsection{Regular Varied Length Cycle Graphs (The 2CYCLES dataset)}
 \label{sec: 2-cycles}
 \begin{figure}[h]
    \begin{minipage}[b]{0.5\linewidth}
    \centering
    \includegraphics[width=0.5\textwidth]{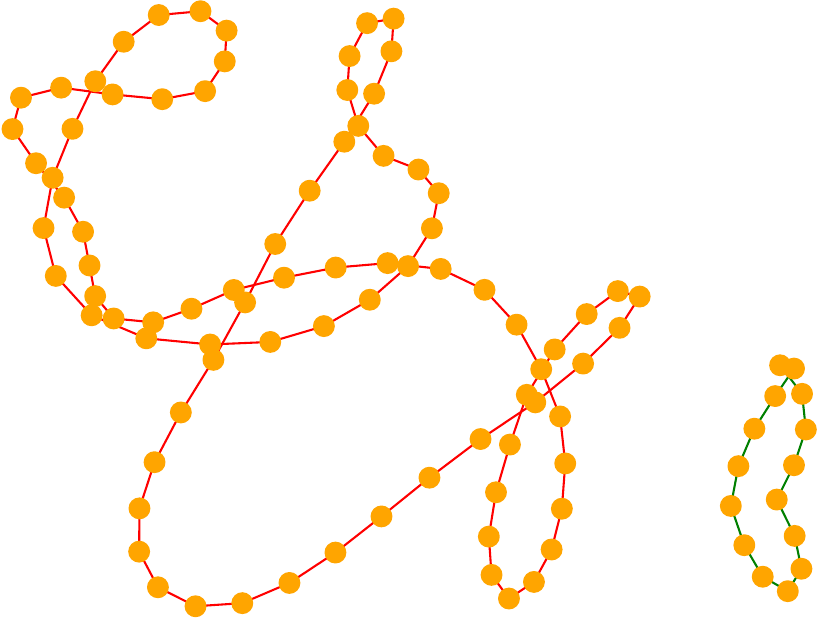}
    \caption{Class 0: A 15 node cycle and an 85 node cycle.}
    \label{fig: 2cycle1}
    \end{minipage}
     \hspace{0.5cm}
     \begin{minipage}[b]{0.5\linewidth}
     \centering
     \includegraphics[width=0.5\textwidth]{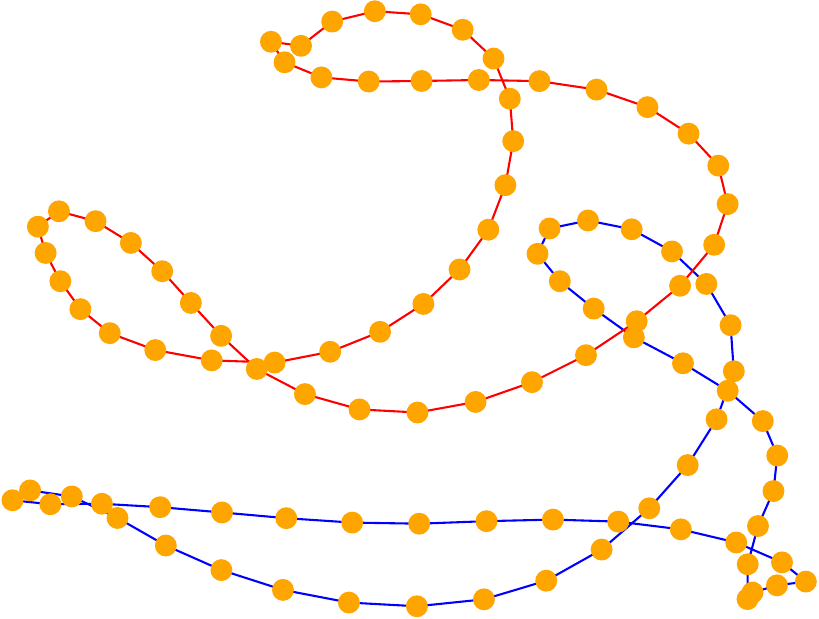}
     \caption{Class 1: A 50 node cycle with a 50 node cycle.}
     \label{fig: 2cycle2}
     \end{minipage}
    \end{figure}
 We further consider varied length cycle graphs. These are graphs that involve two cycles. Class 0 has one short and one long cycle while Class 1 has two near even lengthed cycles. The node attributes are all the same and spurious in this dataset. Extended persistence should do well to distinguish these two classes. We conjecture this based on Observation \ref{lemma: cycle-length}, which states that there is some filtration that can measure the length of certain cycles. 
 
 It is the path length, coming from Observation \ref{lemma: cc-length}, which is being measured.
 The $0$-dimensional standard persistence is insufficient for this purpose. The infinite bars of 0-dimensional standard persistence are determined only by a birth time. Furthermore, extended persistence without cycle representatives is also insufficient since a message passing GNN learns a constant filtration function over the nodes. However, with cycle representatives, or a list of scalar node activations per cycle for each graph, we can easily distinguish the average sequence representation since the pair of sequence lengths are different. In class 0, a short cycle and a long cycle are paired while in class 1, two cycles of medium lengths are paired. %Since only one filtration function can be learned for both classes, a single filtration value will not distinguish the cycles. On the other hand, to measure cycle or path length, one must measure at least a displacement, or absolute difference, of two filtration values such as birth and death times. In short, we need two degrees of freedom from the filtration to learn at least two classes when the node attributes are meaningless and can be assumed random.} % Extended persistence increases the degrees of freedom of the the $\mathcal{B}_{0}^{ext}$ and $\mathcal{B}_1^{ext}$ barcodes, respectively, giving crucial information to measure cycle and path lengths and thus distinguish classes in this dataset.% and thus distinguish regular varied cycle graphs.
 
 %For example, in Figures \ref{fig: 2cycle1} and \ref{fig: 2cycle2}, $PD_0$ will not distinguish the two classes shown since at best, . We have that by Theorem \ref{thm: chordless-cycle-length} there exists a filtration where class 0 results in exactly two barcodes from extended persistence: $[0, 2]$, $[3, 53]$. Furthermore, for class 1 there are exactly two barcodes $[0,24]$, $[25, 53]$ that appear for the filtration from Theorem \ref{thm: chordless-cycle-length}.    
 A similar but more challenging dataset to the PINWHEELS dataset, the 2CYCLES dataset, is similar to the necklaces dataset from~\cite{horn2021topological} and is illustrated in Section \ref{sec: 2-cycles} but with more misleading node attributes and simplified to two cycles. It involves 400 graphs consisting of two cycles. There are two classes as shown in Figures \ref{fig: 2cycle1} and \ref{fig: 2cycle2}. %Class 0 involves a triangle and a cycle of length between 50 and 60 while Class 1 involves two cycles of equal length that add up in length to a length between 53 and 63.
 
 The experimental performance on 2CYCLES surpasses random guessing while all other methods just randomly guess as stated in Section \ref{sec: 2-cycles}. Certainly WL[1] bounded GNNs cannot distinguish the two classes in 2CYCLES since they are all regular. As discussed, because GFL and TOGL use learned 0-dimensional standard persistence, these approaches do no better than random guessing on this dataset. %This appears to be because the degrees of freedom of $\mathcal{B}^{ext}_1$ and $\mathcal{B}_{0}^{ext}$ are only $1$ for such persistence, which makes it difficult to learn to recognize that the lengths of the two cycles have changed.

 \newpage
\section{Timing of Extended Persistence Algorithm (without storing cycle representations)}
 Since the persistence computation, especially extended persistence computation, is the bottleneck to any machine learning algorithm that uses it, it is imperative to have a fast algorithm to compute it. We perform timing experiments with a C++ torch implementation of our fast extended persistence algorithm. In our implementation each graph in the batch has a single thread assigned to it.
 
 Our experiment involves two parameters, the sparsity, or probability, $p$ for the edges of an Erdos-Renyi graph and the number of vertices of such a graph $n$.
 We plot our speedup over GUDHI, the state of the art software for computing extended persistence, as a function of $p$ with $n$ held fixed. We run GUDHI and our algorithm 5 times and take the average and standard deviation of each run's speedup. Since our algorithm has lower complexity, our speedup is theoretically unbounded. We obtain up to 62x speedup before surpassing 12 hours of computation time for experimentation. The plot is shown in Figure \ref{fig: speedup}. The speedup is up to 2.8x, 9x, 24x, and 62x for $n= 200,500,1000,2000$ respectively.
 
 \begin{figure}[ht]
%\vskip 0.2in
\begin{center}
%\columnwidth
\centerline{\includegraphics[width=0.5\columnwidth]{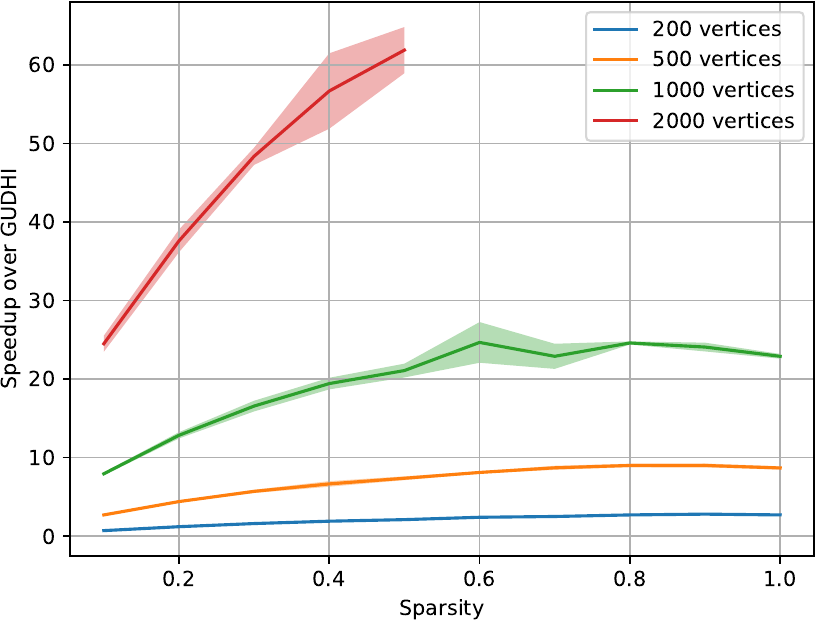}}
\caption{Average speedup with std. dev. as a function of sparsity $p$ and number of vertices $n$ on Erdos Renyi graphs. }
\label{fig: speedup}
\end{center}
\end{figure}

\section{Algorithm and Data Structure Details}
Here we detail the algorithmic details of computing extended persistence.
\subsection{The $PH_0$ Algorithm}\label{sec: appendix-union-find}

\begin{algorithm}[!h]
   \caption{$\textsc{PH}_0$ Algorithm}
   \label{alg: union-find}
\begin{algorithmic}[1]
   \Statex{{\bfseries Input:} $G= (V,E)$, $F$: filtration function, $order$: flag to denote an upper or lower filtration}
   \Statex{{\bfseries Output:} $\mathcal{B}_{0},E_{pos}, E_{neg}, U$: $H_0$ bars, pos. edges, neg. edges, and union-find data structure}
   \State $U\gets V$ $\text{/* a union-find data structure populated by n unlinked nodes*/}$
   \State $\mathcal{B}_0 \gets \{\}$ $\text{/*A multiset */}$
   \If{$order=lower$}
   \State $\textsc{SORT}_{incr}(E)$ $\text{/*increasing w.r.t. F;*/}$ 
   \Else
   \State $\textsc{SORT}_{decr}(E)$ $\text{/*decreasing w.r.t F;*/}$
   \EndIf
   \For{$e=(u,v) \in E$}
   \State $root_u \gets U.\textsc{find}(u)$
   \State $root_v \gets U.\textsc{find}(v)$
   \If{$root_u=root_v$}
   \State $E_{pos} \gets E_{pos} \cup \{e\}$
   \Else
   \State $E_{neg} \gets E_{neg} \cup \{e\}$ 
   \EndIf
   \If{$order=lower$}
   \State $b \gets max(F(root_u), F(root_v)$
   \Else
   \State $b \gets min(F(root_u), F(root_v))$
   \EndIf
   \State $d \gets F(e)$
   \State $\mathcal{B}_0 \gets \mathcal{B}_0 \cup \{\{(b,d)\}\}$
   \State $U.\textsc{link}(root_u, root_v)$
   
   \EndFor
   \State \Return ($\mathcal{B}_{0},E_{pos}, E_{neg}, U$)
\end{algorithmic}
\end{algorithm}
{Algorithm \ref{alg: union-find} is the union-find algorithm that computes 0 dimensional persistent homology. The algorithm is a single-linkage clustering algorithm \cite{sibson1973slink}. It starts with $n$ nodes, $0$ edges, and a union-find data structure~\cite{galler1964improved} on $n$ nodes. The edges are sorted in ascending order if a lower filtration function is given. Otherwise, the edges are sorted in descending order. It then proceeds to connect nearest neighbor clusters, or connected components, in a sequential fashion by introducing edges in order one at a time. Two connected components are nearest to each other if they have two nodes closer to each other than any other pair of connected components. This is achieved by iterating through the edges in sorted order and merging the connected components that they connect. When given a lower filtration function, when a connected component merges with another connected component, the connected component with the larger connected component root value has its root filtration function value a birth time. This birth time is paired with the current edge's filtration value and form a birth death pair. The smaller of the two connected component root values is used as birth time when an upper filtration function is given. The two connected components are subsequently merged in a union-find data structure by the $\textsc{Link}$ operation.}
 \subsection{A Brief Overview of the Link-Cut Tree Data Structure}\label{sec: appendix-link-cut-tree}
The link-cut tree data structure~\cite{sleator1981data} is a well known dynamic connectivity data structure. For modifying the tree of $n$ nodes, it takes $O(\log n)$ amortized time for deleting an edge (cut) and joining two trees (link). Furthermore, it takes $O(\log n)$ amortized time for the composition of associative reductions, such as max, min, sum, on some path from any node to its root. We may view the link-cut tree data structure as a collection of trees and thus as a forest as well. Details of this forest implementation are omitted.
 
The link-cut tree decomposes the nodes of a tree $T$ into disjoint preferred paths. A preferred path is a sequence of nodes that strictly decreasing in depth (distance from the root of $T$) on $T$. A path has each consecutive node connected by a single edge. In particular, each node in $T$ has a single preferred child, forming a preferred edge. The maximally connected sequence of preferred edges forms a preferred path. The preferred path decomposition will change as the link-cut tree gets operated on. Each preferred path is in one to one correspondence with a splay tree~\cite{sleator1985self} called an auxiliary tree on the set of nodes in the preferred path. For any node $v$ in a preferred path's auxiliary tree, its left subtree is made up of nodes higher up (closer to the root in $T$) than $v$ and its right subtree is made up of nodes lower (farther from the root in $T$) than $v$. Each auxiliary tree contains a pointer, termed the auxiliary tree's parent-pointer, from its root to the parent of the highest (closest to the root) node in the preferred path associated with the auxiliary tree.
 
The most important supporting operation to a link-cut tree $T$ is the $\textsc{Expose}()$ operation. The result of $\textsc{Expose}(v)$ for $v\in T$ is the formation of a unique preferred path from the root of $T$ to $v$ with this preferred path's set of nodes forming an auxiliary tree. Furthermore, it results in $v$ to be the root of the auxiliary tree it belongs to. The complexity of $\textsc{Expose}(v)$ is $O(\log n)$. For implementation details, see \cite{sleator1981data}.
 
Let $T_1,T_2$ be two link-cut trees and $u\in T_1, v \in T_2$ with $u$ a root of $T_1$. Define the operation $\textsc{Link}(T_1, (u,v), T_2)$ as the operation that attaches $T_1$ to $T_2$ by connecting $u$ with $v$ by an edge and outputs the resulting tree. This is achieved by simply calling $\textsc{Expose}(u)$ then $\textsc{Expose}(v)$, which makes $u$ and $v$ the roots of their respective auxiliary trees. In the auxiliary tree of $u$, then set the left child of $u$ to $v$.
 
Let $T$ be a link-cut tree and $u,v \in T$ connected by an edge with $v$ higher up in $T$, closer to the root of $T$. Define the operation $\textsc{Cut}(T, (u,v))$ as the operation that disconnects $T$ by deleting the edge between $u$ and $v$. This is achieved by simply calling $\textsc{Expose}(u)$ and then making $u$ a root by making the left child of $v$ point to $null$.
 
Let $T$ be a link-cut tree and $u,v \in T$. Define the operation $\textsc{Lca}(T, (u,v))$ as the operation that finds the least common ancestor of $u$ and $v$ in $T$. This is achieved by calling $\textsc{Expose}(u)$ then $\textsc{Expose}(v)$ and then taking the node pointed to by the parent-pointer of the auxiliary tree of which $u$ is root.
 
Let $T$ be a link-cut tree and $u,v \in T$ with $v$ higher up in the tree $T$, meaning that it is closer to the root than $u$, and there being a unique path of monotonically changing depth in the tree from $u$ to $v$. Define the operation $\textsc{Path}(u,v)$ as the operation that returns a linked list of the path from $u$ to $v$ in $T$. If $v$ is the root, call $\textsc{Expose}(u)$ and return the linked list formed by the splay tree with $u$ at root. Otherwise, first find the parent $v'$ of $v$. The parent of $v$ can be obtained by calling $\textsc{Expose}(v)$ then traversing the splay tree it is a root of for its parent in $T$. Call $\textsc{Expose}(u)$ to form a preferred path from u to the root of $T$ then $\textsc{Expose}(v')$ to detach $v'$ from this preferred path. Let $\textsc{Splay}(u)$ be the operation that rotates the unique splay tree, or preferred path, containing $u$ so that $u$ becomes the root of its splay tree. After calling $\textsc{Splay}(u)$, $u$ becomes the root of a linked-list splay tree. It is a linked-list since $u$ is the lowest (farthest from the root) node in its splay tree and the rest of the preferred path is made up of a path of strictly decreasing distance to the root. Return this linked-list splay tree as the resulting path from $u$ to $v$.
 
Let $T$ be a link-cut tree, $u,v \in T$ with $v$ higher up in the tree than $u$ (it is closer to the root of $T$ than $u$) and there being a unique path of monotonically changing depth in the tree from $u$ to $v$. Define $\textsc{Reduce}(T, u,v, op)$ to be an associative reduction on the path from $u$ to $v$. To do this, apply $\textsc{Expose}(u)$ then $\textsc{Expose}(v)$, then apply the associative operation on the whole auxiliary tree rooted at $u$, as implemented on a splay tree in ~\cite{sleator1985self}. The associative reduction takes $O(\log n) $ time. This splay tree corresponds to the preferred path from $u$ to $v$ formed from the two $\textsc{Expose}$ operations. Notice that $\textsc{Expose}(u)$ results in a preferred path from $u$ to the root while the second call $\textsc{Expose}(v)$ detaches the path from $v$ to the root of $T$ from the preferred path of $u$ to the root.
 
Let $T$ be a link-cut tree, $u,v \in T$ and $lca$ the least common ancestor of $u,v \in T$. Assume the nodes are labeled by a pair of their value and index. Two nodes are compared by their respective values. Define $\textsc{ArgMaxReduceCycle}(T,u,v,lca)$ as the operation that finds the edge with one of its nodes containing the maximum value on the cycle formed by $u,v$ and $lca$. There are many ways to implement this. We describe a method that maintains the $O(\log n)$ complexity of link-cut tree operations. We first compute $(value(w_1),w_1):= \textsc{Reduce}(T,u,lca,max)$ to find the maximum value node along the path from $u$ to $lca$, then compute $(value(w_2),w_2):=\textsc{Reduce}(T,v,lca,max)$ to find the maximum value node along the path from $v$ to $lca$. Let $w$ to be the maximum valued vertex between $w_1$ and $w_2$. If $w \neq lca(u,v)$, then find the parent $z$ of $w$; otherwise, apply $\textsc{Expose}(u)$ then $\textsc{Expose}(v)$ and keep track of the child $z$ of $w$ that gets detached during $\textsc{Expose}(v)$. Parent of $w$ can be found by $\textsc{Expose}(w)$ then traversing its splay tree to find the parent of $w \in T$. The edge $(z,w)$ is returned by $\textsc{ArgMaxReduceCycle}(T,u,v,lca)$
 \newpage
 \section{Cycle Length Distribution of the Cycle Basis found by Extended Persistence Algorithm for Erdos-Renyi Graphs}
 We perform an experiment to determine the cycle length distribution of cycle representatives output by our algorithm on random Erdos-Renyi graphs. We observe that,
 as the graph becomes more dense, the distribution of cycles shifts towards very short cycles. We also find that the cycle lengths for most Erdos-Renyi sparsity hyperparameters rarely become very long. We hypothesize that the cycle basis found by the extended persistence algorithm is close to the minimal (in cycle lengths) cycle basis.
 
For a given node count $n$, edge count $m$, and sparsity hyperparameter, $0 \leq s\leq 1$ which we define as the Erdos-Renyi probability for keeping an edge from a clique on $n$ nodes, we sample three Erdos-Renyi graphs. We collect the multiset of $m-n+1$ cycle lengths in the cycle basis found by the algorithm. This multiset can be visualized as a histogram. Each histogram is a relative frequency mixture of the three cycle length histograms for each graph. See Figure \ref{fig: cycle-hists} for the histograms we obtained from sampled Erdos-Renyi graphs. Notice that, even for $0.01$ sparsity, Erdos-Renyi samples of graphs on $2000$ nodes have the average cycle length of $15$, which is $0.75\%$ of $n=2000.$
 
To put this in perspective, assume that we can relate the Erdos-Renyi sparsity $s$ by $\hat{s}:=\frac{m}{n^2}$. For the datasets of our experiments, we have $\hat{s}\approx 0.009, 0.0048, 0.39, 0.062, 0.084, 0.0018, 0.032, $and $0.045$ for $\textsc{DD}, \textsc{PROTEINS}, \textsc{IMDB-MULTI}, \textsc{MUTAG}, \textsc{PINWHEELS}, \textsc{2CYCLES}, \textsc{MOLBACE},$ and $ \textsc{MOLBBBP},$ respectively. The sparsity estimator is in the range of $0.0018\leq \hat{s}\leq 0.39$, which tells us that most of the cycle lengths found by our algorithm are short. 
 
\begin{figure}
\centering
\subfloat[2000 nodes, 0.01 sparsity]{\includegraphics[width=4.6cm]{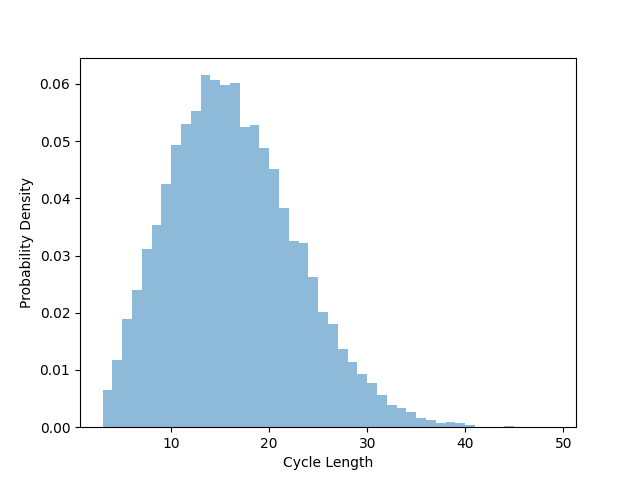}}\hfil
\subfloat[2000 nodes, 0.1 sparsity]{\includegraphics[width=4.6cm]{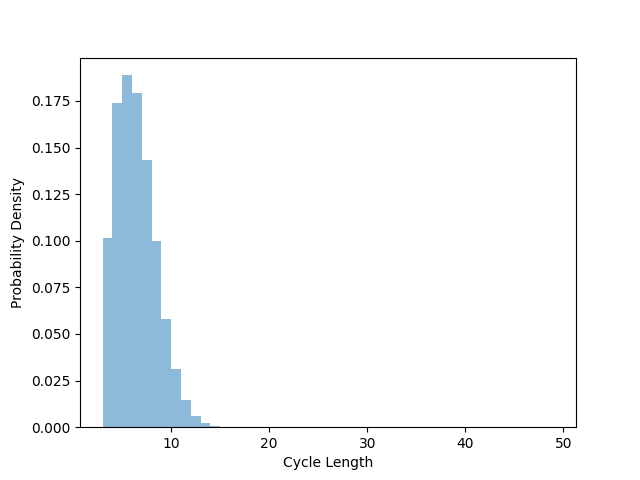}}\hfil 
\subfloat[2000 nodes, 0.3 sparsity]{\includegraphics[width=4.6cm]{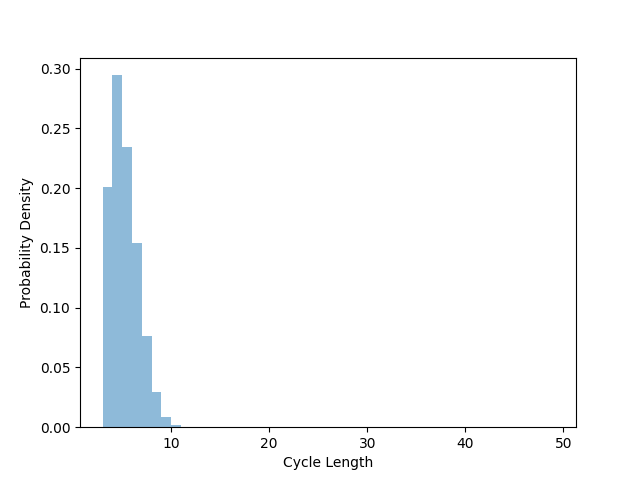}} 

\subfloat[2000 nodes, 0.5 sparsity]{\includegraphics[width=4.6cm]{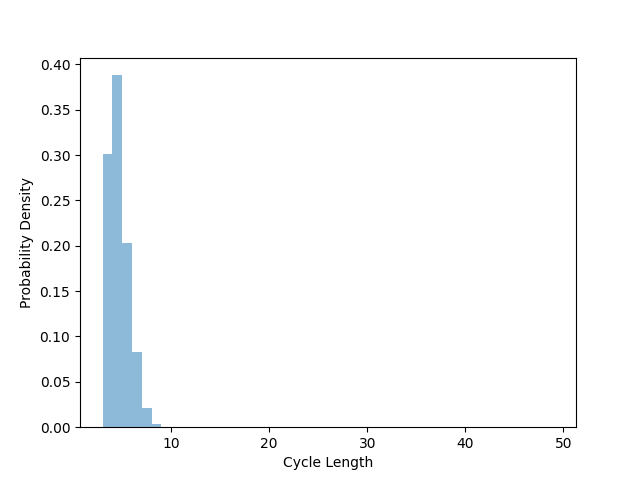}}\hfil   
\subfloat[2000 nodes, 0.7 sparsity]{\includegraphics[width=4.6cm]{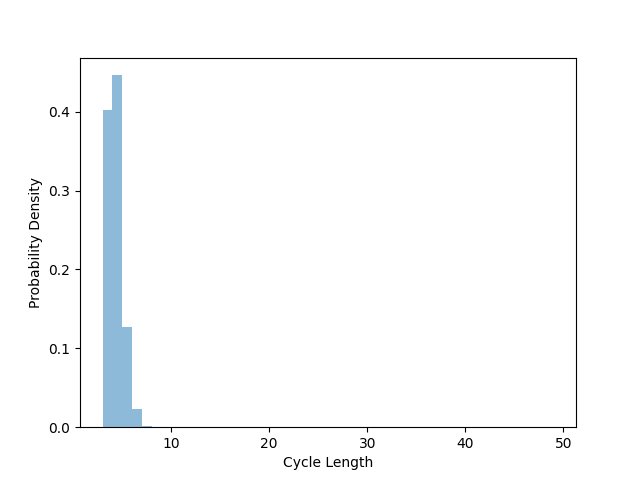}}\hfil
\subfloat[2000 nodes, 0.9 sparsity]{\includegraphics[width=4.6cm]{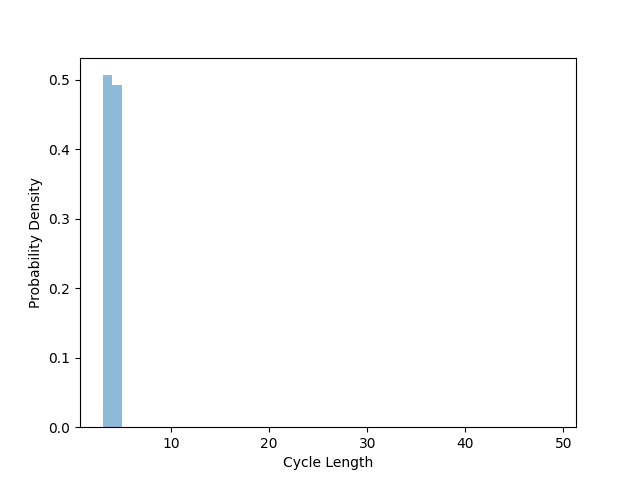}}
\caption{Cycle length histograms of the cycle representatives output by the extended persistence algorithm on sampled Erdos-Renyi graphs }\label{fig: cycle-hists}
\end{figure}
 
 \newpage
\section{Rational Hat Function Visualization}
Figure \ref{fig: r=0.5} and Figure \ref{fig: r=1} visualize the rational hat function for fixed $r$ value and varying $x$ and $y$ values. Notice the boundedness of the plot as $(x,y) \rightarrow \infty$. For the theory behind the rational hat function, see \cite{hofer2019learning}.
 \begin{figure}[ht]
     \begin{minipage}[b]{0.45\linewidth}
     \centering
     \includegraphics[width=1.0\textwidth]{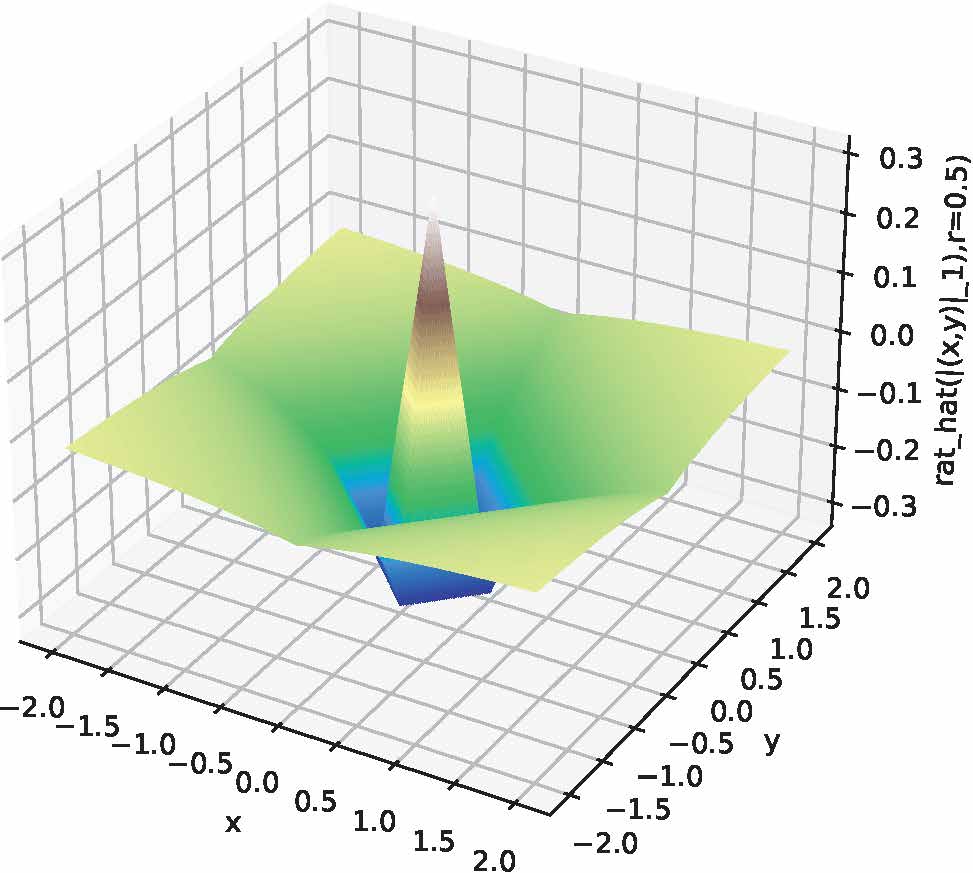}
     \caption{The function $\hat{r}$, output sliced at one dimension, as a function of $|(x,y)|_1$ with $r=0.5$ from Equation \ref{eq: rational-hat}. The point $(x,y)$ is given by $(x,y)= \mathbf{p}-\mathbf{c}$.}
     \label{fig: r=0.5}
     \end{minipage}
     \hspace{1cm}
     \begin{minipage}[b]{0.45\linewidth}
     \centering
     \includegraphics[width=1.0\textwidth]{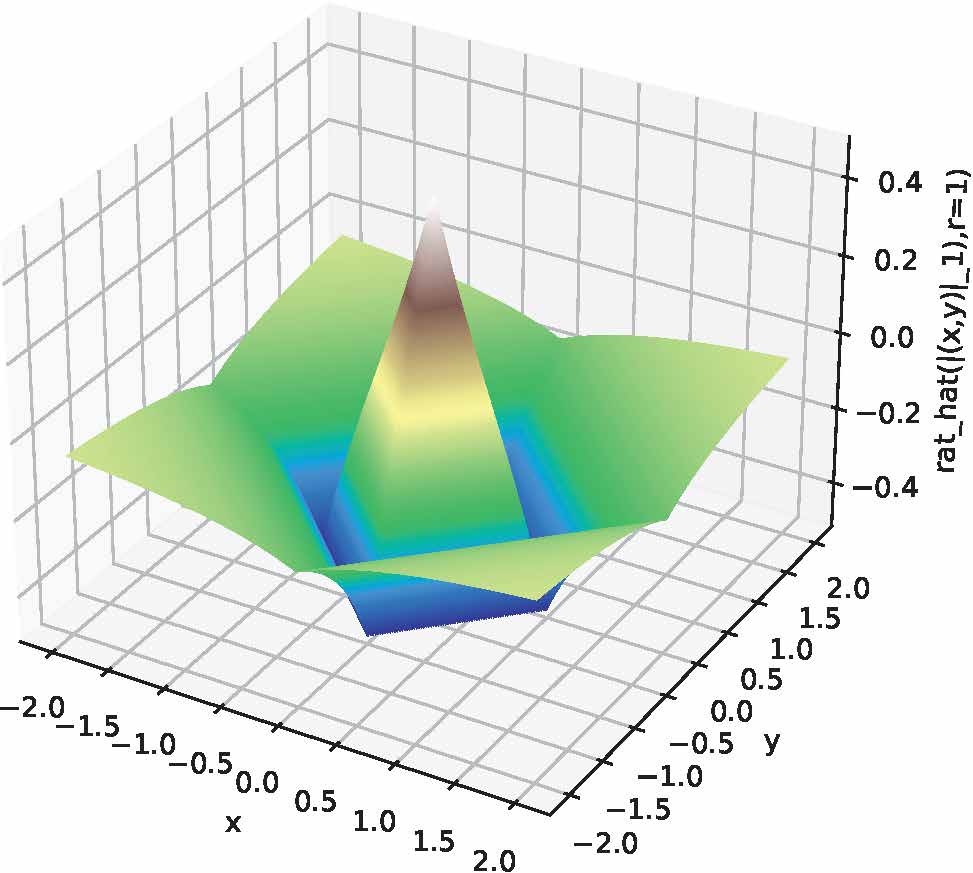}
     \caption{The function $\hat{r}$, output sliced at one dimension, as a function of $|(x,y)|_1$ with $r=1.0$ from Equation \ref{eq: rational-hat}. The point $(x,y)$ is given by $(x,y)= \mathbf{p}-\mathbf{c}$.}
     \label{fig: r=1}
     \end{minipage}
    \end{figure}

\iffalse
\begin{table}[h] {
	%\setlength{\extrarowheight}{3pt}
	%\fontsize{20}{20}\selectfont
	%\footnotesize
	\centering
\begin{tabular}{
|p{3.0cm}||p{1.24cm}|p{1.24cm}|p{1.24cm}|p{1.24cm}|}
 \hline
 \multicolumn{5}{|c|}{\small $W_1$ Regularization Ablation on TUDatasets } \\
 \hline
 
  \scriptsize avg. acc. $\pm$ std. & \scriptsize DD & \scriptsize PROTEINS & \scriptsize IMDB-MULTI & \scriptsize MUTAG \\
 \hline
 %\\
 
 {{\scriptsize Ours without $W_1$ Reg.}} &\small 74.7$\pm$ \scriptsize 1.9 &\small \textbf{93.1$\pm$ \scriptsize 1.3}& \small 49.0 $\pm $ \scriptsize 1.5& \small \textbf{90.0}$\pm$ \scriptsize 4.9
\\
 {{\scriptsize Ours with $W_1$ Reg.}} &  \small \textbf{79.5 $\pm$ \scriptsize 1.3} & \small 91.4 $\pm$ \scriptsize 0.9& \textbf{\small 50.7 $\pm$ \scriptsize 2.3}   & \small 89.0 $\pm$ \scriptsize 4.5 \\
 
 \hline
 \end{tabular}
 \caption{Average accuracy $\pm$ std. dev. of our method without and with $W_1$ regularization on the four TUDatasets: DD, REDDIT-BINARY, IMDB-MULTI, MUTAG}
 \label{table: appendix-ablation-accuracy}
 }
 \end{table}
 \fi
 
\section{Datasets and Hyperparameter Information}
Here are the datasets, both synthetic and real world, used in all of our experiments along with training hyperparameter information.

The barcode vectorization layer, or concatenation of four-rational hat functions, is set to a dimension of $256$.
The LSTM used on the explicit cycle representatives was set to a $2$-layer bidirectional LSTM with single channel inputs and $256$ dimensional vector representations. Due to the fact that our algorithm on random Erdos-Renyi graphs rarely encounters long cycles, we set the LSTM layers to a small number like $2$ to avoid overfitting.

\begin{table}[h] {
\label{table: dataset_hyperparameter}
	%\setlength{\extrarowheight}{3pt}
	%\fontsize{20}{20}\selectfont
	%\footnotesize
	\centering
\begin{tabular}{
|p{2.0cm}||p{.8cm}|p{.8cm}|p{1.3cm}|p{1.3cm}|p{0.5cm}|p{0.9cm}|p{0.7cm}|p{1.3cm}|}
 \hline
 \multicolumn{9}{|c|}{Dataset and Hyperparameter Information} \\
 \hline
 
  \scriptsize Dataset & \scriptsize Graphs & \scriptsize Classes & \scriptsize Avg. Vertices & \scriptsize Avg. Edges & \scriptsize lr & \scriptsize Node Attrs.(Y/N) & \scriptsize num. layers & \scriptsize Class ratio\\
 \hline
 %\\
 
 {{\scriptsize DD}} &\scriptsize 1178 & \scriptsize 2 &\scriptsize 284.32 &\scriptsize 715.66 & \scriptsize 0.01 & \scriptsize Yes & \scriptsize 2 &
 \scriptsize 691/487
\\
 {{\scriptsize PROTEINS}} &  \scriptsize 1113 & \scriptsize 2 & \scriptsize 39.06 & \scriptsize 72.82 & \scriptsize 0.01 & \scriptsize Yes & \scriptsize 2 &
 \scriptsize 663/422\\

 {{\scriptsize IMDB-MULTI}} & \scriptsize 1500 & \scriptsize 3 & \scriptsize 13.00 & \scriptsize 	65.94& \scriptsize 0.01 & \scriptsize No & \scriptsize 2
 & \scriptsize 500/500/500
\\
 {{\scriptsize MUTAG}} & \scriptsize 188 & \scriptsize 2 & \scriptsize 17.93 & \scriptsize	19.79 & \scriptsize 0.01 & \scriptsize Yes & \scriptsize 1 &
 \scriptsize 63/125 
 \\
 {{\scriptsize PINWHEELS}} & \scriptsize 100 & \scriptsize 2 & \scriptsize 71.934 & \scriptsize 437.604 & \scriptsize 0.01 & \scriptsize No & \scriptsize 2 & 
 \scriptsize 50/50
 \\
 {{\scriptsize 2CYCLES}} & \scriptsize 400 & \scriptsize 2 & \scriptsize 551.26 & \scriptsize 551.26& \scriptsize 0.01 & \scriptsize No & \scriptsize 2 &
 \scriptsize 200/200
 \\
 {{\scriptsize MOLBACE}} & \scriptsize 1513 & \scriptsize 2 & \scriptsize 34.09 & \scriptsize 36.9& \scriptsize 0.001 & \scriptsize Yes & \scriptsize 2 &
 \scriptsize 822/691
 \\
 {{\scriptsize MOLBBBP}} & \scriptsize 2039 & \scriptsize 2 & \scriptsize 24.06 & \scriptsize 25.95 & \scriptsize 0.001 & \scriptsize Yes & \scriptsize 2 &
 \scriptsize 479/1560
 \\
 \hline
 \end{tabular}
 \caption{Dataset statistics and training hyperparameters used for all datasets in scoring experiments of Table \ref{table: TUDatasets-accuracy} and Table \ref{table: mol-rocauc}}
 \label{tab: dataset-hyperparam}
 }
 \end{table}

\section{Implementation Dependencies}\label{sec: appendix-experimentaldependencies}
Our experiments have the following dependencies: python 3.9.1, torch 1.10.1, torch\_geometric 2.0.5, torch\_scatter 2.0.9, torch\_sparse 0.6.13, scipy 1.6.3, numpy 1.21.2, CUDA 11.2, GCC 7.5.0. %These dependencies must match exactly or else the performance will vary a lot due to the sensitivity to randomization of extended persistence. We have gotten much lower performance due to higher versions of torch\_geometric and torch being used.  

\newpage

\section{Visualization of Graph Filtrations}
We visualize the filtration functions $f_G$ learned on graphs $G$ for the datasets: IMDB-MULTI, MUTAG, and REDDIT-BINARY. The value of $f_G(v)$ for each $v \in V$ is shown in each figure.
 \begin{figure}[h]
 \vspace {5cm}
\begin{center}
%\columnwidth
\centerline{\includegraphics[width=0.80\columnwidth]{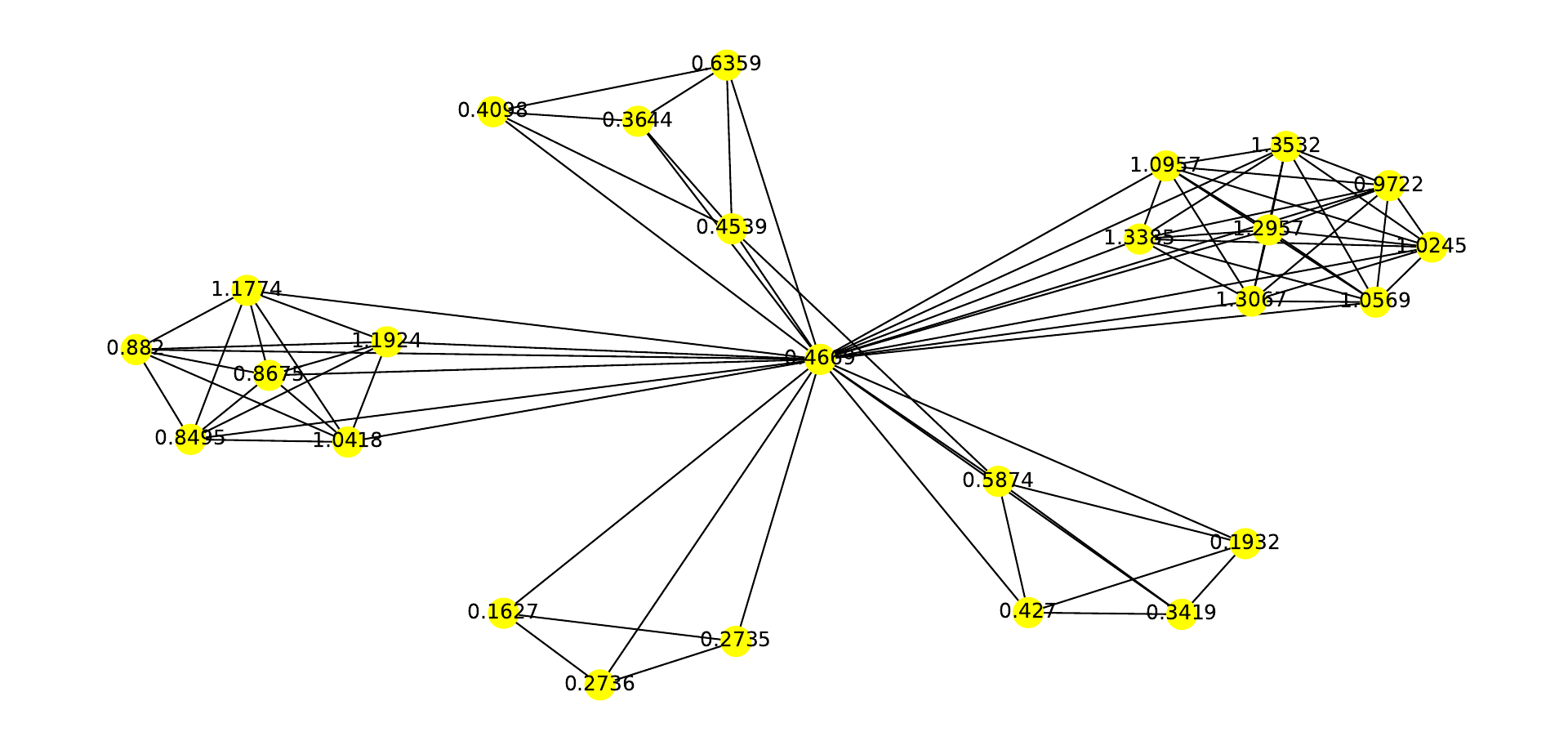}}
\caption{IMDB-MULTI learned filtration function}
\label{fig: imdb-multi}
\end{center}
\end{figure}
 \begin{figure}[h]
\begin{center}
%\columnwidth
\centerline{\includegraphics[width=0.8\columnwidth]{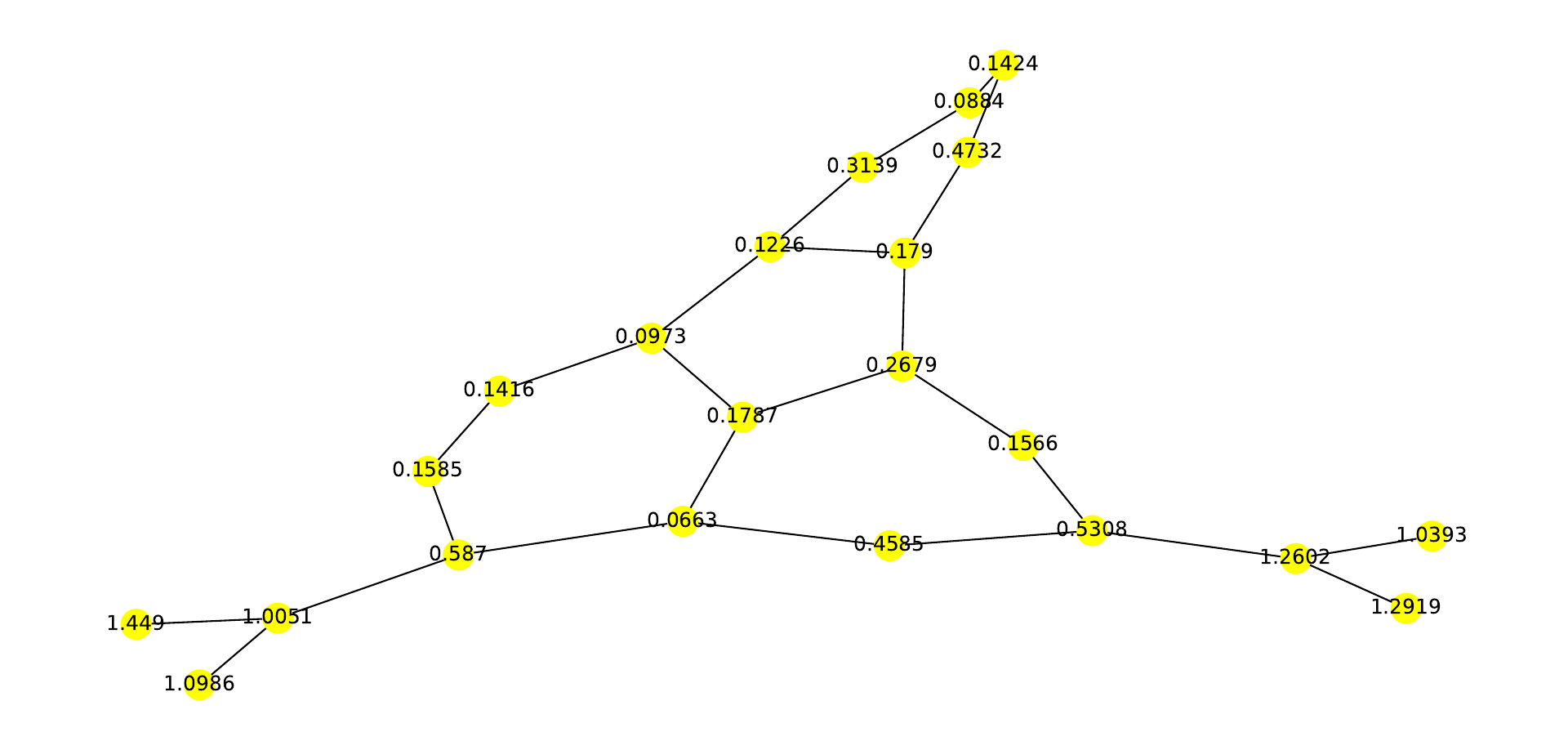}}
\caption{MUTAG learned filtration function}
\label{fig: mutag}
\end{center}
\end{figure}
\section{Additional Experiments}
\subsection{Number of Convolutional Layers Experiment}
   \begin{figure}[t]%
    \centering
    \subfloat [\centering {\sc Proteins}] {{\includegraphics[width=5cm]{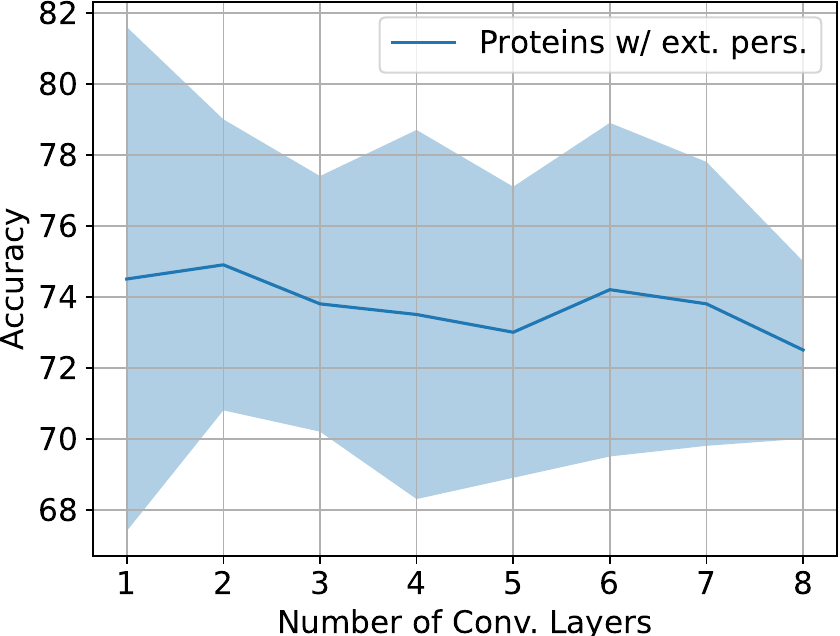} }}%
    \qquad
    \subfloat [\centering {\sc Mutag}] {{\includegraphics[width=5cm]{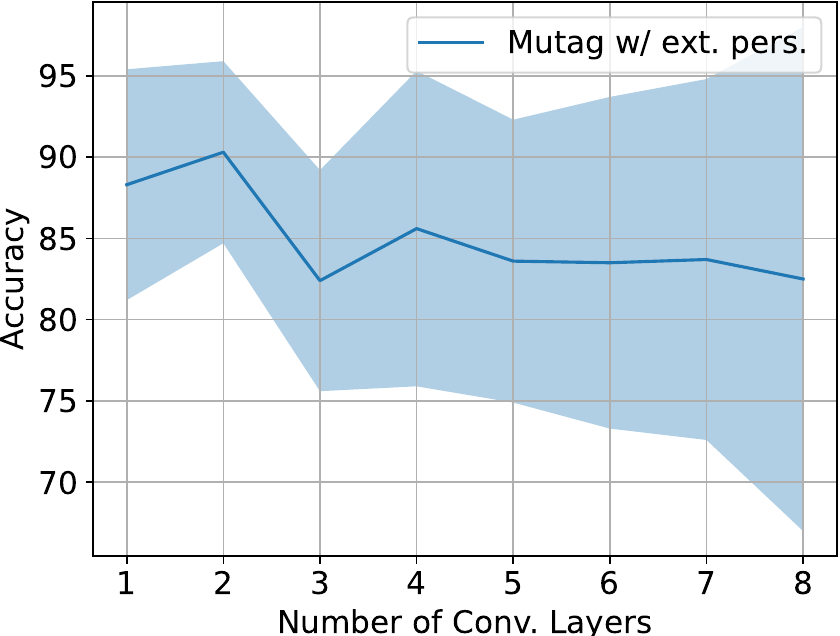} }}%
    \caption{An exhibit of oversmoothing in the filtration convolutional layers. Plot of the average accuracy with std. dev. as a function of the number of convolutional layers before the Jumping Knowledge MLP and the extended persistence readout. The  { \sc Proteins} and  {\sc Mutag} datasets were used in (a) and (b) respectively.  }%
    \label{fig: oversmoothing}%
\end{figure}
 
 We also perform an experiment to determine the number of layers in the MPGNN of the filtration function that has the highest performance. Due to oversmoothing \cite{chen2020measuring}, which is exacerbated by the required scalar-dimensional vertex embeddings, as we increase the number of layers for the filtration function the performance drops. See Figure \ref{fig: oversmoothing} for an illustration of this phenomenon on the {\sc Proteins} and {\sc Mutag} dataset. For these two datasets, two layers perform the best.

\iffalse
\begin{table}[!ht]
\label{tab:imbalance}
    \centering
    \begin{tabular}{|ccc|}
    \hline
        ~ & TUDataset Class Imbalance & ~ \\ \hline
        Dataset & Classes Sizes & Class Ratios \\ \hline
        DD & 691/487 & 58\%/42\% \\
        PROTEINS & 663/422 & 59\%/41\% \\
        IMDB-MULTI & 500/500/500 & 33\%/33\%/33\% \\
        MUTAG & 63/125 & 33\%/67\% \\ \hline
    \end{tabular}
\end{table}
\fi

\begin{table}[!ht]
    \centering
    \begin{tabular}{|cc|}
    \hline
        \scriptsize Model & \scriptsize F1 Macro \\ \hline
        \scriptsize GEFL-bars & \scriptsize {84.0}$\pm$14.2 \\ 
        \scriptsize GEFL-bars+cycles & \scriptsize \textbf{85.4}$\pm$\textbf{10.3} \\ 
        \scriptsize GCN & \scriptsize 69.8 $\pm$15.2 \\ 
        \scriptsize GraphSAGE & \scriptsize 69.5$\pm$11.0 \\ 
        \scriptsize GIN0 & \scriptsize 79.9$\pm$9.0 \\ 
        \scriptsize GIN & \scriptsize \textbf{\textcolor{gray}{84.9}}$\pm$\textbf{\textcolor{gray}{6.0 }}\\ \hline
        
    \end{tabular}
    \caption{{\sc Mutag} Macro F1 Scores}
    \label{tab:f1-mutag}
\end{table}
\subsection{Macro F1 Experiments}
We perform further experiments with the {\sc Mutag} dataset and evaluate performance with the F1 Macro score due to the class imbalance in the {\sc Mutag} dataset, see Table \ref{tab: dataset-hyperparam}. See Table \ref{tab:f1-mutag} for details of the F1 Macro evaluation. 
\subsection{Experiments on Transferability}
We perform an experiment on the transferability of our approach from real world datasets to their edge corrupted versions and compare with the standard GNN baselines. We form a corrupted version of the data by applying a Bernoulli random variable with $p=0.2$ on each edge of the {\sc Mutag} and {\sc Imdb-multi} datasets. We perform zero-shot, one-shot and 10-shot transfer learning. We include an ablation study to compare with models without fine-tuning and with $0,1,10$ and $100$ epochs of training. 

We notice that our model does not directly transfer in either zero or one shot transfer learning. However after $10$ epochs of fine-tuning, there appears to be an advantage to transfer learning over raw training. For the {\sc Mutag} dataset, there is a $77.7-64.2$ average Macro F1 score difference for a bars only representation model and a $74.8-69.8$ average Macro F1 score difference for bars with cycle representatives. Furthermore, on {\sc Mutag} the baseline GNNs do not show any improvement with $10$-shot transfer learning. 
For {\sc Imdb-multi} there is a $48.3 - 46.1$ average accuracy difference for bars only representation model and a $49.7-48.0$ average accuracy difference for the bars with cycle representatives model. As for {\sc Mutag}, for {\sc Imdb-multi} there is no transfer learning improvement for the baselines. 

We hypothesize that both versions of our model transfer well since our model captures global topological information with low variance. This is more conducive to transfer learning than an aggregation of each node's local neighborhood information as in the baselines. The baselines result in training a higher variance classifier. Low variance classifiers transfer easier since their decision boundary is easier to adapt to new data. %In particular, to transfer from the original dataset to an edge corrupted version is problematic. This is due to the high variance of learning from local information, as in the baselines. On the other hand, using global topological information is a low variance classifier and transfers easier. 
We also hypothesize that the bars only model transfers better than the bars with cycle representatives model due to its even lower variance, no LSTM parameters.

Our model downsamples the graph representation into scalars (a set of sequences of scalars and a set of pairs of scalars) upon computing bars and cycle representatives. Due to our architecture, to accommodate these scalars, this results in a higher bias classifier with few parameters.
Due to the bias variance trade off, the variance should be low.

\begin{table}[!hbt]
\resizebox{\columnwidth}{!}{%
\begin{tabular}{|ccccccccc|}
\hline
~              & ~                   & ~             & \multicolumn{2}{c}{Transfer learning results} & ~             & ~             & ~             \\ \hline
Dataset & Pretraining (Epochs)       & Finetuning (Epochs) & Ours+bars    & Ours+bars+cycles       & GCN                 & GraphSage     & GIN0          & GIN           \\ \hline
{\sc Mutag}-0.2 (F1 Macro)     & 100& --                   & 83.2$\pm$9.6  & 78.3$\pm$17.3           & 73.4$\pm$10.4       & 74.5$\pm$09.1 & 85.6$\pm$05.7 & 83.5$\pm$11.2 \\
{\sc Mutag}-0.2  (F1 Macro)     & 100&0                   & 31.4$\pm$11.2 & 29.4$\pm$11.4           & 32.4$\pm$7.9        & 39.9$\pm$0.9  & 35.4$\pm$7.3  & 35.4$\pm$7.3  \\
{\sc Mutag}-0.2 (F1 Macro)      & 0 & --                   & 37.1 $\pm$ 12.2  &  32.7 $\pm$ 7.9          & 34.6 $\pm$ 8.7       & 36.1 $\pm$ 7.8 & 35.8 $\pm$ 10.1 &  34.3 $\pm$ 7.1 \\
{\sc Mutag}-0.2  (F1 Macro)     & 100&1                   & 39.9$\pm$0.9  & 39.9$\pm$0.9            & 39.9$\pm$0.9        & 39.9$\pm$0.9  & 39.9$\pm$0.9  & 39.9$\pm$0.9  \\
{\sc Mutag}-0.2  (F1 Macro)     & 1 & --                   & 39.9 $\pm$ 0.9  &  39.9 $\pm$ 0.9         &  39.9 $\pm$ 0.9       &  39.9 $\pm$ 0.9 &  39.9 $\pm$ 0.9 &  39.9 $\pm$ 0.9 \\
{\sc Mutag}-0.2  (F1 Macro)     & 100&10                  & 76.7$\pm$9.7  & 74.8$\pm$10.2           & 39.9$\pm$0.9        & 39.7$\pm$14.3 & 39.9$\pm$0.9  & 39.9$\pm$0.9  \\
{\sc Mutag}-0.2  (F1 Macro)     & 10 & --                   & 64.2 $\pm$ 17.9  &  69.8 $\pm$ 17.7          & 43.9 $\pm$ 9.5       & 44.2 $\pm$ 9.7 & 41.8 $\pm$ 6.2 &  44.1 $\pm$ 13.3 \\
%{\sc Mutag}-0.6      & 100&--                   & 77.7$\pm$7.1  & 84.2$\pm$6.7            & 73.9$\pm$9.5        & 75.6$\pm$7.0  & 83.5$\pm$7.3  & 81.3$\pm$12   \\
%{\sc Mutag}-0.6      & 100&0                   & 36.3$\pm$20.6 & 32.3$\pm$8.0            & 32.9$\pm$7.5        & 39.9$\pm$0.9  & 35.4$\pm$7.3  & 35.4$\pm$7.3  \\
%{\sc Mutag}-0.6      & 100&1                   & 39.9$\pm$0.9  & 39.9$\pm$0.9            & 39.9$\pm$0.9        & 39.9$\pm$0.9  & 39.9$\pm$0.9  & 39.9$\pm$0.9  \\
%{\sc Mutag}-0.6      & 100&10                  & 66.6$\pm$16.4 & 76.3$\pm$13.7           & 39.9$\pm$0.9        & 39.9$\pm$0.9  & 39.9$\pm$0.9  & 39.9$\pm$0.9  \\
{\sc Imdb-multi}-0.2 (Acc.) & 100&--                   & 49.2$\pm$3.9  & 50.3$\pm$2.9            & 49.4$\pm$2.5        & 50.5$\pm$2.6  & 49.6$\pm$3.1  & 51.5$\pm$3.1  \\
{\sc Imdb-multi}-0.2 (Acc.) & 100&0                   & 33.1$\pm$1.1  & 32.5$\pm$3.1            & 16.8$\pm$0.5        & 20.2$\pm$4.9  & 24.0$\pm$6.1  & 21.0$\pm$7.9  \\
{\sc Imdb-multi}-0.2 (Acc.) & 0&--                   & 34.5 $\pm$ 4.7  & 34.3 $\pm$ 2.4           &  20.2 $\pm$ 4.3       & 19.6 $\pm$ 4.3  & 20.4 $\pm$ 4.3  & 20.6 $\pm$ 5.0  \\
{\sc Imdb-multi}-0.2 (Acc.) & 100&1                   & 39.7$\pm$5.3  & 40.7$\pm$5.9            & 44.8$\pm$4.4        & 50.5$\pm$3.5  & 34.3$\pm$4.2  & 40.9$\pm$3.4  \\
{\sc Imdb-multi}-0.2 (Acc.) & 1 &--                   & 34.3 $\pm$ 1.6  & 38.8 $\pm$ 4.9           &  47.2 $\pm$ 3.5       & 47.8 $\pm$ 3.3  & 43.2 $\pm$ 3.3  & 45.8 $\pm$ 5.5  \\
{\sc Imdb-multi}-0.2 (Acc.) & 100&10                  & 48.3$\pm$2.2  & 49.7$\pm$3.0            & 48.9$\pm$3.3        & 50.5$\pm$3.5  & 46.7$\pm$2.9  & 47.9$\pm$4.5  \\
{\sc Imdb-multi}-0.2 (Acc.) & 10 &--                   & 46.1 $\pm$ 2.1  &  48.0 $\pm$ 3.1           &  50.1 $\pm$ 3.1      & 49.9 $\pm$ 1.9  & 51.5 $\pm$ 2.5  & 50.1 $\pm$ 2.5  \\\hline
\end{tabular}}
\caption{{We list here the scores for our transfer learning experiments on {\sc Mutag} and {\sc Imdb-multi}. We pretrain on the original {\sc Mutag} and {\sc Imdb-multi} by $10$-fold cross validation. Then, we fine-tune the pretrained models on corrupted versions of these datasets also by $10$-fold cross validation. These two corrupted datasets are obtained by filtering by a Bernoulli random variable of $p = 0.2$ on each edge. This is very likely to introduce different cycle patterns in the data. Fine-tuning is performed for $0$, $1$ and $10$ epochs, while `--' denotes no finetuning.}}
\end{table}
%\begin{figure}[h]
%\begin{center}
%\columnwidth
%\centerline{\includegraphics[width=0.8\columnwidth]{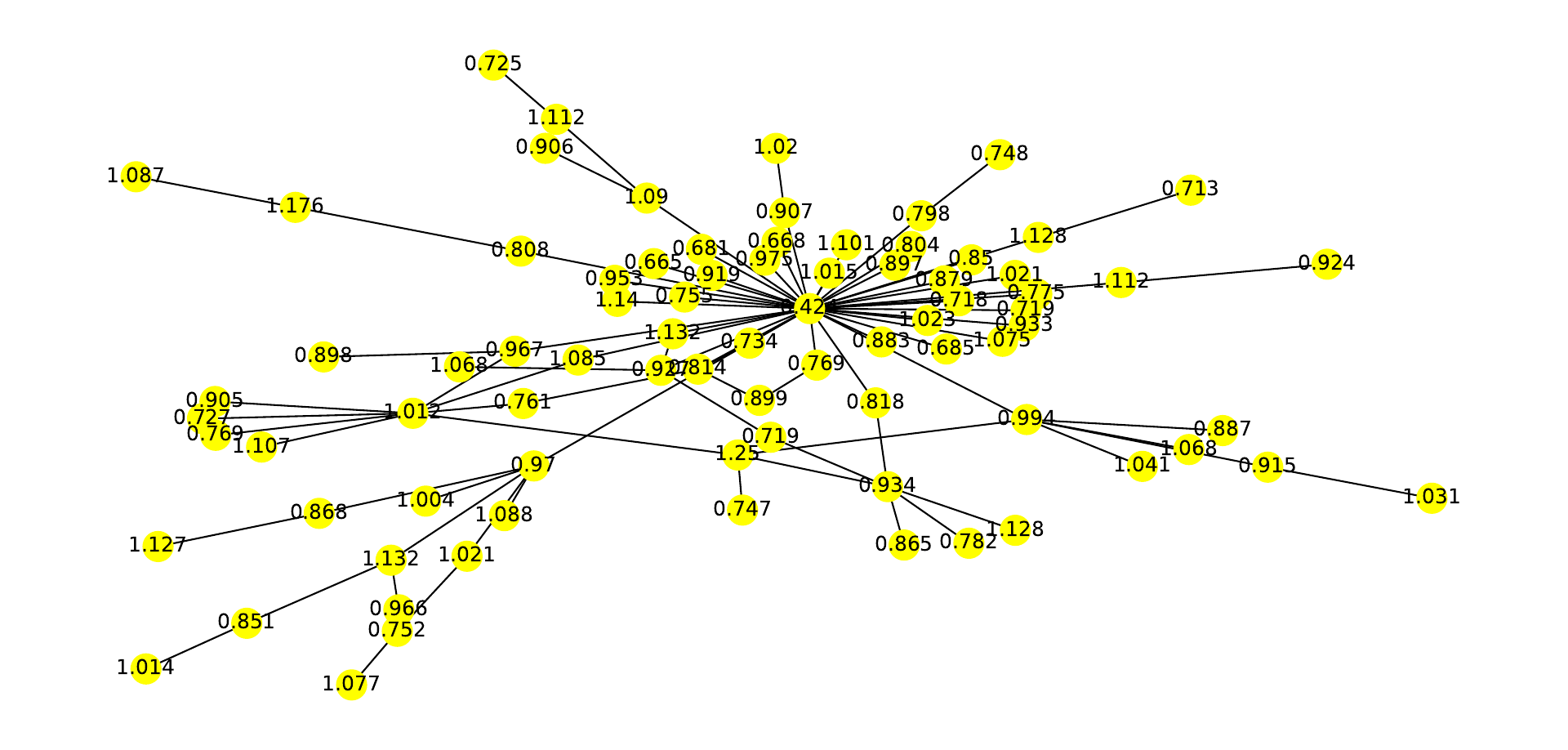}}
%\caption{REDDIT-BINARY learned filtration function}
%\label{fig: reddit-binary}
%\end{center}
%\end{figure}

% \begin{figure}[ht]
%     \begin{minipage}[ht]{0.33\linewidth}
%     \centering
%     \includegraphics[width=\textwidth]{figures/IMDB-MULTI-filt-round4.pdf}
%     \caption{IMDB-MULTI}
%     \label{fig: imdb-multi}
%     \end{minipage}
%     \hspace{0.5cm}
%     \begin{minipage}[ht]{0.33\linewidth}
%     \centering
%     \includegraphics[width=\textwidth]{figures/{\sc Mutag}-filt-round4.pdf}
%     \caption{MUTAG}
%     \label{fig: mutag}
%     \end{minipage}
%     \begin{minipage}[ht]{0.33\linewidth}
%     \centering
%     \includegraphics[width=\textwidth]{figures/REDDIT-BINARY-filt-round3.pdf}
%     \caption{REDDIT-BINARY}
%     \label{fig: reddit-binary}
%     \end{minipage}
%     \end{figure}

%\section{You \emph{can} have an appendix here.}

%You can have as much text here as you want. The main body must be at most $8$ pages long.
%For the final version, one more page can be added.
%If you want, you can use an appendix like this one, even using the one-column format.

\end{document}